\newcommand{\ox}{x}
\newcommand{\tx}{{x^\star}}
\newcommand{\para}[1]{\textbf{#1}~~}
\newcommand{\slin}[1]{\theta_\star^\top{#1}}
\newcommand{\tlin}[1]{\hat\theta_t^\top{#1}}
\newcommand{\lin}[1]{\theta^\top{#1}}
\newcommand{\tlina}[1]{\hat\theta_{t,-a}^\top{#1}}
\newcommand{\algo}{\textsc{COBRA}}
\newcommand{\neql}{\textsc{NE}}
\newcommand{\ucb}{\textsc{UCB}}
\newcommand{\lcb}{\textsc{LCB}}
\newcommand{\Regret}{\kR}
\newcommand{\EE}[1]{\bE\left[#1\right]}
\newcommand{\Prob}[1]{\bP\left\{#1\right\}}
\newcommand{\R}{\bR}
\newcommand{\ind}[1]{\mathbbmss{1}{\Lp #1 \Rp} }
\renewcommand{\phi}{\varphi}
\renewcommand{\epsilon}{\varepsilon}
\newcommand{\norm}[1]{\left\|#1\right\|}
\newcommand{\al}[1]{ \begin{align} #1  \end{align}}
\newcommand{\eq}[1]{ \begin{equation} #1  \end{equation}}
\newcommand{\als}[1]{ \begin{align*} #1  \end{align*}}
\newcommand{\eqs}[1]{ \begin{equation*} #1  \end{equation*}}
\newcommand{\Lp}{\left(}
\newcommand{\Rp}{\right)}
\newcommand{\Lb}{\left[}
\newcommand{\Rb}{\right]}
\newcommand{\el}{\end{flushleft}}
\newcommand{\bl}{\begin{flushleft}}
\newcommand{\argmin}{\arg\!\min}
\newcommand{\argmax}{\arg\!\max}
\newcommand{\bE}{\mathbb{E}}
\newcommand{\bP}{\mathbb{P}}
\newcommand{\bR}{\mathbb{R}}
\newcommand{\cA}{\mathcal{A}}
\newcommand{\cC}{\mathcal{C}}
\newcommand{\cN}{\mathcal{N}}
\newcommand{\cO}{\mathcal{O}}
\newcommand{\cX}{\mathcal{X}}
\newcommand{\kA}{\mathfrak{A}}
\newcommand{\kR}{\mathfrak{R}}
\theoremstyle{plain}
\newtheorem{thm}{Theorem}
\newtheorem{lem}{Lemma}
\newtheorem{rem}{Remark}
\newtheorem{defi}{Definition}
\newtheorem{assu}{Assumption}
\author{
    Arun Verma$^{1}$, ~Indrajit Saha$^{2}$, ~Makoto Yokoo$^{2}$, ~Bryan Kian Hsiang Low$^{1,3}$\\
    $^{1}$Singapore-MIT Alliance for Research and Technology, Republic of Singapore \\
	$^{2}$Faculty of ISEE, Kyushu University, Japan \\
    $^{3}$Department of Computer Science, National University of Singapore, Republic of Singapore \\
    \texttt{arun.verma@smart.mit.edu}, ~~\texttt{indrajit@inf.kyushu-u.ac.jp}, \\
    \texttt{yokoo@inf.kyushu-u.ac.jp}, ~~\texttt{lowkh@comp.nus.edu.sg}
}
\title{
    \algo: Contextual Bandit Algorithm for Ensuring Truthful Strategic Agents
} 
\begin{document}    
    \maketitle
    \begin{abstract}
        This paper considers a contextual bandit problem involving multiple agents, where a learner sequentially observes the contexts and the agent's reported arms, and then selects the arm that maximizes the system's overall reward. Existing work in contextual bandits assumes that agents truthfully report their arms, which is unrealistic in many real-life applications. For instance, consider an online platform with multiple sellers; some sellers may misrepresent product quality to gain an advantage, such as having the platform preferentially recommend their products to online users. To address this challenge, we propose an algorithm, \algo{}, for contextual bandit problems involving strategic agents that disincentivize their strategic behavior without using any monetary incentives, while having incentive compatibility and a sub-linear regret guarantee. Our experimental results also validate the different performance aspects of our proposed algorithm.
    \end{abstract}

    \section{Introduction}
    \label{sec:introduction}
    %!TEX root =  main.tex

Contextual bandit \citep{NOW19_slivkins2019introduction,Book_lattimore2020bandit} is a sequential decision-making framework in which a learner selects an arm for a given context to maximize its total reward. 
\begin{wrapfigure}[18]{r}{0.5\textwidth}
    \vspace{-3.5mm}
    \centering
    \includegraphics[width=\linewidth]{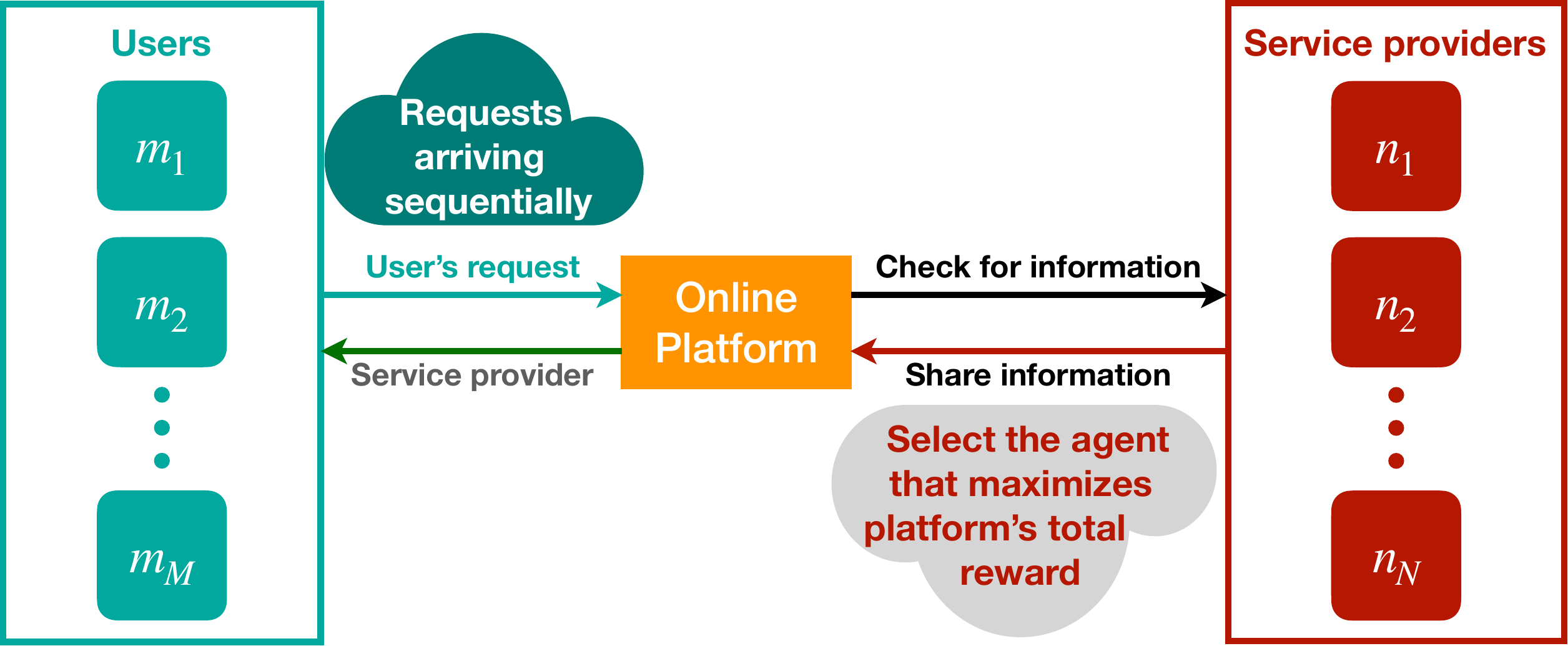}
    \vspace{-5mm}
    \caption{
        Example of a contextual bandit problem with strategic agents: Consider an online platform recommending service providers (agents) to users (context) who arrive sequentially. Since service providers can misreport their private information to receive more recommendations, the platform must implement a mechanism incentivizing truthful reporting. With accurate private information, the platform can recommend the best service provider, thereby improving the overall user experience.
        }
    \label{fig:csa}
    \vspace{-11mm}
\end{wrapfigure}
Unlike traditional multi-armed bandits \citep{ML02_auer2002finite,COLT11_garivier2011kl,COLT12_agrawal2012analysis}, contextual bandit algorithms use additional information, such as user profile, location, and purchase history, to make more informed and personalized decisions~\citep{WWW10_li2010contextual}. 
Contextual bandits have many real-life applications in personalized decision-making, such as online recommendation systems~\citep{NOW19_slivkins2019introduction}, online advertising \citep{Book_lattimore2020bandit}, and clinical trials \citep{chow2006adaptive, aziz2021multi}, where the best recommendation depends on the context.

Many real-life applications of contextual bandit involve multiple strategic agents, from which the learner must select one to recommend based on the given context.
As illustrated in \cref{fig:csa}, consider an online platform with multiple service providers (agents), where the platform must recommend one provider to a user (context). 
In such settings, service providers can strategically misreport their information to influence the platform's decisions and increase their utilities by increasing their chances of being recommended \citep{resnick2007influence, zhang2019understanding}.
For example, an online food delivery platform wants to maximize the overall user experience by selecting and presenting the best restaurant options when a user searches for a specific type of food. 
Since users tend to order from restaurants listed at the top of search results \citep{malaga2008worst},
restaurants is incentivized to misrepresent their menu offerings to appear more prominently for specific food categories. 
This misreporting creates a challenge: If users consistently encounter misleading restaurant listings that do not match their preferences, their experience with the platform will worsen; in the worst case, they may switch to competing platforms. 
Similar examples also include personalized pricing, where agents manipulate features to influence service prices \citep{liu2024contextual}; algorithmic trading, where arms correspond to trading strategies, and rewards depend on evolving market conditions influenced by external factors such as Twitter feeds, secondary market behavior, and local trends, which can be misreported \citep{zeng2024partially}; and firms allocating budget-constrained computing resources to self-interested research teams that might misreport their demand to secure larger allocations, especially around conference deadlines \citep{zeng2024learning}.

These real-life applications highlight the importance of designing contextual bandit algorithms that discourage strategic misreporting by agents.
However, most existing contextual bandit algorithms overlook the strategic behavior of agents, which can result in suboptimal agent selection. 
We bridge this gap by designing a contextual bandit algorithm that accounts for potential misreporting and ensures that reporting arm features truthfully is the best strategy (\textit{incentive compatible}) for agents.
Specifically, this paper answers the following question: \textbf{\em How to design an efficient incentive-compatible contextual bandit algorithm for settings where strategic agents may misreport their true features?}

The closest works to ours are \cite{buening2024strategic} and \cite{hu2025truthful}, which address strategic behavior in contextual bandits by either using past allocation history to design agent-specific estimators that detect misreporting with high probability \citep{buening2024strategic}, or using a linear program–based mechanism \citep{hu2025truthful}.
However, these existing methods become impractical when the reward function is unknown and lacks an external baseline for comparison \citep{buening2024strategic} or when agent are non-myopic \citep{hu2025truthful}.
To overcome this, we first propose a method, \textit{Leave-One-Out-based Mechanism (LOOM) for detecting misreporting agents}, which draws inspiration from the Vickrey-Clarke-Groves (VCG) mechanism \citep{vickrey1961counterspeculation, clarke1971multipart, groves1973incentives} and uses the reported arm features of other agents to detect misreporting agent.
We then propose a contextual bandit algorithm, \algo{}, for problems involving strategic agents that uses LOOM to disincentivize strategic misreporting without relying on monetary incentives, while \textit{ensuring incentive compatibility and achieving a sub-linear regret guarantee} (defined in \cref{sec:problem}).
Specifically, our contributions can be summarized as follows: 

\vspace{-2mm}
\begin{itemize}
	\setlength\itemsep{-0.07em}
    \item In \cref{sec:problem}, we introduce the contextual bandit problem involving multiple strategic agents who can strategically misreport their features in order to maximize their utility. We then propose \algo{} in \cref{sec:cobra} that uses LOOM to disincentivizes their strategic misreporting while having incentive compatibility and a sub-linear regret guarantee. 
	
	\item In \cref{sec:analysis}, we prove that \algo{} achieves $\tilde{O}(d\sqrt{T})$-\neql{} (i.e., truthfulness leads to an approximate Nash equilibrium) and regret $\tilde{O}(d\sqrt{T})$ when agents report truthfully (see~\cref{thm:regretNE}). We also show that \algo{} has regret at most $\tilde{O}(d\sqrt{T}+\sqrt{NT})$  under every Nash equilibrium (see~\cref{thm:regretAllNE}),  where $N$ is the number of agents, $d$ is the dimension of the context vector, and $T$ is the time horizon of the contextual bandit problem.
	
	\item In \cref{sec:experiments}, our experimental results also corroborate our theoretical results and validate the different performance aspects of our proposed algorithms. 
\end{itemize}

        \subsection{Related Work}
        \label{sec:related_work}
        %!TEX root =  main.tex

In this section, we focus on the most relevant work on contextual bandits, strategic learning, and strategic multi-armed bandits.

\para{Contextual bandits.}
Contextual bandits \citep{NOW19_slivkins2019introduction, Book_lattimore2020bandit} have many real-life applications, such as online recommendations, advertising, web search, and e-commerce. In this framework, a learner selects an arm and receives a reward for that choice. Given the potentially large or infinite set of arms, the mean reward for each arm is typically modeled as an unknown function, which may be linear \citep{WWW10_li2010contextual, AISTATS11_chu2011contextual, NIPS11_abbasi2011improved, ICML13_agrawal2013thompson}, generalized linear model (GLM) \citep{NIPS10_filippi2010parametric, ICML17_li2017provably, NIPS17_jun2017scalable, ICLR25_verma2025neural}, or non-linear \citep{UAI13_valko2013finite, ICML17_chowdhury2017kernelized, ICML20_zhou2020neural, ICLR21_zhang2020neural}. 
The learner’s objective is to identify the optimal action as efficiently as possible, which depends on how tightly the confidence bounds for the reward-function mapping actions to rewards are defined. Several works have explored various sources of information and side observations to enhance the learning process \citep{WWW10_li2010contextual, ICML13_agrawal2013thompson, COLT15_alon2015online, NIPS15_wu2015online, ICML17_li2017provably, NeurIPS21_verma2021stochastic, NeurIPS23_verma2024exploiting}.

\para{Strategic learning.}
There are several works on strategic learning \citep{liu2016bandit, freeman2020no, gast2020linear, zhang2021incentive, harris2022strategic, harris2023strategic} and strategic classification \citep{hardt2016strategic, dong2018strategic, sundaram2023pac}. The strategic classification problem was first introduced in \cite{hardt2016strategic}.
The authors considered a sequential game between a decision-maker selecting a classifier, and a strategic agent who responds by modifying their features.
Our work aligns with this research direction, as it explores the interaction between a strategic agent and a learning algorithm. However, unlike prior studies where agents interact with the learner only once to achieve a desired outcome, our setting involves repeated interactions, forming a repeated game without monetary transactions.
Our main contribution is the development of an incentive-compatible mechanism designed to handle repeated interactions with strategic agents, specifically tailored for contextual bandit problems involving strategic agents.

\para{Strategic multi-armed bandits.}
To the best of our knowledge,  \cite{braverman2019multi}  first study a strategic variant of the multi-armed bandit problem, considering a scenario in which the selected arm shares a fraction of its reward with the learner. Within this setting, they designed an incentive-compatible mechanism.  More recently, \cite{yahmedstrategic} further built upon \cite{braverman2019multi}, proposing an algorithm that rewards arms based on their reported values. Their algorithm also enjoys desirable properties such as incentive compatibility and sub-linear regret. Additionally, \cite{yin2022online} study an online allocation problem that maximizes social welfare under fairness constraints in a strategic setting. They assume that valuations are unknown to the algorithm but follow an IID distribution. Their results show that when agents truthfully reveal their information, the mechanism maximizes social welfare while also achieving a sub-linear regret guarantee compared to the offline optimal policy. Our mechanism design follows a similar spirit but is applied to a different problem setting.
Moreover, \cite{feng2020intrinsic} and \cite{dong2022combinatorial} explore the robustness of bandit learning against strategic manipulation, assuming a bounded manipulation budget. \cite{esmaeili2023replication, shin2022multi} investigate multi-armed bandits with replicas, where strategic agents can submit multiple copies of the same arm.  \cite{kleine2023bandits} integrate multi-armed bandits with mechanism design for online recommendations.

\para{Strategic contextual bandits.}
Our work is closely related to \cite{buening2024strategic}, which considers the strategic agents in a linear contextual bandit framework. Their method uses past allocation history to design agent-specific estimators that detect misreports with high probability. In contrast, our method is inspired by the VCG mechanism \citep{vickrey1961counterspeculation, clarke1971multipart, groves1973incentives}, utilizing the reported contexts of other agents to identify misreports. Additionally, recent work by \cite{hu2025truthful} introduce a Bayesian contextual linear bandit framework in a similar spirit, with non-repeated agent interactions, employing a linear programming-based approach to design an incentive-compatible mechanism. However, our setting differs significantly.

    \section{Contextual Bandits with Strategic Agents}
    \label{sec:problem}
    %!TEX root =  main.tex

\para{Contextual bandits.}
This paper studies a contextual bandit problem with strategic agents (arms\footnote{
For simplicity, we assume the agent only reports one arm in each round so that we can use `agent' and `arm' interchangeably in the paper. Note that our results are more general and allows agents to reports multiple arms in each rounds, e.g., sellers may sell multiple variants same product on an online platform.})
who aim to maximize their number of pulls by strategically misreport their arm's feature to the learner while the learner's goal is to selects the agent for given context that maximizes the total reward. 
Our problem setting differs from standard contextual bandits as an arm features can be strategically manipulated by the agents to maximize their reward.
Let $\cC$ be the set of all contexts and $\cA$ be the set of all arms of all agents. 
Let $\cN$ be the set of all agents and $N = |\cN|$ denote the number of agents.
For brevity, we use $\cX \subset \R^d$ to denote the set of all context-arm feature vectors, and $x_{t, a} = \phi(c_t, a) \in \cX$ to represent the feature vector associated for context $c_t$ and arm $a \in \cA$, where $\phi: \cC \times \cA  \rightarrow \cX$ is a feature map.
At the start of round $t$, the environment generates a context $c_t \in \cC$ and each agent $n \in \cN_t \subseteq \cN$ reports arm features, denoted by $a_t^{(n)} \in \cA_t \subset \cA$, where $\cN_t$ is the set of active agents in round $t$ and $\cA_t = \{a_t^{(n)}\}_{n \in \cN_t}$.
The learner then selects an arm $a_t \in \cA_t$ to recommend and observes a stochastic reward, denoted by $y_{t} \doteq f(x_{t,a_t}) + \epsilon_t$, where $y_{t} \in \R$, $f: \cX  \rightarrow \R$ is an unknown reward function, and $\epsilon_t$ is a zero-mean $R$-sub-Gaussian noise, i.e., $\forall \lambda \in \R,$ $\EE{e^{\lambda\epsilon_t} | \left\{x_{s,a_s}, \epsilon_s\right\}_{s=1}^{t-1}, x_{t,a_t}}  \le \exp \Lp {\lambda^2R^2}/{2} \Rp$.
At the end of each round, the learner shares the observed reward\footnote{The learner may share the observed reward partially with the agent. For example, online platforms, such as e-commerce, retain a specific percentage of the service providers' total sales revenue.} with the agent corresponding to the selected arm.

\para{Strategic manipulations by agents.}
A strategic agent can misreport the features of their arm by manipulating them such that the agent is selected more often, thereby maximizing their reward. 
Let $x_{t,a}^\star$ be the true feature vector and $\ox_{t,a}$ be the observed feature vector for context $c_t$ and arm $a$. 
Although the observed feature vector can be strategically manipulated by an agent, we assume the observed reward only depends on the true feature vector.\footnote{Sellers can misrepresent product features on the e-commerce platform such that it becomes a top recommendation. However, it cannot change the actual physical quality and nature of the product.}
To maximize the total reward, our aim is to design a contextual bandit algorithm incorporating an incentive-compatible mechanism that ensures truthful reporting (i.e., $\ox_{t, a} = x_{t, a}^\star, ~\forall t\ge 1, a \in \cA$) is the best strategy for all agents.

\para{Incentive-Compatible algorithm.}
Let $\sigma_n$ denote the strategy of agent $n \in \cN$, which is history-dependent and maps the true features of their arms to a reported features. 
We use $\bm{\sigma}_{-n}$ to denote the strategies of all agents other than agent $n$, and $\bm{\sigma} = (\sigma_1, \sigma_2, \ldots, \sigma_{N})$ to represent the full strategy profile of all agents.
We first define what it means for an agent to be truthful.
\begin{defi}[Truthful]
    \label{def:truthfulstrategy}
    An agent $n \in \cN$ is said to be truthful if agent reports the true features of their arms to the learner in each round, i.e., $\ox_{t,a} = x_{t,a}^\star$ for all $t \ge 1$ and arm $a$ belongs to agent $n$.
\end{defi}
We use $\sigma^{*}_n$ to denote the truthful strategy for the agent $n$ and $\bm{\sigma}^{*}= (\sigma^{*}_1, \sigma^{*}_2, \ldots ,\sigma^{*}_{N})$ to represent the vector of the truthful strategy for all agents.
We next define the utility of an agent $n$ in our setting. 
Let $S_T(n) \doteq  \sum_{t=1}^T \ind{\text{arm $a_t$ belongs to agent $n$}}$ denote the number of times agent $n$ is selected by the learner up to round $T$. Each agent's objective is to maximize the expected number of $S_T(n)$. Therefore, the utility of agent $n$ is given by 
$
    u_a(\bm{\sigma}) \doteq \EE {S_T(a) \mid \bm{\sigma}},
$
where we conditioned on all agents strategies $\bm{\sigma}$. 
In the following, we define the notion of $\epsilon$-Nash equilibrium (NE), in which no agent has more than $\epsilon$ incentive to deviate from the truthful reporting strategy.

\begin{defi}[$\epsilon$-Nash Equilibrium]
    Let $\epsilon >0$ and $T > 0$. We say that $\bm{\sigma} =(\sigma_1, \sigma_2, \ldots, \sigma_N)$ forms a $\epsilon$-Nash equilibrium if
    any deviating strategy $\sigma^{\prime}_a (\neq \sigma_a)$ for any agent $a \in \cA$, the following holds:
    $$
        \mathbb{E} \big[S_T(a) \mid \sigma_a, \bm{\sigma}_{-a}\big] \ge   \mathbb{E} \big[S_T(a) \mid \sigma^{\prime}_a, \bm{\sigma}_{-a}\big] -\epsilon.
    $$
\end{defi}

We next define incentive compatibility for a contextual bandit algorithm in terms of Nash equilibrium.\hspace{-1mm}
\begin{defi}[Incentive Compatible]
    A contextual bandit algorithm is incentive compatible, if truthfulness is a Nash equilibrium, i.e., reporting the true arm features maximizes each agent's utility.
\end{defi}

\para{Performance measure.}
Let $a_t^\star$ denote the optimal arm (agent) for context $c_t$ having the maximum expected reward, i.e., $a_t^\star = \argmax_{a \in \cA_t} f(x_{t,a}) $. 
After selecting arm $a_t$, the learner incurs a penalty $r_t$, where 
$
    r_t = f(x_{t,a_t^\star}^\star) - f(x_{t,a_t}^\star).
$
Our aim is to learn a sequential policy that selects an arm for a given context such that the learner's total penalty for not selecting the optimal arm (or \emph{cumulative regret}) is as minimal as possible.
However, the performance of the contextual bandit algorithm depends on the incentive-compatible mechanism for the strategic agents whose strategy profile is represented by $\bm{\sigma} = (\sigma_1, \ldots, \sigma_{N})$.  
We use strategic regret as a performance measure of a sequential policy $\pi$ for which the agents act according to a Nash equilibrium under policy $\pi$.
Specifically, for $T$ rounds and $\bm{\sigma} \in \neql(\pi)$, the strategic regret of a policy $\pi$ that selects arm $a_t$ in the round $t$ is
\eq{
	\label{eq:regret}
	\Regret_T (\pi, \bm{\sigma}) \doteq \sum_{t=1}^{T} \Lp f(x_{t,a^{\star}_t}^\star) - f(x_{t,a_t}^\star)\Rp. 
}
A policy $\pi$ is a good policy if it has sub-linear regret, i.e., $\lim_{T \rightarrow \infty}{\Regret_T(\pi, \bm{\sigma})}/T = 0$. This implies that, as $T$ increases, 
the policy $\pi$ will eventually start selecting optimal arms for the given contexts.

    \section{Leave-One-Out-based Mechanism (LOOM)}
    \label{sec:loom}
    %!TEX root =  main.tex

In our contextual bandit setting, designing an incentive-compatible mechanism that ensures truthful reporting of arm features by agents is challenging due to limited access to true contexts, the potential for strategic misreporting, noisy reward feedback, and unknown reward function parameters.
These challenges naturally raise the question: \textit{How can we design a mechanism that effectively incentivizes strategic agents to report truthfully?}
To overcome this, we propose a method, \textit{Leave-One-Out-based Mechanism (LOOM) for detecting misreporting agents}, which is inspired by the Vickrey-Clarke-Groves (VCG) framework \citep{vickrey1961counterspeculation, clarke1971multipart, groves1973incentives} and uses the reported arm features of other agents to detect misreporting agent.
To detect whether an agent $a$ is misreporting (i.e., reporting arm features to increase its expected reward, such that $f(x_{t,a}) \ge  f(x_{t,a}^\star)$), LOOM uses three key components:
(1) a pessimistic estimate of the agent's total expected reward, derived from past data of all other agents, (2) an optimistic estimate of the agent total reward, based on the observed rewards when agent $a$ is selected, (3) a statistical test that uses these estimates to detect if agent $a$ is misreporting with high probability.

\para{Pessimistic estimate of the agent's total expected reward.}
Since the true reward function is unknown, LOOM estimates it using observations (context-arm features and corresponding rewards) from all agents except agent $a$, ensuring the estimated reward function is not influenced by agent $a$. Henceforth, we use \textit{agent} and \textit{arm} interchangeably. 
Let $f_{t,-a}$ denote this estimated reward function at the end of round $t$.
Even if other agents report truthfully, noisy reward feedback may lead to an inaccurate estimator.
To address this, LOOM constructs a confidence ellipsoid around $f_{t,-a}$ and uses its lower bound to compute a pessimistic estimate of agent $a$'s total expected reward.
Let $\cO_{t,-a}$ denote the observations from all agents except agent $a$ at the end of round $t$.
For any $x \in \cX$, if the confidence ellipsoid $|f_{t,-a}(x) - f(x)| \le h(x, \cO_{t,-a})$ holds with probability $1-\delta_{t,a}$ (see \cref{asec:non_linear} for more details), then $\text{LCB}_{t,-a}(x)= f_{t,-a}(x) - h(x, \cO_{t,-a})$ is the pessimistic estimates of the expected reward for $x$ that also holds with probability $1-\delta_{t,a}$.  
Now, we use 
$$
    \text{LCB}_{t,a}^{(x)} = \sum_{s=1,a_s = a}^{t} \text{LCB}_{t,-a}(x_{s,a_s})
$$ 
to denote the pessimistic estimates of the agent $a$'s total expected reward. 
We assume that $\text{LCB}_{t,a}^{(x)}$ holds with probability at least $1-\delta_{t,a}^{x}$ (more details are provided in \cref{asec:non_linear}).

\para{Optimistic estimate of the agent total reward.}
Since the observed reward depends only on the true feature vector, the learner receives a noisy reward, where the noise is sub-Gaussian.
Our following result provides an optimistic estimate of the agent $a$'s total expected reward.
\begin{restatable}{lem}{rewOptEst}
    \label{lem:rewOptEst}
    Let $S_t(a)$ be the number of times that agent $a$ is selected until round $t$, and $\epsilon_s$ be $R$-sub-Gaussian in the observed reward $y_s$, where $1 \le s \le t$. Then, with probability at least $1-\delta_{t,a}^y$
    \eqs{
        \sum_{s\le t, a_s = a} f({x_{s, a_s}^\star}) \le \sum_{s\le t,a_s = a} y_s + \sqrt{2R^2 S_t(a)\log(1/\delta_{t,a}^y)}.
    }
\end{restatable}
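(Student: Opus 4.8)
The plan is to reduce the statement to a one-sided sub-Gaussian tail bound for the accumulated observation noise on the rounds where agent $a$ is selected. By the reward model, $y_s = f(x_{s,a_s}^\star) + \epsilon_s$, so
\[
    \sum_{s\le t,\, a_s=a} f(x_{s,a_s}^\star) \;-\; \sum_{s\le t,\, a_s=a} y_s \;=\; -\sum_{s\le t,\, a_s=a}\epsilon_s ,
\]
and hence the claimed inequality is equivalent to $-\sum_{s\le t,\,a_s=a}\epsilon_s \le \sqrt{2R^2 S_t(a)\log(1/\delta_{t,a}^y)}$ holding with probability at least $1-\delta_{t,a}^y$.

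First I would fix the filtration $\cF_s = \sigma(\{x_{r,a_r},\epsilon_r\}_{r<s},\, x_{s,a_s})$, under which the selection indicator $\ind{a_s=a}$ is $\cF_s$-measurable (the learner commits to $a_s$ before observing $y_s$) and, by the sub-Gaussian assumption, $\EE{e^{\lambda\epsilon_s}\mid\cF_s}\le \exp(\lambda^2 R^2/2)$ for all $\lambda\in\R$. Therefore $Z_s \doteq \epsilon_s\ind{a_s=a}$ is a martingale-difference sequence with $\EE{e^{\lambda Z_s}\mid\cF_s}\le\exp(\lambda^2 R^2\ind{a_s=a}/2)$, so the partial sums $\sum_{s\le t}Z_s$ form a sub-Gaussian martingale whose accumulated variance proxy is exactly $R^2\sum_{s\le t}\ind{a_s=a}=R^2 S_t(a)$. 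Equivalently, writing $\tau_1<\tau_2<\cdots$ for the rounds where $a$ is pulled and $\epsilon_{\tau_j}$ for the reindexed noises (an $R$-sub-Gaussian martingale-difference sequence), we have $\sum_{s\le t,\,a_s=a}\epsilon_s = \sum_{j\le S_t(a)}\epsilon_{\tau_j}$.

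Then I would apply a Chernoff/Azuma–Hoeffding argument: for a fixed count $m$, the process $\exp(-\lambda\sum_{j\le m}\epsilon_{\tau_j}-\lambda^2 R^2 m/2)$ is a supermartingale, which gives $\Prob{-\sum_{j\le m}\epsilon_{\tau_j}\ge \sqrt{2R^2 m\log(1/\delta_{t,a}^y)}}\le\delta_{t,a}^y$ after optimizing over $\lambda>0$. To pass from fixed $m$ to the data-dependent value $m=S_t(a)$, I would either take a union bound over the at most $t$ possible values $m\in\{1,\dots,t\}$ (replacing $\delta_{t,a}^y$ by $\delta_{t,a}^y/t$ inside the logarithm, which only inflates the confidence width by a $\log t$ factor that is absorbed into the definition of $\delta_{t,a}^y$) or, more sharply, invoke a time-uniform concentration inequality (e.g.\ via the method of mixtures) so that the bound holds simultaneously for all $m$ and in particular for $m=S_t(a)$. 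Substituting and rearranging yields the stated inequality.

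The main obstacle is exactly this last point: making the concentration rigorous even though $S_t(a)$ is a random, adaptively determined count rather than a fixed horizon — i.e.\ choosing between the mildly lossy union-bound route and the sharper anytime-valid route. Everything else (the algebraic reduction via $y_s = f(x_{s,a_s}^\star)+\epsilon_s$ and the one-sided sub-Gaussian tail estimate) is routine.
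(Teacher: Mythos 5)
Your proposal is correct and follows essentially the same route as the paper's proof: reduce the claim to a one-sided tail bound on $\sum_{s\le t,\,a_s=a}\epsilon_s$ via $y_s=f(x_{s,a_s}^\star)+\epsilon_s$, then apply a Chernoff/Hoeffding bound for sub-Gaussian sums with variance proxy $R^2S_t(a)$. The one place you go beyond the paper is in flagging that $S_t(a)$ is a random, adaptively determined count: the paper applies Hoeffding directly as if $S_t(a)$ were fixed, whereas your martingale formulation with the indicator-weighted differences $\epsilon_s\ind{a_s=a}$ plus a union bound over $m\in\{1,\dots,t\}$ (or a time-uniform bound) is the more careful way to justify that step, at the cost of a logarithmic factor absorbed into $\delta_{t,a}^y$.
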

\textbf{Proof outline.} This result follows from applying Hoeffding inequality to the sum of sub-Gaussian random variables. The detailed proof with other missing proofs are provided in \cref{asec:proofs}.

\para{Statistical test for finding whether agent is misreporting.}
For simplicity, consider the case where the reward function is known.
In this case, we say that an agent is over-reporting if the total expected reward for reported arm features exceeds the total noiseless expected reward, i.e.,  $\sum_{s\le t,a_s = a}f({x_{s, a_s}^\star}) > \sum_{s\le t,a_s = a} \bar{y}_s$ for any $t \ge 1$, where $\bar{y}_s$ is the noiseless expected reward.
However, since the reward function is unknown and the observed reward is noisy in practice, we assess over-reporting using optimistic and pessimistic estimates of the expected rewards. 
We define 
$$
    \text{UCB}_{t,a}^{y} = \sum_{s=1,a_s = a} y_s + \sqrt{2S_t(a)\log(1/\delta_{t,a}^{y})}
$$ 
as the optimistic estimate of the sum of the agent $a$'s expected rewards that holds with probability at least $1-\delta_{t,a}^{y}$. 
Therefore, an agent $a$ misreporting the true arm features with probability at least  $1-\delta_{t,a}^{x}-\delta_{t,a}^{y}$ if the following condition holds:
\eq{ 
    \label{eqn:loom}
   \textbf{LOOM Condition:}~~ \text{LCB}_{t,a}^{(x)} >\text{UCB}_{t,a}^{(y)}.
}
By eliminating agents who satisfy \cref{eqn:loom} from future rounds, this LOOM condition incentivizes agents to report truthfully.
Our next result shows that when an agent $a$ always reports truthfully, i.e., $\ox_{t,a} = \tx_{t,a}$ for all $t\ge 1$, it does not get eliminated with high probability at least $1- \delta_{t,a}^{x} - \delta_{t,a}^{y}$. 

\begin{restatable}{thm}{optgtmarmsstayactive}
    \label{thm:optgtmarmsstayactive}
    Let agent $a$ reports truthfully. Then, LOOM does not eliminate agent $a$ with high probability at least $1 - \delta_{t,a}^{x} - \delta_{t,a}^{y}$.  
\end{restatable}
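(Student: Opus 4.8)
The plan is to show that when agent $a$ is truthful, its true cumulative (noiseless) reward is simultaneously an upper bound on the pessimistic estimate $\text{LCB}_{t,a}^{(x)}$ and a lower bound on the optimistic estimate $\text{UCB}_{t,a}^{(y)}$, so that the LOOM condition in \cref{eqn:loom} fails and the agent survives. Concretely, I would work on the event $\cE^x$ (of probability $\ge 1-\delta_{t,a}^x$) on which the confidence ellipsoid $\text{LCB}_{t,-a}(x)\le f(x)$ holds for all relevant $x$, intersected with the event $\cE^y$ (of probability $\ge 1-\delta_{t,a}^y$) furnished by \cref{lem:rewOptEst}; by a union bound the intersection has probability at least $1-\delta_{t,a}^x-\delta_{t,a}^y$, which is exactly the target confidence level.

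The first step is the lower sandwich. Since agent $a$ reports truthfully, $\ox_{s,a}=\tx_{s,a}$ for every round $s$ with $a_s=a$, so $x_{s,a_s}=x_{s,a_s}^\star$ on those rounds. On $\cE^x$, summing the per-point bound $\text{LCB}_{t,-a}(x_{s,a_s})\le f(x_{s,a_s})$ over $\{s\le t:\, a_s=a\}$ gives
\[
  \text{LCB}_{t,a}^{(x)}=\sum_{s\le t,\,a_s=a}\text{LCB}_{t,-a}(x_{s,a_s})\ \le\ \sum_{s\le t,\,a_s=a} f(x_{s,a_s})\ =\ \sum_{s\le t,\,a_s=a} f(x_{s,a_s}^\star).
\]
Here it is important that $\text{LCB}_{t,-a}$ is built solely from the observations $\cO_{t,-a}$ of the other agents, so the confidence guarantee is unaffected by agent $a$'s own (truthful) behavior; I would flag this explicitly. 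The second step is the upper sandwich: \cref{lem:rewOptEst} gives, on $\cE^y$,
\[
  \sum_{s\le t,\,a_s=a} f(x_{s,a_s}^\star)\ \le\ \sum_{s\le t,\,a_s=a} y_s+\sqrt{2R^2 S_t(a)\log(1/\delta_{t,a}^y)}\ =\ \text{UCB}_{t,a}^{(y)}.
\]
Chaining the two displays on $\cE^x\cap\cE^y$ yields $\text{LCB}_{t,a}^{(x)}\le\text{UCB}_{t,a}^{(y)}$, i.e., the strict inequality defining the LOOM elimination rule does not hold, so agent $a$ is not eliminated at round $t$; this happens with probability at least $1-\delta_{t,a}^x-\delta_{t,a}^y$.

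I expect the only real subtlety — rather than a genuine obstacle — to be bookkeeping: making sure the index set $\{s\le t: a_s=a\}$ used in $\text{LCB}_{t,a}^{(x)}$, in $\text{UCB}_{t,a}^{(y)}$, and in \cref{lem:rewOptEst} is the same, and that the high-probability statements are stated for the correct (possibly data-dependent, anytime) family of feature vectors queried by $\text{LCB}_{t,-a}$, which is where the earlier assumption ``$\text{LCB}_{t,a}^{(x)}$ holds with probability at least $1-\delta_{t,a}^x$'' is invoked. Everything else is a direct two-sided sandwich plus a union bound; I would defer the details of the confidence-ellipsoid construction to \cref{asec:non_linear} and keep the argument here to these three short steps.
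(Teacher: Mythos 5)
Your proposal is correct and follows essentially the same route as the paper's proof: use truthfulness to replace reported features by true ones, lower-bound $\text{LCB}_{t,a}^{(x)}$ by the true cumulative reward via the leave-one-out confidence ellipsoid, upper-bound that same quantity by $\text{UCB}_{t,a}^{(y)}$ via \cref{lem:rewOptEst}, and conclude with a union bound that the elimination condition in \cref{eqn:loom} fails with probability at least $1-\delta_{t,a}^{x}-\delta_{t,a}^{y}$. The only difference is presentational (you make the two high-probability events and their intersection explicit), not substantive.
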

\textbf{Proof outline.}
The key idea of the proof is to apply the confidence ellipsoid lemma alongside high-probability upper bounds on the noisy reward and lower bounds on the expected reward for agent $a$'s reported arm features. Additional details are provided in the supplementary material.

    \section{Incentive-Compatible Contextual Bandit Algorithm: \algo}
    \label{sec:cobra}
    %!TEX root =  main.tex

In this section, we present our contextual bandit algorithm, \algo{}, which is specifically designed to ensure strategic agents report truthfully. 
To bring out our key ideas and results, we restrict our setting to linear reward functions and later extend our results to non-linear reward functions in \cref{asec:non_linear}.

\para{Linear contextual bandits.}
We first consider the setting where the underlying reward function is linear, i.e., $f(x) = \slin{x}$ in which $\theta_\star \in \R^d$ is the unknown parameter. 
At the beginning of round $t$, the learner observes the randomly generated context $c_t \in \cC$ and the set of reported arm features $\cA_t$. 
After selecting the arm $a_t$, the learner observes stochastic reward $y_{t} = \slin{x_{t,a_t}} + \epsilon_t$, where $x_{t,a_t} = \phi(c_t,a_t)$ and $\epsilon_t$ is $R$-sub-Gaussian.
We estimate the unknown parameter $\theta_\star$ using the available observations of context-arm features and corresponding rewards at the beginning of round $t$, denoted by $\cO_t = \{(x_{s, a_s}, y_s)\}_{s=1}^{t-1}$, by solving the following optimization problem:
\eq{
    \label{eqn:lin_opt}
    \hat\theta_{t} = \argmin_{\theta \in \R^d} \left(\lambda \lVert \theta \rVert_2^2 + \sum_{s=1}^{t-1}  \big(\lin{x_{s,a_s}} - y_s \big)^2\right),    
}
where $\lambda > 0$ is a regularization parameter. The closed-form solution to \cref{eqn:lin_opt} is given by:
\eq{
    \label{eqn:opttheta}
    \hat\theta_{t} = V_{t}^{-1} \sum_{s=1}^{t-1} x_{s,a_s} y_s, \text{ with } V_{t} = \lambda I + \sum_{s=1}^{t-1} x_{s,a_s}x_{s,a_s}^\top,
}
where $I_d$ is the $d \times d$ identity matrix and $\lambda>0$ ensures the covariance matrix $V_t$ is positive definite.

\para{Confidence ellipsoid of $\theta_\star$.} We adopt the standard assumptions commonly made in the contextual bandit setting, i.e., the context space is bounded. Let $\lVert \theta_\star \rVert_2 \leq S$, $\lVert x_{t,a_t} \rVert_2 \leq L$ for all $t \ge 1$, and  $\alpha_t = R\sqrt{ d\log \Lp \frac{1+ tL^2/\lambda}{\delta}\Rp} + \lambda^{\frac{1}{2}}S$, where $\delta > 0$, 
Then, with probability at least $1-\delta$, for all $t\ge 0$, the reward parameter $\theta_\star$ lies in an ellipsoid with center at $\hat\theta_t$ (confidence set)~\citep[Theorem 2]{NIPS11_abbasi2011improved}:
\eq{
\label{eqn:confidenceset}
    C_t = \left\{ \theta \in \R^d \colon \lVert \hat \theta_t - \theta \rVert_{V_{t}} \leq \alpha_t \right\} ,
}
where $\norm{x}_{V_t}$ denotes the weighted $l_2$-norm of vector $x$ with respect to matrix $V_t$.

\para{Optimistic reward estimate.}
In the round $t$, the optimistic reward estimate of any context-arm feature vector $x$ is computed as follows:
\eq{
    \label{eq:ucb}
    \text{UCB}_t(x) = \tlin{x} + \alpha_t \norm{x}_{V_t^{-1}},
}
where $\text{UCB}_t(x)$ is the upper confidence bound (UCB), $\tlin{x}$ denotes the estimated reward for the context $x$ and $\alpha_t \norm{x}_{V_t^{-1}}$ is the confidence bonus in which $\alpha_t$ is a slowly increasing function in $t$ whose value is given above and the value of $\norm{x}_{V_t^{-1}}$ goes to zero as $t$ increases.

\para{UCB-based algorithm.} 
The upper confidence bound \citep{WWW10_li2010contextual, AISTATS11_chu2011contextual, ICML20_zhou2020neural} is a widely used technique for addressing the exploration-exploitation trade-off in contextual bandit problems. 
Our UCB-based algorithm, \algo{} (UCB), for linear contextual bandit problems works as follows. 
At the start of round $t$, the learner observes the context and reported arm features $x_{t,a}$, and then selects an arm $a_t = \argmax_{a \in \cA_t}\text{UCB}_{t}(x_{t,a})$ (Line \ref{step:selection}). 
Importantly, the algorithm does not have access to the true arm features or the true reward function parameter $\theta_{\star}$. 
As a result, misreporting arm features by agents can lead the algorithm to make suboptimal agent selections.
To address this, we incorporate LOOM (Line \ref{step:loom}, more details on how we adapt LOOM to linear contextual bandits are provided on the next page) to identify the agent who misreport their arm features.
By eliminating agents who satisfy the LOOM condition defined in \cref{eqn:loom} from future rounds ensures agents report truthfully.
\vspace{-1mm}
\begin{algorithm}[!ht]
	\renewcommand{\thealgorithm}{{\bf COBRA}}
	\floatname{algorithm}{}
	\caption{Algorithm for \textbf{CO}ntextual \textbf{B}andits with St\textbf{RA}tegic Agents}
	\label{alg:COBRA}
	\begin{algorithmic}[1]
		\STATE \textbf{Input:} $\cN_1$: set of agents before the round $t=1$, $\delta \in (0,1)$, and $\lambda > 0$
        \FOR{$t=1, 2,\ldots$}
            \STATE Observe context $x_t$ and then a receive set of arm's features $\cA_t$ reported by agents in $\cN_t$.
            \STATE Select an arm $a_t = \argmax_{a \in \cA_t}\text{UCB}_{t}(x_{t,a})$ as defined in \cref{eq:ucb}.
            \STATE Observe noisy reward  $y_t$.\label{step:selection}
            \STATE Check LOOM condition in \cref{eqn:loom} for each agent in $\cN_t$. If it holds for any agent $a$, then update $\cN_{t+1} = \cN_t\setminus\{a\}$. \label{step:loom}

            \STATE If $N_{t+1} = \emptyset$, stop and receive $0$ reward thereafter.
		\ENDFOR
	\end{algorithmic}
\end{algorithm}

\vspace{-1.75mm}
\para{TS-based algorithm.}
Motivated by the empirical advantages of Thompson Sampling (TS) over UCB-based bandit algorithms \citep{NIPS11_chapelle2011empirical, ICML13_agrawal2013thompson, ICLR21_zhang2020neural}, we also propose a TS-based variant, \algo{} (UCB). 
This algorithm closely mirrors \algo{} (UCB), differing only in the arm selection step (Line~\ref{step:selection}). 
To get a TS-based reward estimate, the algorithm first samples a reward function parameter $\tilde\theta_t \sim \mathcal{N}\left(\hat\theta_t, \beta_t^2 V_t^{-1} \right)$, where $\mathcal{N}$ denotes the normal distribution and $\beta_t=R\sqrt{9d\log\Lp t/\delta \Rp}$ \citep{ICML13_agrawal2013thompson}. 
Using $\tilde\theta_t$, the TS-based reward estimate, i.e., $\text{TS}_t(x_{t,a}) = x_{t,a}^\top \tilde\theta_t$, replaces $\text{UCB}_{t}(x_{t,a})$ for computing the optimistic reward in Line \ref{step:selection}.
We evaluate the empirical performance of \algo{} (TS) in \cref{sec:experiments}.

\para{LOOM in \algo{} (UCB) and \algo{} (TS).} 
To check the LOOM condition defined in \cref{eqn:loom}, we need to compute $\text{LCB}_{t,a}^{(x)}$, which requires estimating the reward function parameters using observations from all agents except agent~$a$. 
To construct the aforementioned estimate, we use the same estimator as in \cref{eqn:opttheta}, but exclude the observations from agent $a$, which is given as follows:
\eqs{
    \hat\theta_{t,-a} = V_{t,-a}^{-1} \sum_{s=1,a_s \ne a}^{t} x_{s,a_s} y_s \text{ with } V_{t,-a} = \lambda I + \sum_{s=1,a_s\ne a}^{t} x_{s,a_s}x_{s,a_s}^\top.
}

Since we have now estimates of $\hat\theta_{t,-a}$, we construct the confidence ellipsoid as in \cref{eqn:confidenceset}: 
\eq{
    \label{eqn:confidencesetotherthana}
    C_{t, -a} = \left\{ \theta \in \R^d \colon \lVert \hat \theta_{t,-a} - \theta \rVert_{V_{t,-a}} \leq \alpha_{t,-a} \right\},
}
where $\alpha_{t,-a} = R\sqrt{ d\log \Lp \frac{1+ (t+1-S_t(a))L^2/\lambda}{\delta}\Rp} + \lambda^{\frac{1}{2}}S$ and $S_t(a)$ is the number of times that agent $a$ is selected until round $t$.
We now formally define the pessimistic estimate of the total expected reward for an agent $a$ as: $\text{LCB}_{t,a}^{(x)} = \sum_{s=1,a_s = a}^t \text{LCB}_{t,-a}(x_{s,a_s})$, where $\text{LCB}_{t,-a}(x_{s,a_s})= x_{s,a_s}^\top \hat\theta_{t, -a} - \alpha_{t,-a} \norm{x_{s,a_s}}_{V_{t,-a}^{-1}}$ for $1 \le s \le t$. 
With $\text{UCB}_{t,a}^{y} = \sum_{s \le t,a_s = a} y_s + \sqrt{2S_t(a)\log(1/\delta_{t,a}^{y})}$ from \cref{lem:rewOptEst}, we can use LOOM condition to identify if any agent is over-reporting. 

\begin{rem}[Agent under-reporting.]
    \label{rem:under_report}
    Agents have no incentive to under-report, as it typically reduces their likelihood of being selected by the learner. Instead, they are more inclined to over-report to increase their chances of being selected. 
    However, as noted in \citep{buening2024strategic}, there are some cases where under-reporting may yield a small gain. 
    Our proposed method, LOOM, is specifically designed to detect over-reporting and does not capture under-reporting.
    Developing a mechanism that can reliably detect both under-reporting and over-reporting remains an open problem.
\end{rem}

\subsection{NE and Regret Analysis}
\label{sec:analysis}
In this section, we drive NE and regret guarantees for \algo{} and establish its desirable properties, including incentive compatibility (i.e., reporting truthfully is the optimal strategy) and a sublinear regret guarantee. 
We assume that the agent only misreports their arm features so that the corresponding reward is higher, i.e., for all $x^\star\in \cX:\ \slin{\ox} - \slin{\tx} \ge 0$. Notably, we impose no restrictions on how agents report their arm features, aside from no collusion assumption, which is common in VCG-type mechanisms \citep{vickrey1961counterspeculation, clarke1971multipart, groves1973incentives}.
Let $\tilde{O}$ hide the logarithmic factors and constants. Our next result shows that when arms report truthfully, \algo{} approximately incentivizes truthful behavior and achieves a regret bound of at most $\tilde{O}(d\sqrt{T})$ under this approximate NE.
\begin{restatable}{thm}{regretNE}
	\label{thm:regretNE}
    When agents report truthfully, being truthful is a $\tilde{O}(d\sqrt{T})$-\neql{} under \algo. Furthermore, the regret of \algo{} under this approximate \neql{} is at most
    $$
        \Regret_T(\text{\algo},\bm{\sigma}^\star) = \tilde{O}(d\sqrt{T}).
    $$
\end{restatable}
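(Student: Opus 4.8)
The statement has two parts: (i) truthfulness forms a $\tilde O(d\sqrt T)$-NE under \algo, and (ii) the regret under this approximate NE is $\tilde O(d\sqrt T)$. I would handle them in that order, since the regret bound relies on the fact that (under truthful reporting) no agent is ever eliminated except with small probability, so the algorithm reduces to a standard OFUL-type linear bandit on the true features.

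For part (i), the plan is to fix an agent $a$ and suppose all other agents play $\sigma^\star_{-a}$ while $a$ considers a deviation $\sigma'_a$. I want to bound $\mathbb{E}[S_T(a)\mid \sigma'_a,\bm\sigma^\star_{-a}] - \mathbb{E}[S_T(a)\mid \sigma^\star_a,\bm\sigma^\star_{-a}]$ from above by $\tilde O(d\sqrt T)$. The key mechanism is the LOOM condition: I would argue that any round in which $a$ is selected while over-reporting contributes, in expectation, a detectable gap between $\text{UCB}^{(y)}_{t,a}$ (which tracks the true noiseless rewards $f(x^\star_{s,a_s})$ via Lemma~\ref{lem:rewOptEst}) and $\text{LCB}^{(x)}_{t,a}$ (which tracks the reported rewards $f(x_{s,a_s})$ from the outside estimator). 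Concretely, if $a$ accumulates more than roughly $\alpha_{t,-a}\sqrt{S_t(a)}$ worth of ``extra'' reported reward, the LOOM condition $\text{LCB}^{(x)}_{t,a} > \text{UCB}^{(y)}_{t,a}$ fires and $a$ is removed. Since the total confidence width summed over $S_T(a)$ selections is $\tilde O(d\sqrt{S_T(a)}) = \tilde O(d\sqrt T)$ by the standard elliptical-potential / Cauchy–Schwarz argument, an over-reporting agent can be selected at most $\tilde O(d\sqrt T)$ extra times beyond what truthful play would give before being eliminated; on the complementary low-probability event (of total mass $\sum_t(\delta^x_{t,a}+\delta^y_{t,a})$, chosen summable and $\le 1/T$-ish) we bound the contribution by $T$ trivially. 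An agent who merely over-reports but stays within the confidence band gains no selection advantage over truthful reporting up to the same $\tilde O(d\sqrt T)$ slack, because the algorithm's UCB scores differ from the truthful ones by at most the same confidence widths. Under-reporting is excluded by the standing assumption $\slin{\ox}-\slin{\tx}\ge 0$ (and Remark~\ref{rem:under_report}), so these two cases are exhaustive. Summing, the deviation gain is $\tilde O(d\sqrt T)$, giving the claimed $\epsilon = \tilde O(d\sqrt T)$-NE.

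For part (ii), under $\bm\sigma^\star$ every agent reports $x_{t,a}=x^\star_{t,a}$, so by Theorem~\ref{thm:optgtmarmsstayactive} (applied to each agent and union-bounded over the at most $N$ agents and $T$ rounds) no agent is eliminated on an event $\mathcal{E}$ of probability $\ge 1 - \sum_{a}\sum_t(\delta^x_{t,a}+\delta^y_{t,a})$, which I choose to be at least $1-\delta$ by picking $\delta^x_{t,a},\delta^y_{t,a} \asymp \delta/(NT^2)$ (the extra log factors are absorbed into $\tilde O$). On $\mathcal{E}$, \algo{} is exactly the OFUL algorithm of \citet{NIPS11_abbasi2011improved} run on the true features, and the instantaneous regret $r_t = f(x^\star_{t,a^\star_t}) - f(x^\star_{t,a_t})$ is bounded in the usual way: on the good confidence event \eqref{eqn:confidenceset}, $r_t \le 2\alpha_t \norm{x_{t,a_t}}_{V_t^{-1}}$, and then $\sum_{t=1}^T r_t \le \sqrt{T \sum_t 4\alpha_t^2 \norm{x_{t,a_t}}^2_{V_t^{-1}}} \le 2\alpha_T \sqrt{2dT\log(1+TL^2/(\lambda d))} = \tilde O(d\sqrt T)$ by Cauchy–Schwarz and the elliptical potential lemma. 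Off $\mathcal{E}$ the regret is at most $T$ times a bounded per-round regret, contributing $O(\delta T) = \tilde O(1)$ with the choice of $\delta$. Combining the two events yields $\Regret_T(\text{\algo},\bm\sigma^\star) = \tilde O(d\sqrt T)$.

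The main obstacle I anticipate is part (i): making rigorous the claim that over-reporting buys at most $\tilde O(d\sqrt T)$ extra selections before triggering LOOM. The subtlety is that the ``extra'' reported reward an agent injects must be compared against a pessimistic estimator $\hat\theta_{t,-a}$ that is itself built from a changing, selection-dependent design matrix $V_{t,-a}$, and the confidence radius $\alpha_{t,-a}$ and the self-normalized bound must be controlled uniformly in $t$; one has to be careful that the outside estimator is well-conditioned and that $\text{LCB}^{(x)}_{t,a}$ genuinely lower-bounds $\sum_{s\le t,a_s=a} f(x_{s,a_s})$ (not $f(x^\star_{s,a_s})$) on the good event, so that over-reporting ($f(x_{s,a_s}) > f(x^\star_{s,a_s})$) is what the test actually detects. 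I would isolate this as a lemma: conditioned on the confidence events for both $\hat\theta_{t,-a}$ and the sub-Gaussian reward sum (Lemma~\ref{lem:rewOptEst}), $\text{LCB}^{(x)}_{t,a} - \text{UCB}^{(y)}_{t,a} \ge \sum_{s\le t,a_s=a}\big(\slin{x_{s,a_s}} - \slin{x^\star_{s,a_s}}\big) - \tilde O(d\sqrt{S_t(a)})$, so once the accumulated over-report exceeds the $\tilde O(d\sqrt{S_t(a)})$ threshold the agent is eliminated — and conversely an agent who stays undetected can have accumulated at most that much over-report, which translates into at most $\tilde O(d\sqrt T)$ additional selections relative to truthful play.
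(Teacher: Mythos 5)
Your part (ii) matches the paper's proof essentially step for step: on the event that no truthful agent is eliminated (\cref{thm:optgtmarmsstayactive}), \algo{} reduces to OFUL on the true features, the instantaneous regret is bounded by $2\alpha_t\norm{x_{t,a_t}}_{V_t^{-1}}$, and Cauchy--Schwarz plus the elliptical potential lemma give $\tilde{O}(d\sqrt{T})$. For part (i), however, you take a genuinely different route. The paper never invokes the LOOM detection threshold in this proof; instead it sandwiches $S_T(a)$ around $S^\star_T(a)=\sum_{t=1}^T\ind{a_t^\star=a}$ by bounding the number of sub-optimal selections $\sum_{t=1}^T\ind{a_t\ne a_t^\star}\le\tilde{O}(d\sqrt{T})$ via a gap decomposition that multiplies and divides by $\Delta_{a_t}$ and then passes to $\Delta_{\min}=\min_{a_t\ne a_t^\star}\Delta_{a_t}$ (a $1/\Delta_{\min}$ factor is hidden in the $\tilde{O}$), and it reuses that same truthful-trajectory bound for the deviation case. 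Your argument --- that a deviating agent can inject at most $\tilde{O}(d\sqrt{S_t(a)})$ of cumulative over-report before $\text{LCB}^{(x)}_{t,a}>\text{UCB}^{(y)}_{t,a}$ fires --- is essentially the content of the paper's \cref{lem:regretSA}, which the paper deploys only for \cref{thm:regretAllNE}. Your route is more faithful to the mechanism and avoids the $\Delta_{\min}$ dependence, but it inherits exactly the obstacle you flag: the quantity $\sum_{s\le t,\,a_s=a}\norm{x_{s,a_s}}_{V_{t,-a}^{-1}}$ is not controlled by the standard elliptical potential lemma, since the leave-one-out design matrix $V_{t,-a}$ never absorbs agent $a$'s feature vectors; the paper's fix (in \cref{thm:regretAllNE}) is to bound the determinant ratio $\det(V_t)/\det(V_{t,-a})$ by a universal constant and switch back to $\norm{\cdot}_{V_t^{-1}}$. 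Neither argument fully closes the deviation case --- the paper applies a bound derived under full truthfulness to a trajectory where agent $a$ deviates, and you leave the counterfactual comparison of selection counts under the two strategies at the level of a sketch --- so your proposal is on par with the paper's, just organized around a different key lemma.
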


When multiple agents over-report, all estimators used by \algo{} become biased, making it impossible to derive theoretical guarantees without additional assumptions.
Our next result is valid as long as the conditions defined in the following assumptions hold.
\begin{restatable}{assu}{Assumption}
\label{assu:foenEandregret}
    Let $\ox$ and $\tx$ be the reported and true context-arm feature vector, respectively. Then,
    \vspace{-2mm}
    \begin{enumerate}
        \item For all $t \ge 1, a \in \cA_t: \slin{\ox_{t,a}} \le \ucb_{t}(\ox_{t,a})$, where $\ucb_t(x)$ is defined in \cref{eq:ucb}.

        \item For all $t \ge 1, a \in \cA_t : \ucb_{t}(\ox_{t,a}) \le \ucb_{t, -a}(\ox_{t,a})$. 
\end{enumerate}
\end{restatable}
The first assumption states that each agent's expected true reward for the reported features is upper bounded by the optimistic reward estimate, $\ucb_{t}(\ox_{t,a})$, that uses all available context-arm features to estimate $\theta_\star$.
The second assumption says that the optimistic reward estimate, when using all available context-arm features, is tighter than the optimistic reward estimate when excluding reported context-arm features of any agent. Additional discussion about these assumptions are provided in \cref{asec:assumptions}.
Next, we prove a strategic regret bound that holds for every NE of the agents.

\begin{restatable}{thm}{regretAllNE}
	\label{thm:regretAllNE}
    If \cref{assu:foenEandregret} hold then, the regret of \algo{} is 
    $$
        \Regret_T(\algo,\bm{\sigma}) = \tilde{O}(d\sqrt{T} + \sqrt{NT})
    $$
    for every $\bm{\sigma} \in \neql(\algo)$. Hence, 
    $$
        \max_{\sigma \in \neql(\algo)}\Regret_T(\text{\algo},\bm{\sigma}) = \tilde{O}( d\sqrt{T}+\sqrt{NT}).
    $$
\end{restatable}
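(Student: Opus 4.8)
\ The plan is to work on a single high-probability ``good event'' $\cG$ on which (a)~\cref{lem:rewOptEst} and the matching upper bound on $\sum_{s\le t,\,a_s=a}y_s$ hold for every agent $a$ and every round, (b)~the self-normalised concentration for $\sum_{s\le t,\,a_s=a}\epsilon_s$ and for $\sum_{s<t}\ox_{s,a_s}\epsilon_s$ holds, and (c)~the confidence-ellipsoid statements for $\hat\theta_t$ and for each $\hat\theta_{t,-a}$ hold to the extent licensed by \cref{assu:foenEandregret}; choosing the per-event failure probabilities $\delta_{t,a}^x,\delta_{t,a}^y$ summable in $(t,a)$ and applying a union bound gives $\Prob{\cG}\ge 1-\delta$. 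All steps below are on $\cG$. The first step is the regret decomposition: since $a_t^\star\in\cA_t$, the no-under-reporting assumption gives $\slin{\tx_{t,a_t^\star}}\le\slin{\ox_{t,a_t^\star}}$, \cref{assu:foenEandregret}(1) gives $\slin{\ox_{t,a_t^\star}}\le\ucb_t(\ox_{t,a_t^\star})$, and the selection rule $a_t=\argmax_{a\in\cA_t}\ucb_t(\ox_{t,a})$ gives $\ucb_t(\ox_{t,a_t^\star})\le\ucb_t(\ox_{t,a_t})$, so
\begin{align*}
\Regret_T(\algo,\bm\sigma)
 &\le \sum_{t=1}^T\Lp\ucb_t(\ox_{t,a_t})-\slin{\tx_{t,a_t}}\Rp \\
 &= \underbrace{\sum_{t=1}^T\Lp\ucb_t(\ox_{t,a_t})-\slin{\ox_{t,a_t}}\Rp}_{(\mathrm{i})}
 + \underbrace{\sum_{t=1}^T\Lp\slin{\ox_{t,a_t}}-\slin{\tx_{t,a_t}}\Rp}_{(\mathrm{ii})}.
\end{align*}
Term $(\mathrm{i})$ is the usual learning cost for the \emph{reported} features: \cref{assu:foenEandregret} certifies on $\cG$ the one-sided confidence inequality $\ucb_t(\ox_{t,a_t})-\slin{\ox_{t,a_t}}\le 2\alpha_t\norm{\ox_{t,a_t}}_{V_t^{-1}}$, so the elliptical-potential lemma of \citep{NIPS11_abbasi2011improved} (namely $\sum_{t\le T}\norm{\ox_{t,a_t}}_{V_t^{-1}}\le\sqrt{2dT\log(1+TL^2/(\lambda d))}$) together with $\alpha_T=\tilde O(\sqrt d)$ gives $(\mathrm{i})=\tilde O(d\sqrt T)$.

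Term $(\mathrm{ii})$ is the cumulative over-reporting of the \emph{selected} arms, which is precisely what LOOM controls. Regroup by agent, $(\mathrm{ii})=\sum_{a}\sum_{s\le T,\,a_s=a}\Lp\slin{\ox_{s,a_s}}-\slin{\tx_{s,a_s}}\Rp$, fix an agent $a$, and let $\tau(a)\le T$ be a round at which $a$ is active and the LOOM condition does not fire (the round before its elimination, or $T$ if $a$ is never eliminated); by construction this covers all but at most one of $a$'s pulls, the remaining one contributing at most $O(SL)$ to $(\mathrm{ii})$, and $S_{\tau(a)}(a)\le S_T(a)$. Since LOOM does not fire at $\tau(a)$, $\text{LCB}_{\tau(a),a}^{(x)}\le\text{UCB}_{\tau(a),a}^{(y)}$. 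On $\cG$, \cref{lem:rewOptEst} applied to $\sum_{s\le\tau(a),\,a_s=a}y_s$ gives $\text{UCB}_{\tau(a),a}^{(y)}\le\sum_{s\le\tau(a),\,a_s=a}\slin{\tx_{s,a_s}}+\tilde O\bigl(\sqrt{S_T(a)}\bigr)$, while the leave-$a$-out confidence ellipsoid \cref{eqn:confidencesetotherthana} (whose validity follows from \cref{assu:foenEandregret}) together with $\slin{\tx_{s,a_s}}\le\slin{\ox_{s,a_s}}$ gives $\text{LCB}_{\tau(a),a}^{(x)}\ge\sum_{s\le\tau(a),\,a_s=a}\slin{\ox_{s,a_s}}-2\alpha_{\tau(a),-a}\sum_{s\le\tau(a),\,a_s=a}\norm{\ox_{s,a_s}}_{V_{\tau(a),-a}^{-1}}$. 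Chaining these three bounds yields, for each agent $a$,
\begin{align*}
\sum_{s\le\tau(a),\,a_s=a}\Lp\slin{\ox_{s,a_s}}-\slin{\tx_{s,a_s}}\Rp
 &\le 2\alpha_{\tau(a),-a}\sum_{s\le\tau(a),\,a_s=a}\norm{\ox_{s,a_s}}_{V_{\tau(a),-a}^{-1}}+\tilde O\bigl(\sqrt{S_T(a)}\bigr).
\end{align*}
Summing over the at most $N$ agents, the noise terms contribute $\sum_a\tilde O\bigl(\sqrt{S_T(a)}\bigr)\le\tilde O\bigl(\sqrt{N\sum_a S_T(a)}\bigr)=\tilde O(\sqrt{NT})$ by Cauchy--Schwarz and $\sum_a S_T(a)=T$, the $O(NSL)$ of the leftover pulls is lower order, and a leave-one-out version of the elliptical-potential bound absorbs $\sum_a\alpha_{\tau(a),-a}\sum_{s,a_s=a}\norm{\ox_{s,a_s}}_{V_{\tau(a),-a}^{-1}}$ into $\tilde O(d\sqrt T)$. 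Combining with the bound on $(\mathrm{i})$ gives $\Regret_T(\algo,\bm\sigma)=\tilde O(d\sqrt T+\sqrt{NT})$ on $\cG$, hence with probability at least $1-\delta$; since nothing in the argument used which Nash equilibrium is played, taking $\max_{\bm\sigma\in\neql(\algo)}$ gives the last display.

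The step I expect to be the main obstacle is justifying the confidence-bound inequalities under misreporting: because $\hat\theta_t$ and $\hat\theta_{t,-a}$ are fitted to reported (not true) features, under arbitrary non-colluding misreporting the off-the-shelf ellipsoids of \citep{NIPS11_abbasi2011improved} need not contain $\theta_\star$, and \cref{assu:foenEandregret} (discussed in \cref{asec:assumptions}) is precisely the device that rescues the proof — it supplies exactly the one-sided $\ucb$- and $\text{LCB}$-type inequalities used to bound $(\mathrm{i})$ and $\text{LCB}_{\tau(a),a}^{(x)}$. A secondary subtlety is the bookkeeping for eliminated agents: one must check that an eliminated agent causes no regret after its elimination round, and that reducing to $\tau(a)$ keeps each agent's contribution to $(\mathrm{ii})$ at $\tilde O(\sqrt{S_T(a)})$ rather than $\tilde O(\sqrt T)$ — this is what makes Cauchy--Schwarz deliver the $\sqrt{NT}$ (instead of $N\sqrt T$) rate and explains the extra $\sqrt{NT}$ term relative to \cref{thm:regretNE}.
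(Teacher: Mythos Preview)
Your decomposition into $(\mathrm{i})$ and $(\mathrm{ii})$ introduces a term you cannot control. The claim that \cref{assu:foenEandregret} ``certifies the one-sided confidence inequality $\ucb_t(\ox_{t,a_t})-\slin{\ox_{t,a_t}}\le 2\alpha_t\norm{\ox_{t,a_t}}_{V_t^{-1}}$'' is wrong: part~1 of the assumption says only $\slin{\ox}\le\ucb_t(\ox)$, i.e.\ that the UCB is a \emph{valid upper bound}, which is the inequality $\ucb_t(\ox)-\slin{\ox}\ge 0$ in the opposite direction. The upper bound you need is equivalent to $\slin{\ox}\ge\lcb_t(\ox)$, and nothing in \cref{assu:foenEandregret} supplies that; under misreporting, $\hat\theta_t$ is fitted to $(\ox_s,y_s)$ with $y_s=\slin{\tx_s}+\epsilon_s$, so the bias term $-V_t^{-1}\sum_s\ox_s(\slin{\ox_s}-\slin{\tx_s})$ in $\hat\theta_t-\theta_\star$ is not sign-controlled, and $\tlin{\ox}-\slin{\ox}$ can exceed $\alpha_t\norm{\ox}_{V_t^{-1}}$. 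Notice also that your argument never uses part~2 of \cref{assu:foenEandregret} for term $(\mathrm{i})$ --- a red flag, since the paper introduces part~2 precisely to handle this step.

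The paper's fix is to skip the decomposition entirely. After your chain $\slin{\tx_{t,a_t^\star}}\le\ucb_t(\ox_{t,a_t})$, apply part~2 of \cref{assu:foenEandregret} to pass to $\ucb_{t,-a_t}(\ox_{t,a_t})$, so
\[
\Regret_T\le\sum_{a}\sum_{t:\,a_t=a}\Lp\ucb_{t,-a}(\ox_{t,a_t})-\slin{\tx_{t,a_t}}\Rp,
\]
and then bound each agent's sum directly via the LOOM non-firing condition (this is \cref{lem:regretSA} in the paper). The key point is that this step needs \emph{no} confidence ellipsoid at all: writing $\ucb_{t,-a}(\ox)=\lcb_{t,-a}(\ox)+2\alpha_{t,-a}\norm{\ox}_{V_{t,-a}^{-1}}$ and using the LOOM inequality $\sum_s\lcb_{t,-a}(\ox_s)\le\sum_s y_s+\sqrt{\cdots}\le\sum_s\slin{\tx_s}+2\sqrt{\cdots}$ immediately gives $\sum_s(\ucb_{t,-a}(\ox_s)-\slin{\tx_s})\le 2\alpha_{t,-a}\sum_s\norm{\ox_s}_{V_{t,-a}^{-1}}+\tilde O(\sqrt{S_T(a)})$. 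Your $(\mathrm{ii})$ argument, the $\tau(a)$ bookkeeping, the leave-one-out elliptical-potential bound, and the Cauchy--Schwarz step yielding $\sqrt{NT}$ are all correct and match the paper; only the split that created $(\mathrm{i})$ needs to be removed.
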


\para{Outline of the proofs.} 
The proofs of \cref{thm:regretNE} and \cref{thm:regretAllNE} depend on the LOOM mechanism to identify agents who are over-reporting.
LOOM ensures that optimistic estimates are tightly bounded, thereby limiting the potential benefit from over-reporting and reinforcing truthfulness as the optimal strategy for agents.
The $\sqrt{NT}$ term in \cref{thm:regretAllNE} arises due to the strategic nature of the agents who can exploit initial noisy estimates of \algo{}.
The detailed proofs are provided in \cref{asec:proofs}.

     \section{Experiments}
    \label{sec:experiments}
    %!TEX root =  main.tex

In this section, we aim to corroborate our theoretical results and empirically demonstrate the performance of our proposed algorithm in different strategic contextual bandit problems.
We repeat all our experiments 20 times and show the regret (as defined in \cref{eq:regret}) with a 95\% confidence interval (the vertical line on each curve shows the confidence interval). To demonstrate the different performance aspects of our proposed algorithm, we have used different synthetic problem instances (commonly used experiment choices in bandit literature) whose details are as follows.

\para{Experiment setting.}
We use a $d_c$-dimensional space to generate the sample features of each context, where context $c_t$ is represented by $c_t = \Lp x_{c_t,1}, \ldots, x_{c_t,d_c} \Rp$ for $t \ge 1$.
Similarly, we use a $d_n$-dimensional space to generate the sample features of each agent, where agent $n \in \cN$ is represented by $n = \Lp x_{n,1}, \ldots, x_{n,d_n} \Rp$ represent the agent $n$.
The value of $i$-the feature $x_{c_t,i}$ (or $x_{n,i})$ is sampled uniformly at random from $\Lp 0, 2 \Rp$.
Note that agents remain the same across the rounds, whereas an context in each round is randomly sampled from the $d_c$-dimensional space. 
To get the context-agent feature vectors for context $c_t$ in the round $t$, we concatenate the context features $c_t$ with all agent feature vectors. 
For context $c_t$ and agent $n$, the concatenated feature vector is denoted by $x_{t,n}$, which is an $d$-dimensional vector with $d = d_c + d_n$.
We select a $d$-dimensional vector $\theta_\star$ by sampling uniformly at random from $(0, 2)^d$  and normalizing it to have unit $l_2$-norm. 
In all experiments, we use $\lambda = 0.01$, $R=0.1$, $\delta=0.05$, and $d_c = d_n$.  
\begin{figure*}[!ht]
	\vspace{-3mm}
    \centering
	\subfloat[Truthful setting]{\label{fig:truthful}
		\includegraphics[width=0.24\linewidth]{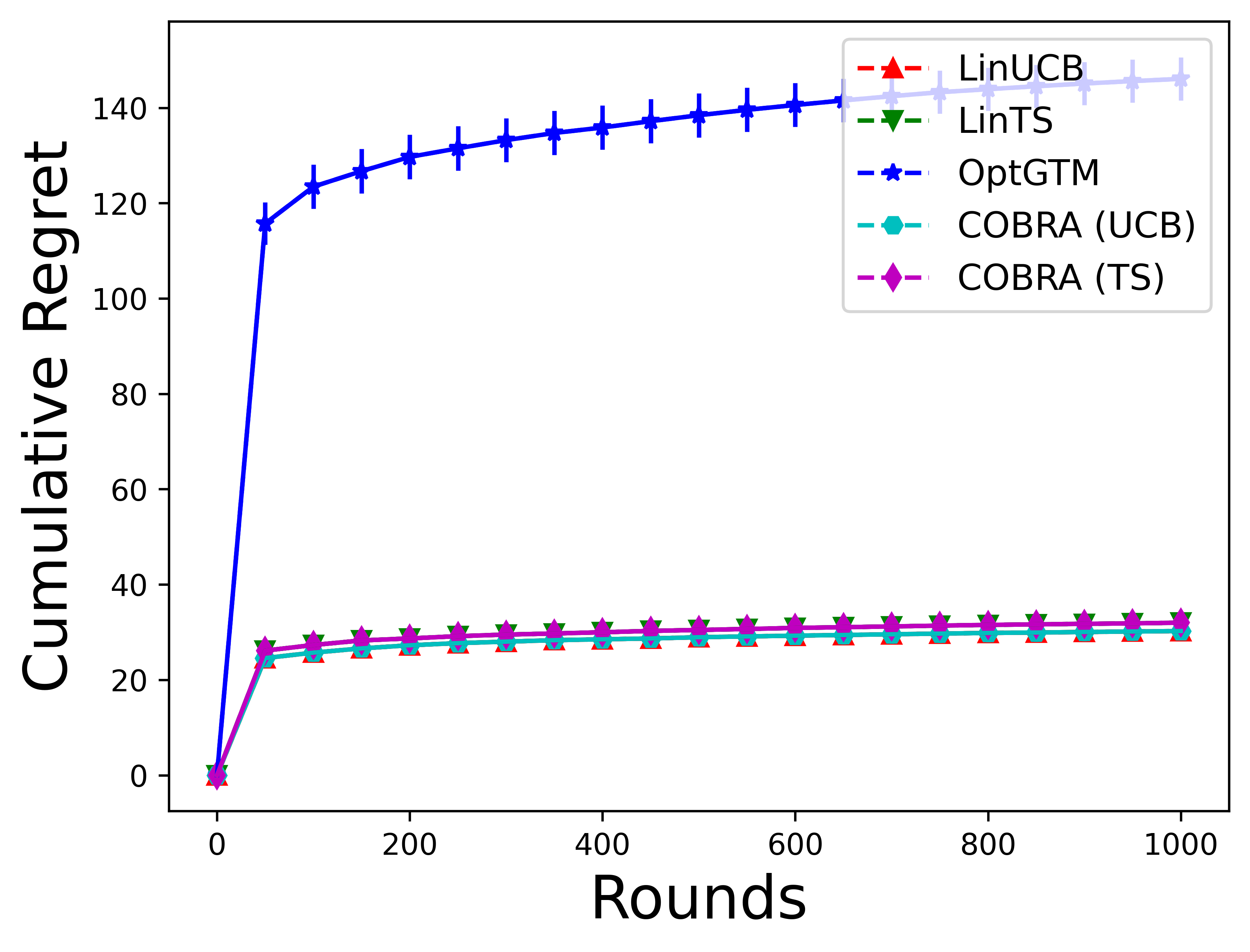}}
	\subfloat[Problem Instance I]{\label{fig:prob1}
		\includegraphics[width=0.24\linewidth]{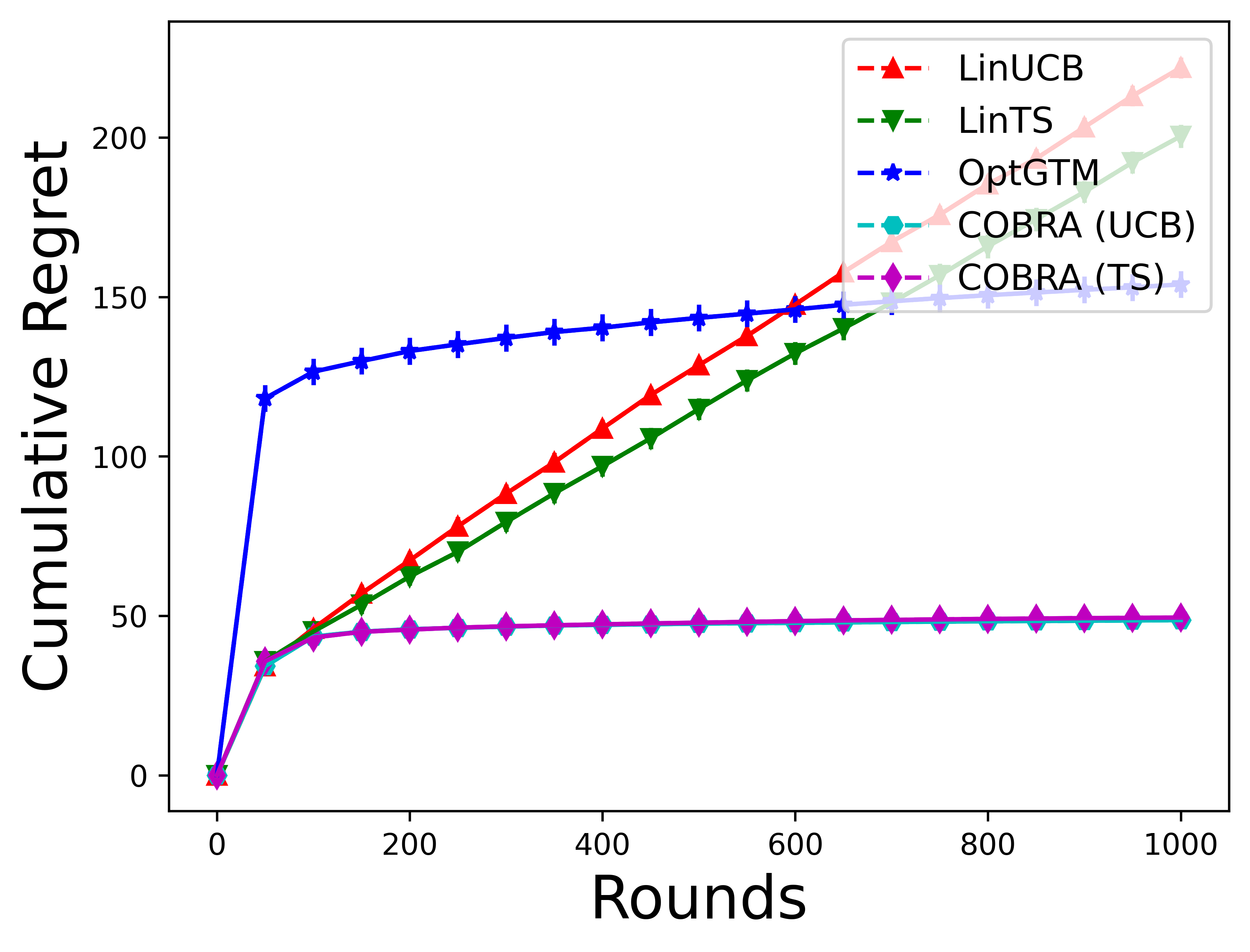}}
	\subfloat[Problem Instance II]{\label{fig:prob2}
		\includegraphics[width=0.24\linewidth]{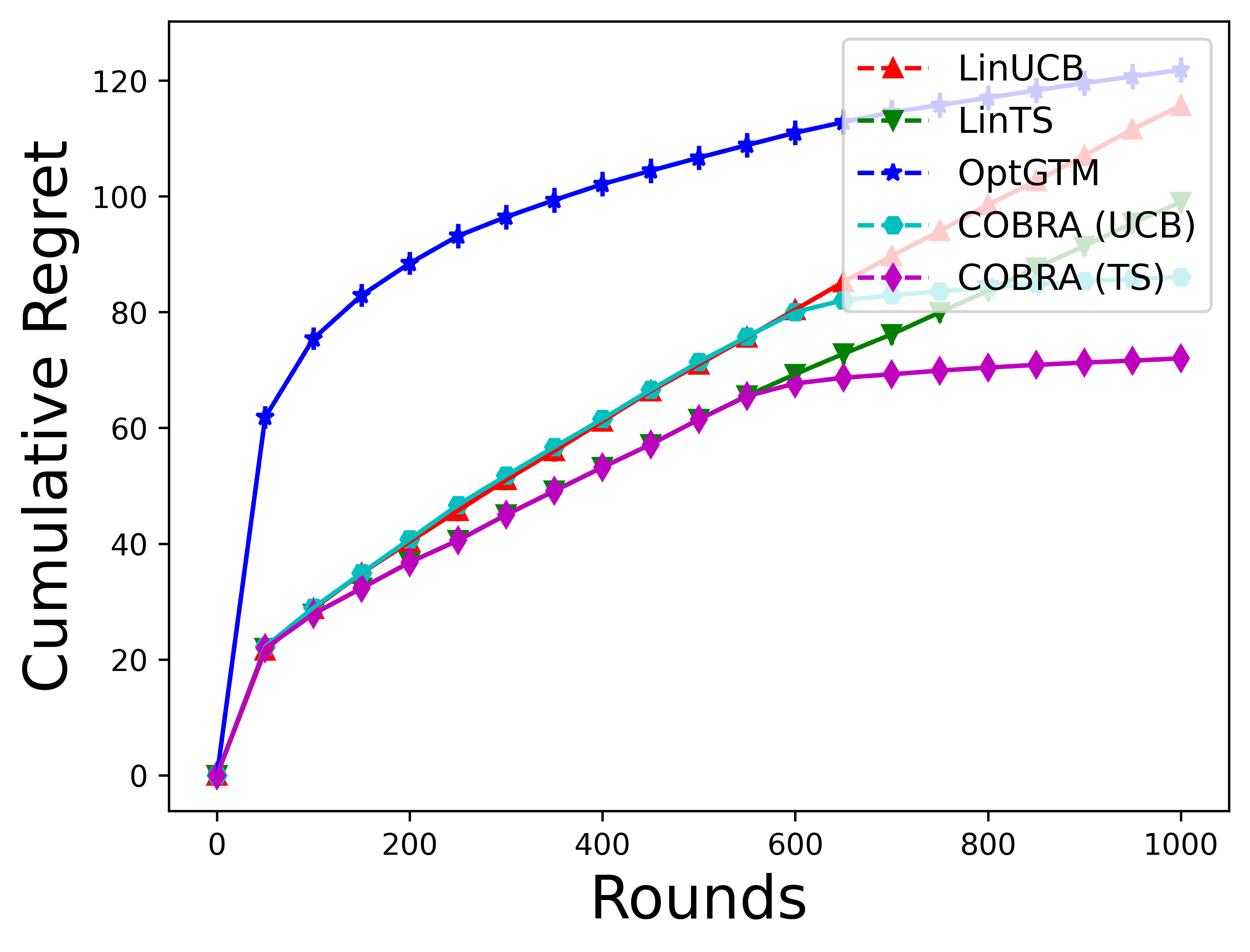}}
	\subfloat[Problem Instance III]{\label{fig:prob3}
		\includegraphics[width=0.24\linewidth]{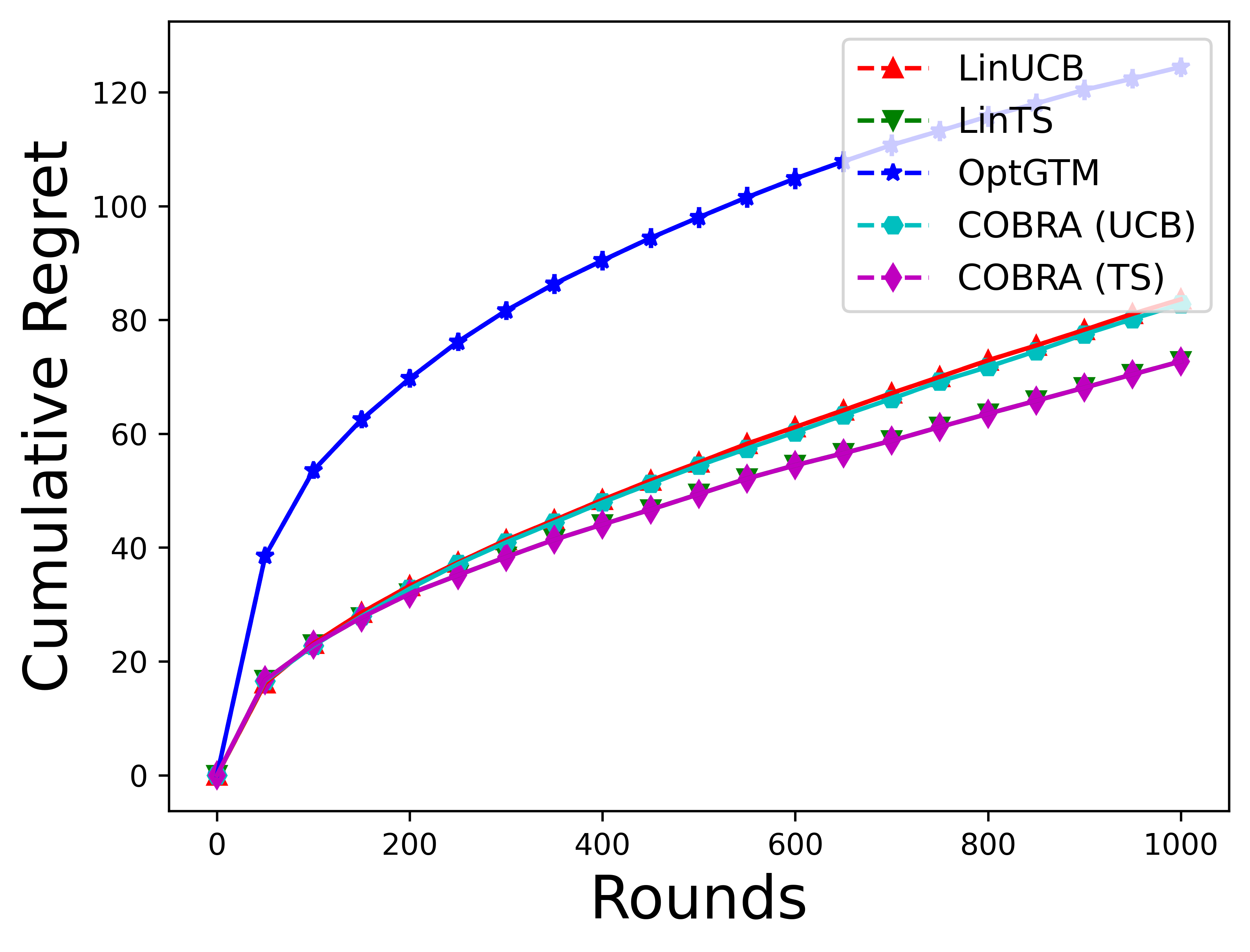}}
    \caption{
        Comparing cumulative regret of \algo{} with different baselines.
	}
	\label{fig:compare_cumm_regret}
    \vspace{-2mm}
\end{figure*}

\para{Regret comparison with baselines.}
We compare the regret of proposed algorithms with three other baselines: {Lin-UCB} \citep{WWW10_li2010contextual}, {Lin-TS} \citep{ICML13_agrawal2013thompson} and {OptGTM} \citep{buening2024strategic}. 
For experiments we consider various linear-utility functions, problem instance $1$,  $f(x) = 5x^\top\theta_\star$,  problem instance $2$, $f(x) = 2x^\top\theta_\star$,  problem instance $3$, $f(x) = x^\top\theta_\star$. We use $1000$ contexts, $5$ agents.
We use three different problems with the same setting except $d_c=d_n = 5$, resulting in $d=10$. 
We use different misreport instances by uniformly sampling from $x^\top\theta_\star+a x^\top\theta_\star$, where $x^\top\theta_\star$ is truthful report and $a$ represent the possible misreport uniformly sampled from $(\eta, \eta+\epsilon_\eta)$ where $\eta =0.1$ and $\epsilon_\eta = 0.1$. 
In \cref{fig:truthful}, all arms are truthful using problem instance $1$, and even in that setting, our algorithm performs better than the existing algorithm's performance.
As expected, our algorithms \algo{} based on UCB and TS-based contextual linear bandit algorithms outperform all the baselines as shown in \cref{fig:prob1}-\ref{fig:prob3} on different problem instances of linear utility (only varying the reward function while keeping remaining parameters unchanged). Note that we set a limit on the y-axis to highlight the sub-linear regret of our algorithm.
We further observe that \algo{} with TS outperforms its UCB-based counterpart.

\para{Regret of \algo{} vs. different types of strategic manipulations.}
To simulate the strategic manipulations, we define the over-report feature vector as $x = (1+a)x^\star$. Since all features and parameters ($\theta_\star$) are positive, scaling true feature vector $x^\star$ by a factor of $(1+a)$ ensures over-reporting. To control the over-reporting, we uniformly sample $a$ from $(\eta,\eta + \epsilon_\eta)$, where changing $\eta$ or $\epsilon_\eta$ leads to different types of manipulations. Specifically, an increased $\eta$ or $\epsilon_\eta$ implies increasing the misreporting amount.
As shown in \cref{fig:eta_ucb}-\ref{fig:eta_noise_ts}, the regret bound of our \algo{} UCB- and TS-based algorithms increases as noise levels increase. However, in instances with higher manipulation, the \algo{} TS-based algorithm consistently outperforms the \algo{} UCB-based algorithm.

\begin{figure*}[!ht]
    \vspace{-5mm}
	\centering
	\subfloat[Varying $\eta$ (UCB)]{\label{fig:eta_ucb}
		\includegraphics[width=0.24\linewidth]{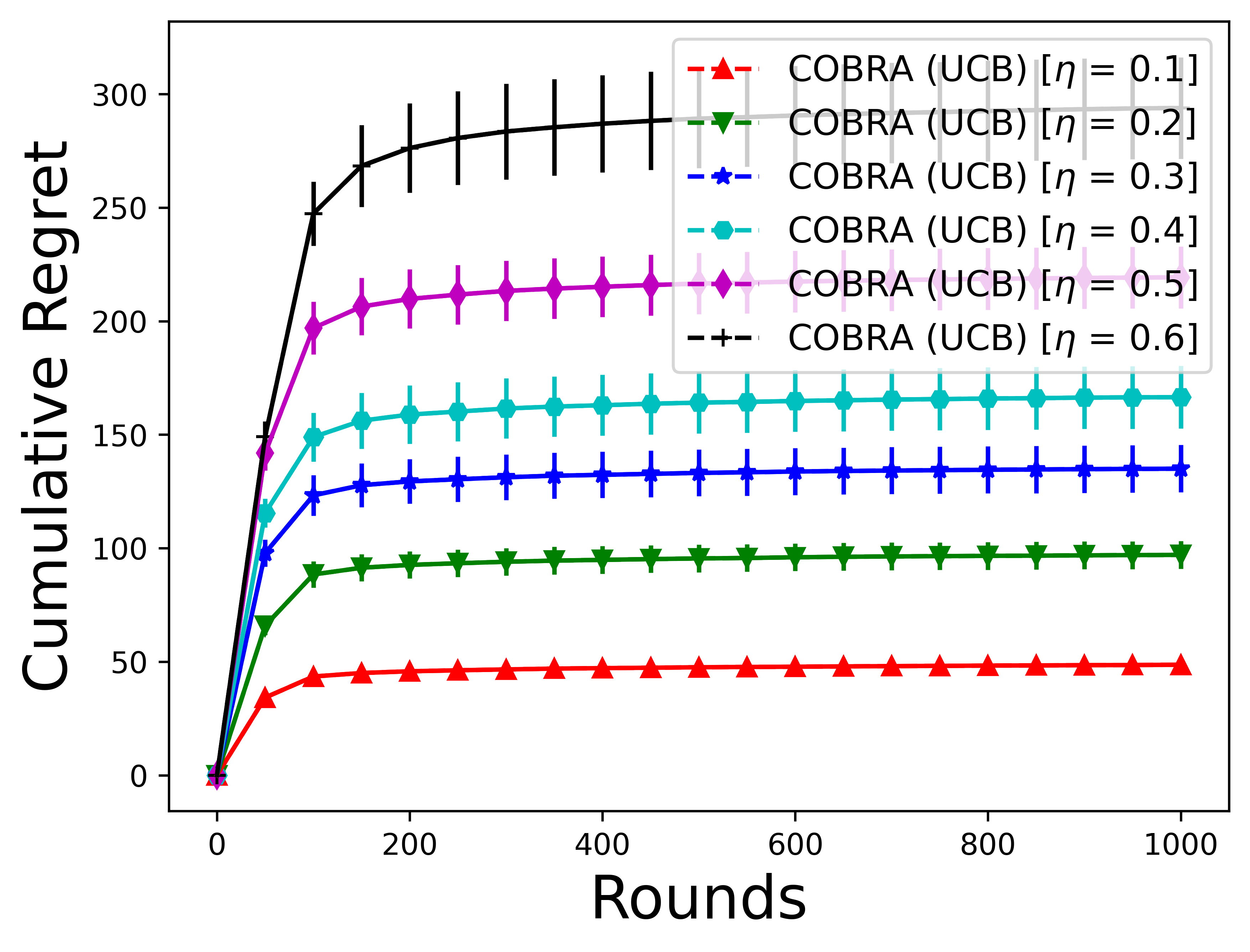}}
	\subfloat[Varying $\eta$ (TS)]{\label{fig:eta_ts}
		\includegraphics[width=0.24\linewidth]{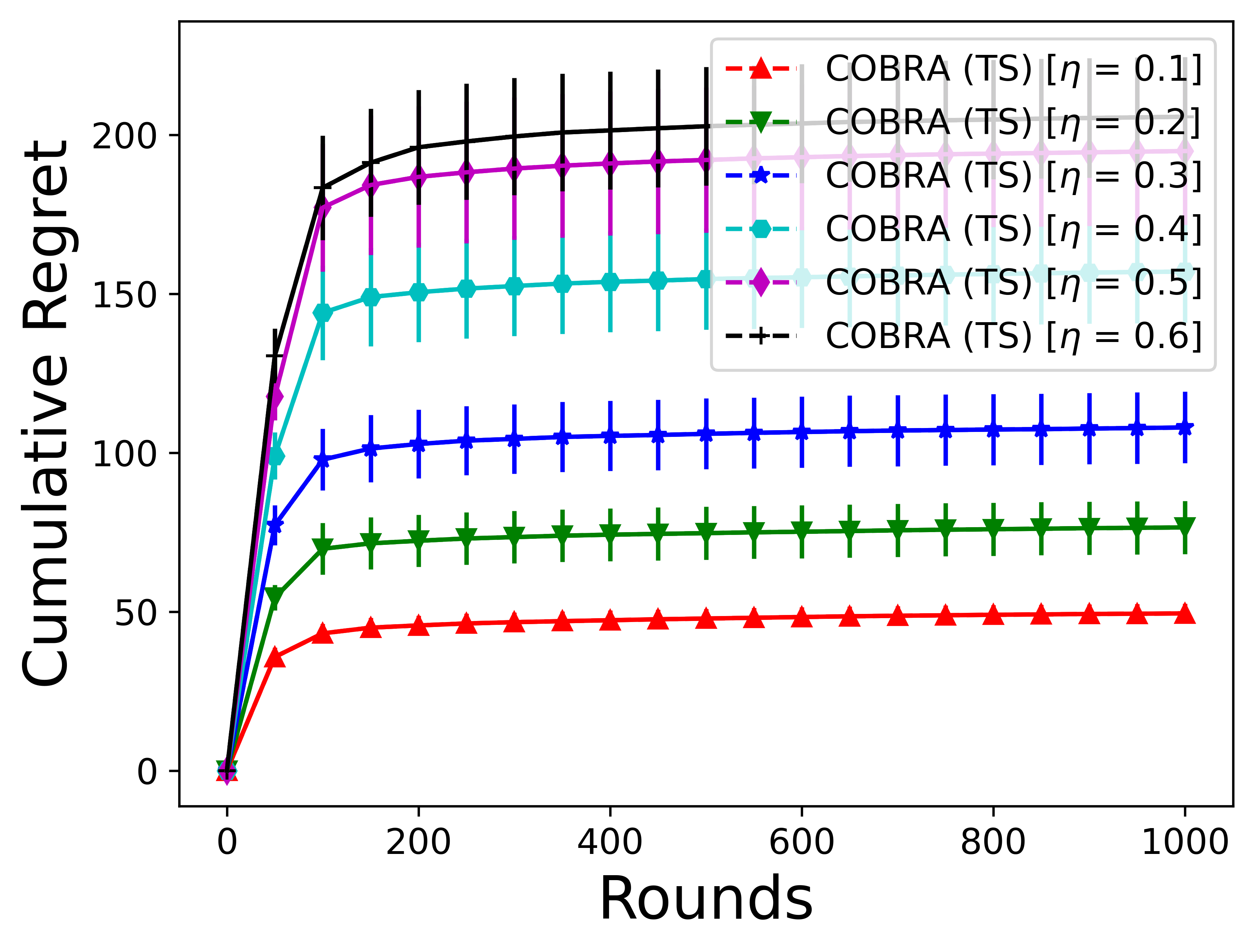}}
	\subfloat[Varying $\epsilon_\eta$ (UCB)]{\label{fig:eta_noise_ucb}
		\includegraphics[width=0.24\linewidth]{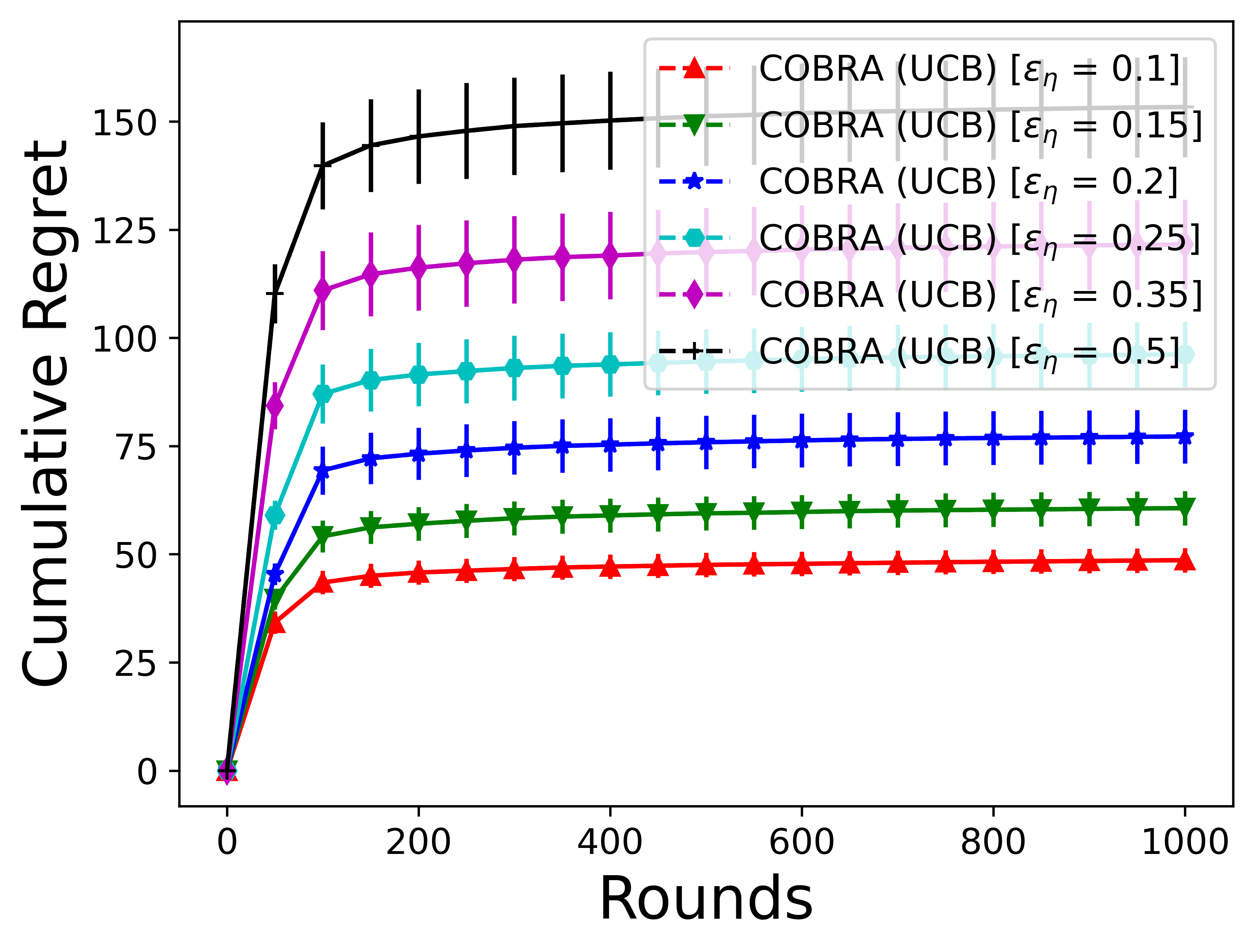}} 
    \subfloat[Varying $\epsilon_\eta$ (TS)]{\label{fig:eta_noise_ts}
		\includegraphics[width=0.24\linewidth]{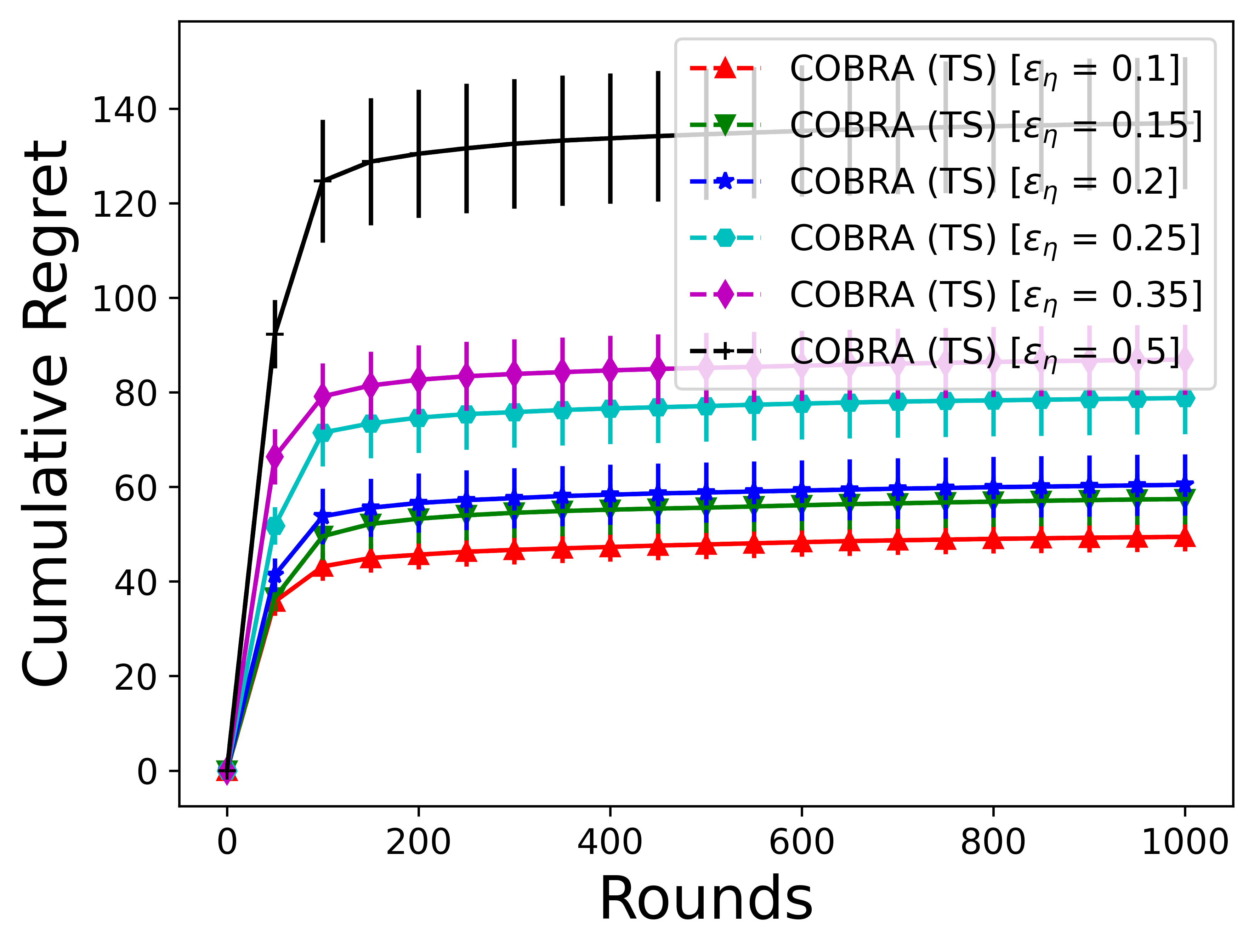}}
    \caption{
       Cumulative regret of \algo{} vs. different types of strategic manipulations.
	}
	\label{fig:compare_strategic_manipulations}
    \vspace{-2mm}
\end{figure*}

\begin{figure*}[!ht]
    \vspace{-4mm}
	\centering
	\subfloat[Vary agents (UCB)]{\label{fig:vary_agents_ucb}
		\includegraphics[width=0.24\linewidth]{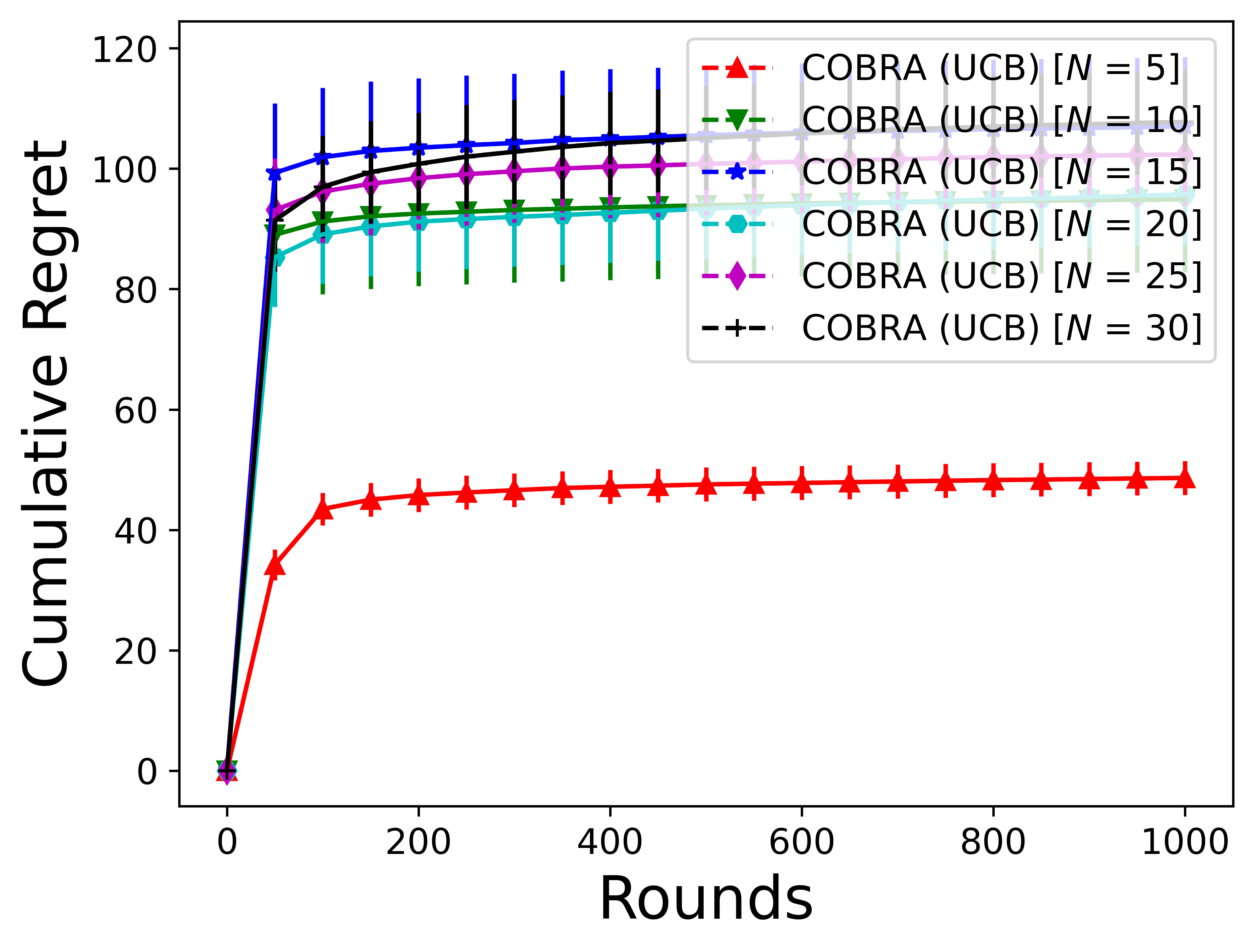}}
	\subfloat[Vary agents (TS)]{\label{fig:vary_agents_ts}
		\includegraphics[width=0.24\linewidth]{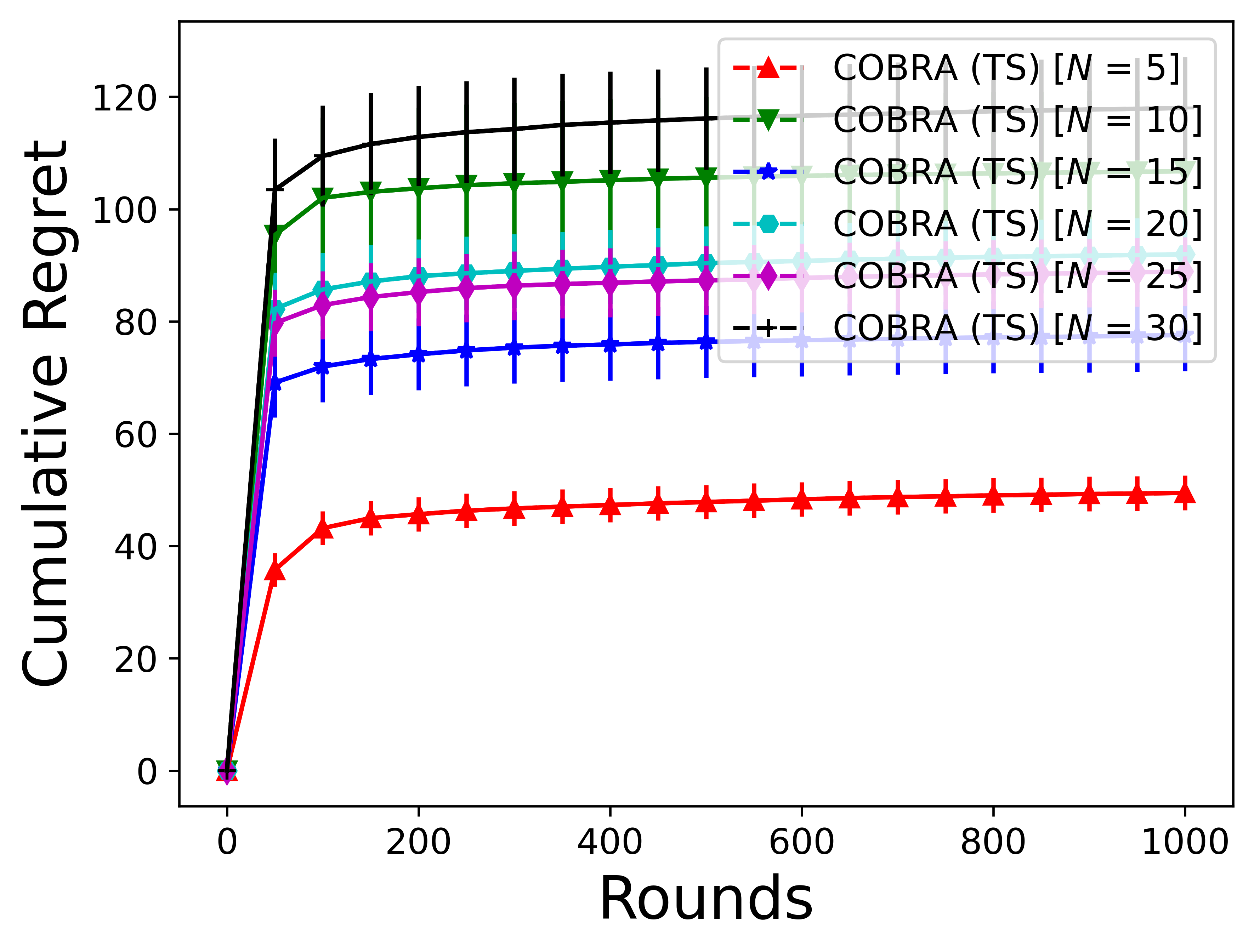}}
	\subfloat[Vary dimension (UCB)]{\label{fig:vary_dims_ucb}
		\includegraphics[width=0.24\linewidth]{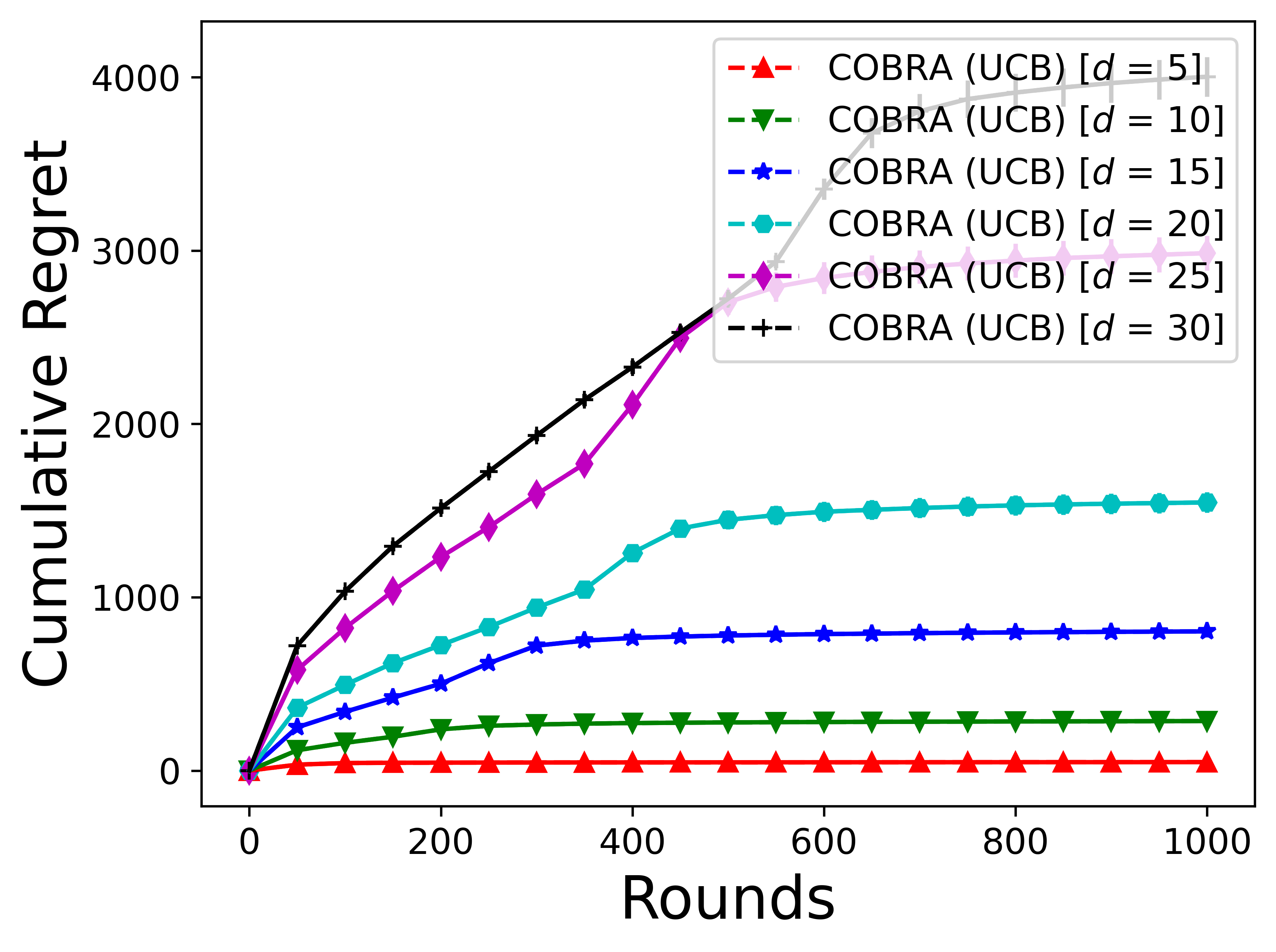}}
	\subfloat[Vary dimension (TS)]{\label{fig:vary_dims_ts}
		\includegraphics[width=0.24\linewidth]{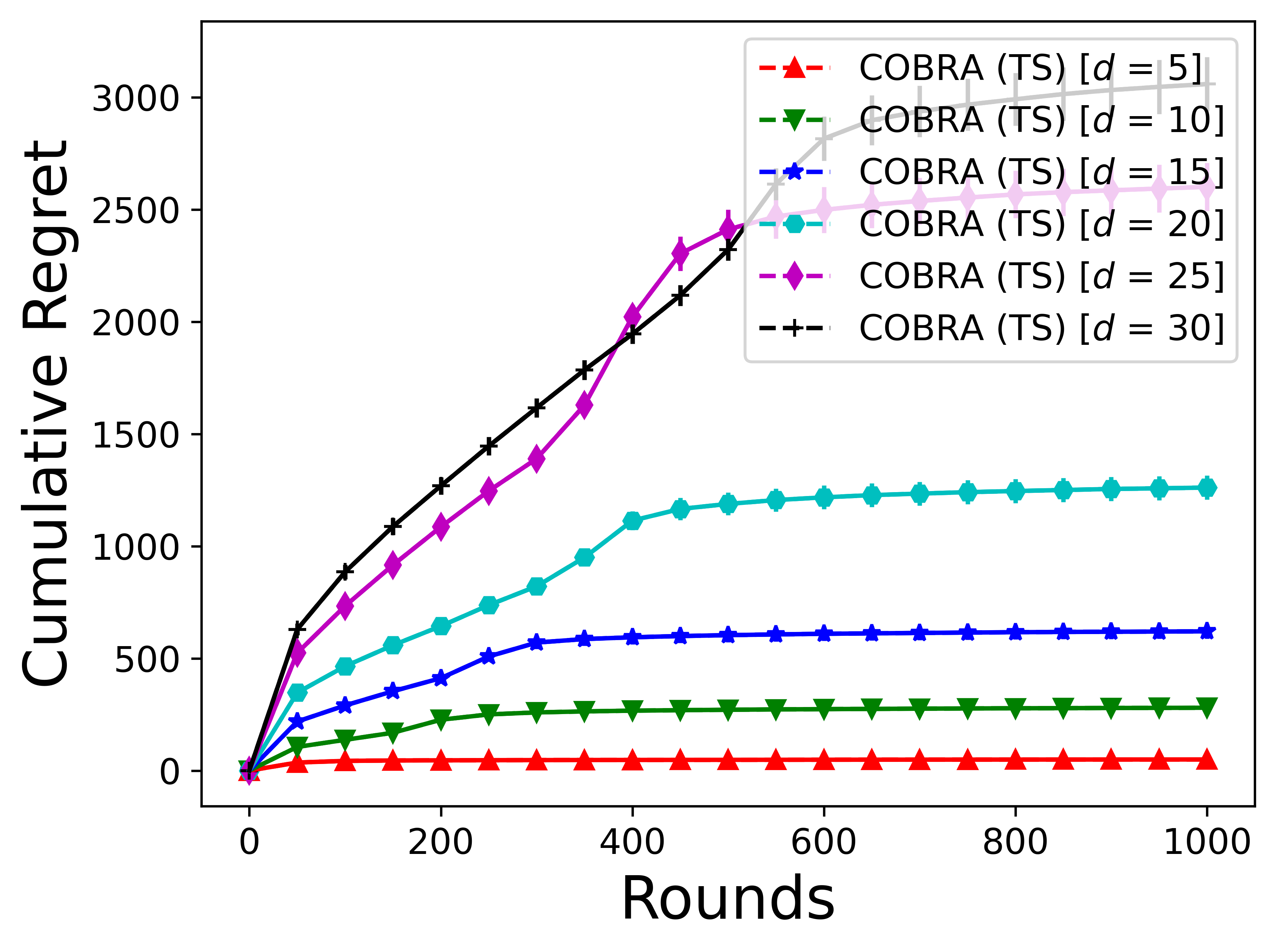}}
    \caption{
       Cumulative regret of \algo{} vs. different values of $N$ and $d$. 
	}
	\label{fig:compare_NandD}
    \vspace{-2mm}
\end{figure*}

\para{Regret of \algo{} vs. number of agents ($N$) and dimension $(d)$.}
The number of agents $(N)$ and dimension of context-agent feature vector $(d)$ in the contextual bandit problem control the difficulty. As their values increase, the problem becomes more difficult, making it harder to allocate the context to the best agent.
We want to verify this by observing how the regret of our proposed algorithms changes while varying  $N$ and $d$ in the contextual bandit problem. 
To see this in our experiments, we use the linear utility function (i.e., $f(x) = x^\top\theta_\star$), $1000$ contexts, $N=10$ when varying dimension,  $d=40$ while varying the number of agents.
As shown in \cref{fig:vary_agents_ucb} and \cref{fig:vary_agents_ts}, the regret bound of our \algo{} UCB- and TS- based algorithms increases as we increase the number of agents, i.e., $N = \{ 5, 10, 15, 20, 25\}$. 
We also observe the same trend when we increase the dimension of the context-agent feature vector from $d =\{10, 20, 30, 40, 50\}$ as shown in \cref{fig:vary_dims_ucb} and \cref{fig:vary_dims_ts}. 
In all experiments, we also observe that the \algo{} TS-based algorithm performs better than its \algo{} UCB-based counterpart (as seen in \cref{fig:vary_agents_ucb}-\ref{fig:vary_dims_ts} by comparing the regret of both algorithms). 
Additional results are provided in \cref{asec:more_experiments} of supplementary material.

    \section{Conclusion}
    \label{sec:conclusion}
    %!TEX root =  main.tex

This paper addresses a contextual bandit problem with multiple strategic agents who may misreport arm features to increase their utility. 
To overcome this, we propose LOOM, a mechanism for detecting misreporting agents using the reported features from other agents.
Equipped with LOOM, we propose an algorithm, \algo{}, for contextual bandit problems involving strategic agents that disincentivize their strategic behavior without using any monetary incentives, while having incentive compatibility and a sub-linear regret guarantee. Our experimental results further corroborate the different performance aspects of our proposed algorithm.
For future research, one promising direction is to incorporate fairness constraints \citep{arXiv24_verma2024online} in the agent selection process.
Another promising direction is to develop a mechanism that can reliably detect both under- and over-reporting.

    \bibliographystyle{plainnat} 
    \bibliography{references}

\begin{thebibliography}{53}
\providecommand{\natexlab}[1]{#1}
\providecommand{\url}[1]{\texttt{#1}}
\expandafter\ifx\csname urlstyle\endcsname\relax
  \providecommand{\doi}[1]{doi: #1}\else
  \providecommand{\doi}{doi: \begingroup \urlstyle{rm}\Url}\fi

\bibitem[Abbasi-Yadkori et~al.(2011)Abbasi-Yadkori, P{\'a}l, and
  Szepesv{\'a}ri]{NIPS11_abbasi2011improved}
Yasin Abbasi-Yadkori, D{\'a}vid P{\'a}l, and Csaba Szepesv{\'a}ri.
\newblock {Improved Algorithms for Linear Stochastic Bandits}.
\newblock In \emph{Proc. NeurIPS}, pages 2312--2320, 2011.

\bibitem[Agrawal and Goyal(2012)]{COLT12_agrawal2012analysis}
Shipra Agrawal and Navin Goyal.
\newblock {Analysis of Thompson Sampling for the Multi-armed Bandit Problem}.
\newblock In \emph{Proc. COLT}, pages 39.1--39.26, 2012.

\bibitem[Agrawal and Goyal(2013)]{ICML13_agrawal2013thompson}
Shipra Agrawal and Navin Goyal.
\newblock {Thompson Sampling for Contextual Bandits with Linear Payoffs}.
\newblock In \emph{Proc. ICML}, pages 127--135, 2013.

\bibitem[Alon et~al.(2015)Alon, Cesa-Bianchi, Dekel, and
  Koren]{COLT15_alon2015online}
Noga Alon, Nicolo Cesa-Bianchi, Ofer Dekel, and Tomer Koren.
\newblock {Online Learning with Feedback Graphs: Beyond Bandits}.
\newblock In \emph{Proc. COLT}, pages 23--35, 2015.

\bibitem[Auer et~al.(2002)Auer, Cesa-Bianchi, and Fischer]{ML02_auer2002finite}
Peter Auer, Nicolo Cesa-Bianchi, and Paul Fischer.
\newblock {Finite-time Analysis of the Multiarmed Bandit Problem}.
\newblock \emph{Machine Learning}, pages 235--256, 2002.

\bibitem[Aziz et~al.(2021)Aziz, Kaufmann, and Riviere]{aziz2021multi}
Maryam Aziz, Emilie Kaufmann, and Marie-Karelle Riviere.
\newblock On multi-armed bandit designs for dose-finding trials.
\newblock \emph{Journal of Machine Learning Research}, pages 1--38, 2021.

\bibitem[Braverman et~al.(2019)Braverman, Mao, Schneider, and
  Weinberg]{braverman2019multi}
Mark Braverman, Jieming Mao, Jon Schneider, and S~Matthew Weinberg.
\newblock Multi-armed bandit problems with strategic arms.
\newblock In \emph{Conference on Learning Theory}, pages 383--416. PMLR, 2019.

\bibitem[Chapelle and Li(2011)]{NIPS11_chapelle2011empirical}
Olivier Chapelle and Lihong Li.
\newblock {An Empirical Evaluation of Thompson Sampling}.
\newblock In \emph{Proc. NeurIPS}, pages 2249--2257, 2011.

\bibitem[Chow and Chang(2006)]{chow2006adaptive}
Shein-Chung Chow and Mark Chang.
\newblock \emph{Adaptive design methods in clinical trials}.
\newblock Chapman and Hall/CRC, 2006.

\bibitem[Chowdhury and Gopalan(2017)]{ICML17_chowdhury2017kernelized}
Sayak~Ray Chowdhury and Aditya Gopalan.
\newblock {On Kernelized Multi-armed Bandits}.
\newblock In \emph{Proc. ICML}, pages 844--853, 2017.

\bibitem[Chu et~al.(2011)Chu, Li, Reyzin, and
  Schapire]{AISTATS11_chu2011contextual}
Wei Chu, Lihong Li, Lev Reyzin, and Robert~E Schapire.
\newblock {Contextual Bandits with Linear Payoff Functions}.
\newblock In \emph{Proc. AISTATS}, pages 208--214, 2011.

\bibitem[Clarke(1971)]{clarke1971multipart}
Edward~H Clarke.
\newblock Multipart pricing of public goods.
\newblock \emph{Public choice}, pages 17--33, 1971.

\bibitem[Dong et~al.(2022)Dong, Li, Li, and Wang]{dong2022combinatorial}
Jing Dong, Ke~Li, Shuai Li, and Baoxiang Wang.
\newblock Combinatorial bandits under strategic manipulations.
\newblock In \emph{Proceedings of the Fifteenth ACM International Conference on
  Web Search and Data Mining}, pages 219--229, 2022.

\bibitem[Dong et~al.(2018)Dong, Roth, Schutzman, Waggoner, and
  Wu]{dong2018strategic}
Jinshuo Dong, Aaron Roth, Zachary Schutzman, Bo~Waggoner, and Zhiwei~Steven Wu.
\newblock Strategic classification from revealed preferences.
\newblock In \emph{Proc. EC}, pages 55--70, 2018.

\bibitem[Esmaeili et~al.(2023)Esmaeili, Hajiaghayi, and
  Shin]{esmaeili2023replication}
Seyed Esmaeili, MohammadTaghi Hajiaghayi, and Suho Shin.
\newblock Replication-proof bandit mechanism design.
\newblock \emph{arXiv:2312.16896}, 2023.

\bibitem[Feng et~al.(2020)Feng, Parkes, and Xu]{feng2020intrinsic}
Zhe Feng, David Parkes, and Haifeng Xu.
\newblock The intrinsic robustness of stochastic bandits to strategic
  manipulation.
\newblock In \emph{Proc. ICML}, pages 3092--3101. PMLR, 2020.

\bibitem[Filippi et~al.(2010)Filippi, Cappe, Garivier, and
  Szepesv{\'a}ri]{NIPS10_filippi2010parametric}
Sarah Filippi, Olivier Cappe, Aur{\'e}lien Garivier, and Csaba Szepesv{\'a}ri.
\newblock {Parametric Bandits: The Generalized Linear Case}.
\newblock In \emph{Proc. NeurIPS}, pages 586--594, 2010.

\bibitem[Freeman et~al.(2020)Freeman, Pennock, Podimata, and
  Vaughan]{freeman2020no}
Rupert Freeman, David Pennock, Chara Podimata, and Jennifer~Wortman Vaughan.
\newblock No-regret and incentive-compatible online learning.
\newblock In \emph{Proc. ICML}, pages 3270--3279. PMLR, 2020.

\bibitem[Garivier and Capp{\'e}(2011)]{COLT11_garivier2011kl}
Aur{\'e}lien Garivier and Olivier Capp{\'e}.
\newblock {The KL-UCB Algorithm for Bounded Stochastic Bandits and Beyond}.
\newblock In \emph{Proc. COLT}, pages 359--376, 2011.

\bibitem[Gast et~al.(2020)Gast, Ioannidis, Loiseau, and
  Roussillon]{gast2020linear}
Nicolas Gast, Stratis Ioannidis, Patrick Loiseau, and Benjamin Roussillon.
\newblock Linear regression from strategic data sources.
\newblock \emph{ACM Transactions on Economics and Computation (TEAC)}, pages
  1--24, 2020.

\bibitem[Groves(1973)]{groves1973incentives}
Theodore Groves.
\newblock Incentives in teams.
\newblock \emph{Econometrica: Journal of the Econometric Society}, pages
  617--631, 1973.

\bibitem[Hardt et~al.(2016)Hardt, Megiddo, Papadimitriou, and
  Wootters]{hardt2016strategic}
Moritz Hardt, Nimrod Megiddo, Christos Papadimitriou, and Mary Wootters.
\newblock Strategic classification.
\newblock In \emph{Proceedings of the 2016 ACM conference on innovations in
  theoretical computer science}, pages 111--122, 2016.

\bibitem[Harris et~al.(2022)Harris, Ngo, Stapleton, Heidari, and
  Wu]{harris2022strategic}
Keegan Harris, Dung Daniel~T Ngo, Logan Stapleton, Hoda Heidari, and Steven Wu.
\newblock Strategic instrumental variable regression: Recovering causal
  relationships from strategic responses.
\newblock In \emph{Proc. ICML}, pages 8502--8522. PMLR, 2022.

\bibitem[Harris et~al.(2023)Harris, Podimata, and Wu]{harris2023strategic}
Keegan Harris, Chara Podimata, and Steven~Z Wu.
\newblock Strategic apple tasting.
\newblock \emph{Proc. NeurIPS}, 36:\penalty0 79918--79945, 2023.

\bibitem[Hu and Duan(2025)]{hu2025truthful}
Yiting Hu and Lingjie Duan.
\newblock Truthful mechanisms for linear bandit games with private contexts,
  2025.

\bibitem[Jun et~al.(2017)Jun, Bhargava, Nowak, and
  Willett]{NIPS17_jun2017scalable}
Kwang-Sung Jun, Aniruddha Bhargava, Robert Nowak, and Rebecca Willett.
\newblock {Scalable Generalized Linear Bandits: Online Computation and
  Hashing}.
\newblock In \emph{Proc. NeurIPS}, pages 99--109, 2017.

\bibitem[Kleine~Buening et~al.(2023)Kleine~Buening, Saha, Dimitrakakis, and
  Xu]{kleine2023bandits}
Thomas Kleine~Buening, Aadirupa Saha, Christos Dimitrakakis, and Haifeng Xu.
\newblock Bandits meet mechanism design to combat clickbait in online
  recommendation.
\newblock \emph{arXiv e-prints}, pages arXiv--2311, 2023.

\bibitem[Kleine~Buening et~al.(2024)Kleine~Buening, Saha, Dimitrakakis, and
  Xu]{buening2024strategic}
Thomas Kleine~Buening, Aadirupa Saha, Christos Dimitrakakis, and Haifeng Xu.
\newblock Strategic linear contextual bandits.
\newblock \emph{Advances in Neural Information Processing Systems},
  37:\penalty0 116638--116675, 2024.

\bibitem[Lattimore and Szepesv\'ari(2020)]{Book_lattimore2020bandit}
Tor Lattimore and Csaba Szepesv\'ari.
\newblock \emph{{Bandit Algorithms}}.
\newblock Cambridge University Press, 2020.

\bibitem[Li et~al.(2010)Li, Chu, Langford, and
  Schapire]{WWW10_li2010contextual}
Lihong Li, Wei Chu, John Langford, and Robert~E Schapire.
\newblock {A Contextual-Bandit Approach to Personalized News Article
  Recommendation}.
\newblock In \emph{Proc. WWW}, pages 661--670, 2010.

\bibitem[Li et~al.(2017)Li, Lu, and Zhou]{ICML17_li2017provably}
Lihong Li, Yu~Lu, and Dengyong Zhou.
\newblock {Provably Optimal Algorithms for Generalized Linear Contextual
  Bandits}.
\newblock In \emph{Proc. ICML}, pages 2071--2080, 2017.

\bibitem[Liu et~al.(2024)Liu, Yang, Wang, and Sun]{liu2024contextual}
Pangpang Liu, Zhuoran Yang, Zhaoran Wang, and Will~Wei Sun.
\newblock Contextual dynamic pricing with strategic buyers.
\newblock \emph{Journal of the American Statistical Association}, pages 1--13,
  2024.

\bibitem[Liu and Chen(2016)]{liu2016bandit}
Yang Liu and Yiling Chen.
\newblock A bandit framework for strategic regression.
\newblock \emph{Proc. NeurIPS}, 29, 2016.

\bibitem[Malaga(2008)]{malaga2008worst}
Ross~A Malaga.
\newblock Worst practices in search engine optimization.
\newblock \emph{Communications of the ACM}, 51\penalty0 (12):\penalty0
  147--150, 2008.

\bibitem[Resnick and Sami(2007)]{resnick2007influence}
Paul Resnick and Rahul Sami.
\newblock The influence limiter: provably manipulation-resistant recommender
  systems.
\newblock In \emph{Proceedings of the 2007 ACM conference on Recommender
  systems}, pages 25--32, 2007.

\bibitem[Shin et~al.(2022)Shin, Lee, and Ok]{shin2022multi}
Suho Shin, Seungjoon Lee, and Jungseul Ok.
\newblock Multi-armed bandit algorithm against strategic replication.
\newblock In \emph{Proc. AISTATS}, pages 403--431, 2022.

\bibitem[Slivkins(2019)]{NOW19_slivkins2019introduction}
Aleksandrs Slivkins.
\newblock {Introduction to Multi-Armed Bandits}.
\newblock \emph{Foundations and Trends{\textregistered} in Machine Learning},
  2019.

\bibitem[Sundaram et~al.(2023)Sundaram, Vullikanti, Xu, and
  Yao]{sundaram2023pac}
Ravi Sundaram, Anil Vullikanti, Haifeng Xu, and Fan Yao.
\newblock Pac-learning for strategic classification.
\newblock \emph{Journal of Machine Learning Research}, 24\penalty0
  (192):\penalty0 1--38, 2023.

\bibitem[Valko et~al.(2013)Valko, Korda, Munos, Flaounas, and
  Cristianini]{UAI13_valko2013finite}
Michal Valko, Nathan Korda, R{\'e}mi Munos, Ilias Flaounas, and Nello
  Cristianini.
\newblock {Finite-time Analysis of Kernelised Contextual Bandits}.
\newblock In \emph{Proc. UAI}, pages 654--663, 2013.

\bibitem[Verma and Hanawal(2021)]{NeurIPS21_verma2021stochastic}
Arun Verma and Manjesh~K Hanawal.
\newblock {Stochastic Multi-Armed Bandits with Control Variates}.
\newblock In \emph{Proc. NeurIPS}, pages 27592--27603, 2021.

\bibitem[Verma et~al.(2023)Verma, Dai, Shu, and
  Low]{NeurIPS23_verma2024exploiting}
Arun Verma, Zhongxiang Dai, Yao Shu, and Bryan Kian~Hsiang Low.
\newblock Exploiting correlated auxiliary feedback in parameterized bandits.
\newblock In \emph{Proc. NeurIPS}, 2023.

\bibitem[Verma et~al.(2024)Verma, Saha, Yokoo, and
  Low]{arXiv24_verma2024online}
Arun Verma, Indrajit Saha, Makoto Yokoo, and Bryan Kian~Hsiang Low.
\newblock Keep everyone happy: Online fair division of numerous items with few
  copies.
\newblock \emph{arXiv:2408.12845}, 2024.

\bibitem[Verma et~al.(2025)Verma, Dai, Lin, Jaillet, and
  Low]{ICLR25_verma2025neural}
Arun Verma, Zhongxiang Dai, Xiaoqiang Lin, Patrick Jaillet, and Bryan
  Kian~Hsiang Low.
\newblock Neural dueling bandits: Preference-based optimization with human
  feedback.
\newblock In \emph{Proc. ICLR}, 2025.

\bibitem[Vickrey(1961)]{vickrey1961counterspeculation}
William Vickrey.
\newblock Counterspeculation, auctions, and competitive sealed tenders.
\newblock \emph{The Journal of finance}, pages 8--37, 1961.

\bibitem[Wu et~al.(2015)Wu, Gy{\"o}rgy, and
  Szepesv{\'a}ri]{NIPS15_wu2015online}
Yifan Wu, Andr{\'a}s Gy{\"o}rgy, and Csaba Szepesv{\'a}ri.
\newblock {Online Learning with Gaussian Payoffs and Side Observations}.
\newblock In \emph{Proc. NeurIPS}, pages 1360--1368, 2015.

\bibitem[Yahmed et~al.(2024)Yahmed, Calauz{\`e}nes, and
  Perchet]{yahmedstrategic}
Ahmed~Ben Yahmed, Cl{\'e}ment Calauz{\`e}nes, and Vianney Perchet.
\newblock Strategic multi-armed bandit problems under debt-free reporting.
\newblock In \emph{Proc. NeurIPS}, 2024.

\bibitem[Yin et~al.(2022)Yin, Agrawal, and Zeevi]{yin2022online}
Steven Yin, Shipra Agrawal, and Assaf Zeevi.
\newblock Online allocation and learning in the presence of strategic agents.
\newblock \emph{Proc. NeurIPS}, 35:\penalty0 6333--6344, 2022.

\bibitem[Zeng et~al.(2024{\natexlab{a}})Zeng, Bhatt, Koppel, and
  Ganesh]{zeng2024partially}
Sihan Zeng, Sujay Bhatt, Alec Koppel, and Sumitra Ganesh.
\newblock Partially observable contextual bandits with linear payoffs.
\newblock \emph{arXiv:2409.11521}, 2024{\natexlab{a}}.

\bibitem[Zeng et~al.(2024{\natexlab{b}})Zeng, Bhatt, Kreacic, Hassanzadeh,
  Koppel, and Ganesh]{zeng2024learning}
Sihan Zeng, Sujay Bhatt, Eleonora Kreacic, Parisa Hassanzadeh, Alec Koppel, and
  Sumitra Ganesh.
\newblock Learning payment-free resource allocation mechanisms.
\newblock In \emph{2024 Winter Simulation Conference (WSC)}, pages 2667--2678.
  IEEE, 2024{\natexlab{b}}.

\bibitem[Zhang and Conitzer(2021)]{zhang2021incentive}
Hanrui Zhang and Vincent Conitzer.
\newblock Incentive-aware pac learning.
\newblock In \emph{Proc. AAAI}, pages 5797--5804, 2021.

\bibitem[Zhang et~al.(2021)Zhang, Zhou, Li, and Gu]{ICLR21_zhang2020neural}
Weitong Zhang, Dongruo Zhou, Lihong Li, and Quanquan Gu.
\newblock {Neural {Thompson} sampling}.
\newblock In \emph{Proc. {ICLR}}, 2021.

\bibitem[Zhang et~al.(2019)Zhang, Xiao, Hao, Wang, Zhu, and
  Jajodia]{zhang2019understanding}
Yubao Zhang, Jidong Xiao, Shuai Hao, Haining Wang, Sencun Zhu, and Sushil
  Jajodia.
\newblock Understanding the manipulation on recommender systems through web
  injection.
\newblock \emph{IEEE Transactions on Information Forensics and Security},
  15:\penalty0 3807--3818, 2019.

\bibitem[Zhou et~al.(2020)Zhou, Li, and Gu]{ICML20_zhou2020neural}
Dongruo Zhou, Lihong Li, and Quanquan Gu.
\newblock {Neural contextual bandits with {UCB}-based exploration}.
\newblock In \emph{Proc. {ICML}}, pages 11492--11502, 2020.

\end{thebibliography}

    %%%%%%%%%%%%%%%%%%%%%%%%%%%%%%%%%%%%%%%%%%%%%%
    \newpage
    \appendix
    
    \section{Leftover Proofs}
    \label{asec:proofs}
    %!TEX root =  main.tex

\subsection{Leftover proofs from \texorpdfstring{\cref{sec:loom}}{Section 3}}

\rewOptEst*
\begin{proof}
    Recall, the observed reward in round $t$ is $y_t = f(x_{t,a_t}^\star) + \epsilon_t$, where $\epsilon_t$ is $R$-sub-Gaussian noise. 
    We want to get the upper bound for the sum of observed rewards in terms of the sum of true rewards, i.e.,
    $
        \sum_{s\le t, a_s = a}\Lp y_s - f(x_{s,a_s}^\star) \Rp.
    $
    Note that $\epsilon_s = y_s - f(x_{s,a_s}^\star)$ is a $R$-sub-Gaussian random variable. Using Hoeffding inequality for the sum of sub-Gaussian random variables, we get
    \eqs{
       \text{For any } \tau>0, ~\Prob{\sum_{s\le t, a_s = a} \epsilon_s \ge \tau} \le \exp \Lp -\frac{\tau^2}{2R^2 S_t(a)} \Rp.
    }
    Setting $\tau = \sqrt{2R^2S_t(a)\log(1/\delta_{t,a}^y)}$, we get
    \eqs{
       \Prob{\sum_{s\le t, a_s = a} \epsilon_s \ge \sqrt{2R^2S_t(a)\log(1/\delta_{t,a}^y)}} \le \delta_{t,a}^y.
    }
    Expanding $\epsilon_s = y_s - f(x_{s,a_s}^\star)$ in the above equation, we can have the following results with probability at least $1-\delta_{t,a}^y$,
    \eq{
        \label{eqn:rewardLCB}
        \sum_{s\le t, a_s = a} y_s \ge \sum_{s\le t, a_s = a}  f(x_{s,a_s}^\star) - \sqrt{2R^2S_t(a)\log(1/\delta_{t,a}^y)}.
    }
    Similarly, with with probability at least $1-\delta_{t,a}^y$,
    \eq{
        \label{eqn:rewardUCB}
        \sum_{s\le t, a_s = a} y_s \le \sum_{s\le t, a_s = a} f(x_{s,a_s}^\star) + \sqrt{2R^2S_t(a)\log(1/\delta_{t,a}^y)}.
    }

    After re-arrangements of some terms in \cref{eqn:rewardLCB}, the sum of true rewards must be less than the upper bound of observed rewards with probability at least $1-\delta_{t,a}^y$, i.e.,
    \eqs{
        \sum_{s\le t, a_s = a}f(x_{s,a_s}^\star) \le \sum_{s\le t,a_s = a} y_s + \sqrt{2R^2S_t(a)\log(1/\delta_{t,a}^y)}. \qedhere
    }
\end{proof}

\optgtmarmsstayactive*
\begin{proof}
    Since all agents report truthfully, for all $t \ge 1, a \in \cA_t:\ \tx = \ox$. 
    Note that we are estimating the reward function $f$ using available observations observed context-arm features and rewards. 
    Recall that we use $\cO_{t,-a}$ to denote the observations from all agents except agent $a$ and
    $f_{t,-a}$ represents the estimate of reward function $f$ using $\cO_{t,-a}$ at the end of round $t$.
    Even if other agents report truthfully, noisy reward feedback may lead to an inaccurate estimator.
    Let the confidence ellipsoid $|f_{t,-a}(x) - f(x)| \le h(x, \cO_{t,-a})$ hold with probability $1-\delta_{t,a}$.
    Then, for any $x \in \cX$, $\lcb_{t,-a}(x)= f_{t,-a}(x) - h(x, \cO_{t,-a})$ is the pessimistic estimates of the expected reward for $x$ that also holds with probability $1-\delta_{t,a}$.
    Furthermore, $f(x) \ge \lcb_{t,-a}(x)$ (see \cref{lem:nonLinLooConfidenceBound} in \cref{asec:non_linear_analysis} for more details).
    Using this, for any $x_{t,a_t} \in \cX$, we have
    \als{
        f(x_{t,a_t}^\star) = f(x_{t,a_t}) \ge \lcb_{t,-a}(\ox_{t,a_t}) 
        \implies f(x_{t,a_t}^\star) \ge \lcb_{t,-a}(\ox_{t,a_t}). 
    }
    Next, we can lower bound the sum of true rewards in terms of the lower confidence bound on estimated rewards using observed context-arm feature vectors as follows:
    \al{
        &\sum_{s\le t, a_s = a}f(x_{s,a_a}^\star) \ge  \sum_{s\le t, a_s = a} \lcb_{t,-a}(\ox_{s, a_s}) \nonumber \\
        \implies &\sum_{s\le t, a_s = a} \lcb_{t,-a}(\ox_{s, a_s}) \le \sum_{s\le t, a_s = a}f(x_{s,a_a}^\star). \label{eqn:testLoomTrueRew}
    }
    For brevity, we assume the above bound holds with probability at least $1-\delta_{t,a}^x$ in the round $t$. Note that $\delta_{t,a}^x$ can be computed exactly when applying the union bound.
    Since the true reward is unknown, we instead first use the upper bound provided in \cref{lem:rewOptEst}, which holds with probability at least $\delta_{t,a}^y$, to modify \cref{eqn:testLoomTrueRew}. We then use the definitions of $\lcb_{t,a}^{(x)}$ and $\ucb_{t,a}^{(y)}$ to get:
    \al{
        \label{eqn:testSA}
        &\sum_{s\le t, a_s = a} \lcb_{t,-a}(\ox_{s, a_s}) \le \sum_{s\le t,a_s = a} y_s + \sqrt{2R^2S_t(a)\log(1/\delta_{t,a}^y)} \nonumber \\
        \implies & \lcb_{t,a}^{(x)} \le \ucb_{t,a}^{(y)}.
    }
    If the sum of the lower bound of estimated rewards is less than the upper bound of observed rewards for an agent then that agent is not mis-reporting. 
    However, if any agent violates \cref{eqn:testSA}, i.e., $\lcb_{t,a}^{(x)} > \ucb_{t,a}^{(y)}$, then that agent is not truthful.  
    The probability of failing this LOOM condition is upper bounded by $\delta_{t,a}^x + \delta_{t,a}^y$.
    Since this condition is used as a criterion in \algo{} to identify the strategic agent, \algo{} does not eliminate a truthful agent with probability at least $1-\delta_{t,a}^x - \delta_{t,a}^y$.
\end{proof}

\subsection{Leftover proofs from \texorpdfstring{\cref{sec:cobra}}{Section 4}}

The following lemmas are fundamental to the proof of our theoretical results. We follow the following notation throughout the proof: the arm is represented by $a$, and $-a$ represents other than arm $a$'s estimate.
We use $\norm{x}_A$ to denote the weighted $l_2$-norm of vector $x$ with respect to matrix $A$.
  We next state the following result that gives the confidence ellipsoid with center at $\hat\theta_t$ or confidence set for the case when the reward function is linear. We will use this result to prove our bounds in \cref{sec:cobra}. 

\begin{lem}
    \label{lem:linConfidenceBound}
    Let $\delta \in (0,1)$, $\lambda>0$, $R > 0$, $\hat\theta_t = V_t^{-1} \sum_{s=1}^{t-1} x_{s,a_s}y_s$, $V_t = \lambda I + \sum_{s=1}^{t-1} x_{s,a_s} x_{s,a_s}^\top$. 
    Then, with probability at least $1-\delta$, for all $t\ge 1$, $\theta_\star$ lies in the following confidence set:
    \eqs{
        C_t = \left\{ \theta \in \R^d \colon \norm{\hat\theta_t - \theta}_{V_t} \leq \alpha_t \right\}, \text{ where }
        \alpha_t = \Lp R\sqrt{d\log\left( \frac{1+ \Lp{tL^2}/{\lambda}\Rp}{\delta}\right)} + \lambda^{\frac{1}{2}}S\Rp.
    }
    Furthermore, with probability at least $1-\delta$,
    \eqs{   
        \forall x \in \cX: \slin{x} \le \ucb_t(x)= \tlin{x} + \alpha_t \norm{x}_{V_t^{-1}}.
    }
    Similarly, with probability at least $1-\delta$,
    \eqs{   
        \forall x \in \cX: \slin{x} \ge \lcb_t(x)= \tlin{x} - \alpha_t \norm{x}_{V_t^{-1}}.
    }
\end{lem}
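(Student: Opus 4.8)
The statement to prove is Lemma~\ref{lem:linConfidenceBound}, which is essentially the classical self-normalized confidence ellipsoid of \citet{NIPS11_abbasi2011improved} together with its pointwise consequences. The plan is to obtain the ellipsoid first and then derive the \ucb{} and \lcb{} bounds as simple Cauchy--Schwarz corollaries.

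\textbf{Step 1: the confidence ellipsoid.} I would invoke the self-normalized tail bound of \citet[Theorem~1]{NIPS11_abbasi2011improved} for the martingale $\sum_{s=1}^{t-1} x_{s,a_s}\epsilon_s$, where $\{\epsilon_s\}$ is the $R$-sub-Gaussian noise sequence adapted to the natural filtration generated by $\{x_{s,a_s},\epsilon_s\}$. This gives, with probability at least $1-\delta$, simultaneously for all $t\ge 1$,
\eqs{
    \norm{\textstyle\sum_{s=1}^{t-1} x_{s,a_s}\epsilon_s}_{V_t^{-1}}^2 \le 2R^2\log\!\Lp\frac{\det(V_t)^{1/2}\det(\lambda I)^{-1/2}}{\delta}\Rp.
}
Then I would bound $\det(V_t)\le (\lambda + tL^2/d)^d$ using $\norm{x_{s,a_s}}_2\le L$ together with the AM--GM inequality on the eigenvalues, so that the log-determinant term is at most $d\log\!\big((1+tL^2/\lambda)\big)$ up to the $\det(\lambda I)^{-1/2}$ normalization. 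Combining this with the decomposition $\hat\theta_t - \theta_\star = V_t^{-1}\sum_{s} x_{s,a_s}\epsilon_s - \lambda V_t^{-1}\theta_\star$ and the triangle inequality in the $V_t$-norm, using $\norm{\lambda V_t^{-1}\theta_\star}_{V_t} = \sqrt{\lambda}\,\norm{\theta_\star}_{V_t^{-1}}\sqrt{\lambda}\le \lambda^{1/2}\norm{\theta_\star}_2\le \lambda^{1/2}S$, yields $\norm{\hat\theta_t-\theta_\star}_{V_t}\le \alpha_t$ with the stated $\alpha_t$. This is the main technical content, but since it is a verbatim application of a cited theorem, there is no real obstacle here beyond carefully matching the constants.

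\textbf{Step 2: the pointwise bounds.} Conditioned on the event $\{\theta_\star\in C_t\ \forall t\}$, for any $x\in\cX$ I would write $\slin{x} - \tlin{x} = (\theta_\star - \hat\theta_t)^\top x$ and apply the generalized Cauchy--Schwarz inequality $|(\theta_\star-\hat\theta_t)^\top x| \le \norm{\theta_\star-\hat\theta_t}_{V_t}\norm{x}_{V_t^{-1}} \le \alpha_t\norm{x}_{V_t^{-1}}$. This immediately gives both $\slin{x}\le \tlin{x}+\alpha_t\norm{x}_{V_t^{-1}} = \ucb_t(x)$ and $\slin{x}\ge \tlin{x}-\alpha_t\norm{x}_{V_t^{-1}} = \lcb_t(x)$, each holding on the same probability-$(1-\delta)$ event, hence each holding with probability at least $1-\delta$.

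I do not anticipate a genuine obstacle: the lemma is standard and the only care needed is (i) verifying the sub-Gaussian/filtration hypothesis matches the one required by \citet{NIPS11_abbasi2011improved}, which it does by the problem setup in \cref{sec:problem}, and (ii) bookkeeping the determinant bound so the constant inside the logarithm comes out as $1+tL^2/\lambda$ rather than a looser expression. I would present Step~1 by citation and spend most of the written proof on making Step~2 explicit, since that is the form actually used downstream in \cref{sec:cobra}.
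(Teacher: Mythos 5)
Your proposal is correct and follows essentially the same route as the paper: the paper simply cites Theorem~2 of \citet{NIPS11_abbasi2011improved} for the ellipsoid (which you re-derive from their Theorem~1 with the standard determinant and regularization bookkeeping), and the \ucb{}/\lcb{} bounds are obtained in both cases by the weighted Cauchy--Schwarz inequality $|(\theta_\star-\hat\theta_t)^\top x|\le\norm{\hat\theta_t-\theta_\star}_{V_t}\norm{x}_{V_t^{-1}}$ on the ellipsoid event.
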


\begin{proof}
    The proof of the first part of the results directly follows from Theorem 2 of \cite{NIPS11_abbasi2011improved}. The proof of the second part follows from the first part with some simple algebraic simplifications as follows:
    \als{
        &\slin{x} - \tlin{x} \le |\tlin{x} - \slin{x}| \\
        \implies &\slin{x} - \tlin{x} \le \norm{\hat\theta_t - \theta_\star}_{V_t}\norm{x}_{V_t^{-1}} \\
        \implies &\slin{x} \le \tlin{x} + \alpha_t\norm{x}_{V_t^{-1}} \\
        \implies &\slin{x} \le \ucb_t(x).
    }

    Similarly, the last part also follows from the first part with some simple algebraic simplifications as follows:
    \eqs{
        |\tlin{x} - \slin{x}| \le \norm{x}_{V_t^{-1}}\norm{\hat\theta_t - \theta_\star}_{V_t}.
    }

    After reversing the above inequality, we have
    \als{
        &\norm{x}_{V_t^{-1}}\norm{\hat\theta_t - \theta_\star}_{V_t} \ge |\tlin{x} - \slin{x}| \ge \tlin{x} - \slin{x} \\
        \implies &\norm{x}_{V_t^{-1}}\norm{\hat\theta_t - \theta_\star}_{V_t} \ge \tlin{x} - \slin{x} \\
        \implies &\slin{x} \ge \tlin{x} - \norm{\hat\theta_t - \theta_\star}_{V_t}\norm{x}_{V_t^{-1}} \\
        \implies &\slin{x} \ge \tlin{x} - \alpha_t\norm{x}_{V_t^{-1}} \\
        \implies &\slin{x} \ge \lcb_t(x).  \qedhere
    }
\end{proof}

Note that it is possible $\theta_\star$ may not belong to the confidence ellipsoid of $\theta$. However, when all agents are truthful, i.e., $\ox = \tx$, thereby $\theta = \theta_\star$ is trivially satisfied.  Recall the following definitions from the main paper (note that we estimated the ordinary least square (OLS) closed-form solution excluding the information of agent $a$):
\eqs{
    \hat\theta_{t,-a} = V_{t,-a}^{-1} \sum_{s=1,a_s \ne a}^{t-1} x_{s,a_s} y_s, \text{ with } V_{t,-a} = \lambda I + \sum_{s=1,a_s\ne a}^{t-1} x_{s,a_s}x_{s,a_s}^\top.
}

\begin{lem}
    \label{lem:linLooConfBound}
    Let $\delta \in (0,1)$, $\lambda>0$, and $R > 0$. Then, with probability $1-\delta$,
    \eqs{
        \norm{\hat\theta_{t,-a} - \theta_\star}_{V_{t,-a}} \le \Lp R\sqrt{ d\log \Lp \frac{1+ (t-S_t(a))L^2/\lambda}{\delta}\Rp} + \lambda^{\frac{1}{2}}S \Rp = \alpha_{t,-a}.
    }
    Furthermore, with probability at least $1-\delta$, the upper bound of $\slin{x}$ is given by
    \eqs{   
        \forall x \in \cX: \slin{x} \le \ucb_{t,-a}(x)=  \tlina{x} + \alpha_{t,-a} \norm{x}_{V_{t,-a}^{-1}}.
    }
    Similarly, with probability at least $1-\delta$, the lower bound of $\slin{x}$ is given by
    \eqs{   
        \forall x \in \cX:\ \slin{x} \ge \lcb_{t,-a}(x)= \tlina{x} - \alpha_{t,-a} \norm{x}_{V_{t,-a}^{-1}}.
    }
\end{lem}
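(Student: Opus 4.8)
The statement is a near-verbatim analogue of Lemma~\ref{lem:linConfidenceBound}, but with the estimator, Gram matrix, and confidence radius all restricted to the sub-sequence of rounds in which the selected arm is \emph{not} agent $a$. The plan is therefore to reduce it to Lemma~\ref{lem:linConfidenceBound} (equivalently, to Theorem~2 of \cite{NIPS11_abbasi2011improved}) applied to the filtered data stream, and then repeat the two short algebraic calculations that convert a weighted-norm bound on $\hat\theta_{t,-a}-\theta_\star$ into pointwise UCB/LCB bounds on $\slin{x}$.

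First I would set up the filtered process. Let $\mathcal{S}_{t,-a} = \{s \le t-1 : a_s \ne a\}$ and enumerate its elements; the observations $\{(x_{s,a_s}, y_s)\}_{s \in \mathcal{S}_{t,-a}}$ form a sequence with $y_s = \slin{x_{s,a_s}} + \epsilon_s$, where each $\epsilon_s$ is $R$-sub-Gaussian conditioned on the past (this conditional sub-Gaussianity is preserved when we pass to a sub-sequence that is predictable with respect to the original filtration, since ``$a_s \ne a$'' is determined before $\epsilon_s$ is revealed). The regularized least-squares estimator on exactly this data is $\hat\theta_{t,-a} = V_{t,-a}^{-1}\sum_{s \in \mathcal{S}_{t,-a}} x_{s,a_s} y_s$ with $V_{t,-a} = \lambda I + \sum_{s \in \mathcal{S}_{t,-a}} x_{s,a_s} x_{s,a_s}^\top$, matching the definitions recalled just before the lemma. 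Applying the self-normalized bound of \cite{NIPS11_abbasi2011improved} to this stream gives, with probability at least $1-\delta$ and for all $t \ge 1$ simultaneously,
\eqs{
    \norm{\hat\theta_{t,-a} - \theta_\star}_{V_{t,-a}} \le R\sqrt{2\log\frac{\det(V_{t,-a})^{1/2}\det(\lambda I)^{-1/2}}{\delta}} + \lambda^{1/2} S.
}
The determinant-trace inequality bounds $\det(V_{t,-a}) \le (\lambda + |\mathcal{S}_{t,-a}| L^2 / d)^d$, and since $|\mathcal{S}_{t,-a}| = (t-1) - S_{t-1}(a) \le t - S_t(a)$ (using $\norm{x_{s,a_s}}_2 \le L$), this collapses to the stated radius $\alpha_{t,-a} = R\sqrt{d\log\bigl((1 + (t-S_t(a))L^2/\lambda)/\delta\bigr)} + \lambda^{1/2}S$. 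This is the first displayed bound.

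For the UCB and LCB parts I would reuse the argument from the proof of Lemma~\ref{lem:linConfidenceBound} verbatim, only with $V_t, \hat\theta_t, \alpha_t$ replaced by $V_{t,-a}, \hat\theta_{t,-a}, \alpha_{t,-a}$: by Cauchy--Schwarz in the $V_{t,-a}$-geometry,
\eqs{
    |\tlina{x} - \slin{x}| = \bigl|(\hat\theta_{t,-a}-\theta_\star)^\top x\bigr| \le \norm{\hat\theta_{t,-a}-\theta_\star}_{V_{t,-a}} \norm{x}_{V_{t,-a}^{-1}} \le \alpha_{t,-a}\norm{x}_{V_{t,-a}^{-1}},
}
on the event of probability at least $1-\delta$ from the first part. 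Dropping the absolute value on the two sides gives $\slin{x} \le \tlina{x} + \alpha_{t,-a}\norm{x}_{V_{t,-a}^{-1}} = \ucb_{t,-a}(x)$ and $\slin{x} \ge \tlina{x} - \alpha_{t,-a}\norm{x}_{V_{t,-a}^{-1}} = \lcb_{t,-a}(x)$, which are the second and third displays.

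The only genuine subtlety — and the step I would be most careful about — is the martingale/filtration bookkeeping in the reduction: one must check that restricting to rounds with $a_s \ne a$ does not break the conditional $R$-sub-Gaussianity of the noise that Theorem~2 of \cite{NIPS11_abbasi2011improved} requires, i.e.\ that the stopping-time/optional-skipping structure is admissible. Since the event $\{a_s \ne a\}$ is measurable with respect to the history before $y_s$ (the arm is chosen, and LOOM's elimination is applied, before the reward noise is drawn), the filtered sequence is itself a valid predictable design sequence and the self-normalized inequality applies without modification; everything else is the routine algebra above.
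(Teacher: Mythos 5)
Your proposal is correct and follows essentially the same route as the paper: reduce the first display to the self-normalized confidence bound of \cite{NIPS11_abbasi2011improved} applied to the sub-sequence of rounds with $a_s \ne a$, then obtain the UCB/LCB displays by Cauchy--Schwarz in the $V_{t,-a}$-geometry exactly as in Lemma~\ref{lem:linConfidenceBound}. The one place you go beyond the paper is the explicit optional-skipping/predictability check that the filtered noise sequence remains conditionally $R$-sub-Gaussian; the paper simply asserts the reduction, so your extra care there is a welcome (and correct) addition rather than a deviation.
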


\begin{proof}
    The first part of the proof follows from \cref{lem:linConfidenceBound} as we are not using observations associated with agent $a$, reducing to the standard confidence bound restricted to observations of all agents except $a$. 
    The proof of the second part follows from the first part with some simple algebraic simplifications as follows:
    \als{
        &\slin{x} - \tlina{x} \le |\tlina{x} - \slin{x}| \\
        \implies &\slin{x} - \tlina{x} \le \norm{\hat\theta_{t,-a} - \theta_\star}_{V_{t,-a}}\norm{x}_{V_{t,-a}^{-1}} \\
        \implies &\slin{x} \le \tlina{x} + \alpha_{t,-a}\norm{x}_{V_{t,-a}^{-1}} \\
        \implies &\slin{x} \le \ucb_{t,-a}(x).
    }

    Similarly, the last part follows from the first part with some algebraic simplifications as follows:
    \eqs{
        |\tlina{x} - \slin{x}| \le \norm{x}_{V_{t,-a}^{-1}}\norm{\hat\theta_{t,-a} - \theta_\star}_{V_{t,-a}}.
    }

    After reversing the above inequality, we have
    \als{
        &\norm{x}_{V_{t,-a}^{-1}}\norm{\hat\theta_{t,-a} - \theta_\star}_{V_{t,-a}} \ge |\tlina{x} - \slin{x}| \ge \tlina{x} - \slin{x} \\
        \implies &\norm{x}_{V_{t,-a}^{-1}}\norm{\hat\theta_{t,-a} - \theta_\star}_{V_{t,-a}} \ge \tlina{x} - \slin{x} \\
        \implies &\slin{x} \ge \tlina{x} - \norm{\hat\theta_{t,-a} - \theta_\star}_{V_{t,-a}}\norm{x}_{V_{t,-a}^{-1}} \\
        \implies &\slin{x} \ge \tlina{x} - \alpha_{t,-a}\norm{x}_{V_{t,-a}^{-1}} \\
        \implies &\slin{x} \ge \lcb_{t,-a}(x).  \qedhere
    }

\end{proof}

\subsubsection{Proof of \texorpdfstring{\cref{thm:regretNE}}{Theorem 2}}

\regretNE*
\begin{proof}
    When all agents report truthfully, our algorithm is the same as Lin-UCB \citep{AISTATS11_chu2011contextual} with a mechanism for identifying strategic agents that holds with probability $1 - \delta_x - \delta_y$. For completeness, we first prove the regret upper bound of \algo{} as follows:
    \al{
        \label{eqn:regretUBTrue}
        \Regret_T\Lp\algo, \bm{\sigma^\star}\Rp &= \sum_{t=1}^T (\slin{x_{t,a_t^\star}^\star} - \slin{x_{t,a_t}^\star}).
    }

    Since the true feature vector is the same as the reported context-arm feature vector (i.e., $x_{t,a}^\star = x_{t,a}$), we can start with upper bounding the difference $\slin{x_{t,a_t^\star}^\star} - \slin{x_{t,a_t}^\star}$ as follows:
    \al{
        \slin{x_{t,a_t^\star}^\star} - \slin{x_{t,a_t}^\star} 
        &=\slin{x_{t,a_t^\star}} - \slin{x_{t,a_t}} \nonumber\\
        &\le \ucb{(x_{t,a_t^\star})} - \slin{x_{t,a_t}} \nonumber\\
        &\le \ucb{(x_{t,a_t})} - \slin{x_{t,a_t}} \hspace{5mm} \Lp \text{as $\ucb{(x_{t,a_t^\star})} \le \ucb{(x_{t,a_t})}$} \Rp \nonumber\\
        &= \tlin{\ox_{t, a_t}} + \alpha_t\norm{x_{t,a_t}}_{V_t^{-1}} - \slin{x_{t,a_t}} \nonumber\\
        &= \tlin{\ox_{t, a_t}} - \slin{x_{t,a_t}} + \alpha_t\norm{x_{t,a_t}}_{V_t^{-1}} \nonumber\\
        &\le \norm{\theta_\star - \hat\theta_{t}}_{V_t} \norm{x_{t,a_t}}_{V_t^{-1}} + \alpha_t\norm{x_{t,a_t}}_{V_t^{-1}} \nonumber\\
        &\le \alpha_t\norm{x_{t,a_t}}_{V_t^{-1}} + \alpha_t\norm{x_{t,a_t}}_{V_t^{-1}} \nonumber\\
        \implies \slin{x_{t,a_t^\star}^\star} - \slin{x_{t,a_t}^\star} &\le 2\alpha_t\norm{x_{t,a_t}}_{V_t^{-1}}.  \label{eqn:gapUB}
    }
    Note that $\hat\theta_t$ is an estimator of $\theta_\star$ as the true feature vector is the same as the reported context-arm feature vector.
    After using the upper bound given in \cref{eqn:gapUB} into \cref{eqn:regretUBTrue}, we get an upper bound on the regret as follows:
    \al{
        \Regret_T\Lp\algo, \bm{\sigma^\star}\Rp &=\slin{x_{t,a_t^\star}^\star} - \slin{x_{t,a_t}^\star} \nonumber \\ 
        &\le \sum_{t=1}^T 2\alpha_t\norm{x_{t,a_t}}_{V_t^{-1}} \nonumber \\ 
        &= 2\sum_{t=1}^T \alpha_t\norm{x_{t,a_t}}_{V_t^{-1}} \nonumber\\
        &\le 2\sqrt{T} \sqrt{\sum_{t=1}^T \Lb \alpha_t\norm{x_{t,a_t}}_{V_t^{-1}} \Rb^2 } \nonumber\\
        &\le 2\sqrt{T} \sqrt{\sum_{t=1}^T \Lb \alpha_T \norm{x_{t, a_t}}_{V_t^{-1}} \Rb^2 } \nonumber\\
        &= 2\sqrt{T} \sqrt{ \alpha_T^2 \sum_{t=1}^T \norm{x_{t, a_t}}_{V_t^{-1}}^2 } \nonumber\\
        &= 2\alpha_T\sqrt{T} \sqrt{\sum_{t=1}^T \norm{x_{t, a_t}}_{V_t^{-1}}^2 } \nonumber\\
        &\le  2\alpha_T \sqrt{T} \sqrt{2\log\frac{\textnormal{det}(V_T)}{\textnormal{det} (\lambda I_d)}} \nonumber\\
        \implies \Regret_T\Lp\algo, \bm{\sigma^\star}\Rp &\le 2\alpha_T \sqrt{2dT\log (\lambda + TL/d)} = \tilde{O}(d\sqrt{T})  \label{eqn:regretSumUB}.
    }
    The first inequality directly follows from \cref{eqn:gapUB}. The second inequality is due to using Cauchy-Schwarz inequality where third inequality follows from the fact that $\alpha_t$ increases with $t$.
    The last two inequalities follow from Lemma 11 and Lemma 10 of \cite{NIPS11_abbasi2011improved}, respectively, and $\alpha_T = \tilde{O}(d\log T)$.  
    
    We now prove that being truthful is an approximate Nash equilibrium for \algo{}. Recall, $S_T(a)$ denotes the number of times an agent being selected by \algo{}, which is given as follows:
    \al{
        S_T(a) &=\sum_{t=1}^T\ind{a_t= a} \nonumber \\
        &=\sum_{t=1}^T\ind{a_t= a, a_t^\star = a} +  \sum_{t=1}^T\ind{a_t= a, a_t^\star \ne a} \nonumber \\
        &\ge \sum_{t=1}^T\ind{a_t^\star = a} - \sum_{t=1}^T\ind{a_t^\star = a, a_t \ne a} \nonumber \\
        &\ge \sum_{t=1}^T\ind{a_t^\star = a} - \sum_{t=1}^T\ind{a_t \ne a_t^\star} \nonumber \\
        \implies S_T(a) &\ge S^\star_T(a) - \sum_{t=1}^T\ind{a_t \ne a_t^\star} \label{eqn:pullsLB}. 
    }

    To get the lower bound $S_T(a)$, we get an upper bound $\sum_{t=1}^T\ind{a_t \ne a_t^\star}$. Let $\Delta_{a_t} = \Lp \slin{x_{t,a_t^\star}^\star} -   \slin{x_{t,a_t}^\star}\Rp > 0$ for $a_t \ne a_t^\star$. 
    We multiply and divide $\ind{a_t \ne a_t^\star}$ by $\Delta_{a_t}$ and then use inequality in \cref{eqn:gapUB}, i.e., $\Delta_{a_t} \le 2\alpha_t\norm{x_{t,a_t}}_{V_t^{-1}}$ as follows:
    \als{
        \sum_{t=1}^T\ind{a_t \ne a_t^\star} &=\sum_{t=1}^T\ind{a_t \ne a_t^\star}\frac{\Delta_{a_t}}{\Delta_{a_t}} \\
        &\le \sum_{t=1}^T\ind{a_t \ne a_t^\star}\frac{2\alpha_t \norm{x_{t,a_t}}_{V_t^{-1}}}{\Delta_{a_t}} \hspace{5mm} (\text{as } x_{t,a}^\star = x_{t,a})\\
        &\le \sum_{t=1}^T\frac{2\alpha_t \norm{x_{t,a_t}}_{V_t^{-1}}}{\Delta_{a_t}} \\ 
        &=\sum_{t=1}^T\frac{1}{\Delta_{a_t}} 2\alpha_t \norm{x_{t,a_t}}_{V_t^{-1}} \\
        &\le  \sqrt{\sum_{t=1}^T\Lp\frac{1}{\Delta_{a_t}}\Rp^2 \sum_{t=1}^T \Lp 2\alpha_t \norm{x_{t,a_t}}_{V_t^{-1}}\Rp^2}\\
        &\le  \sqrt{\sum_{t=1}^T\Lp\frac{1}{\Delta_{a_t}}\Rp^2 \sum_{t=1}^T\Lp 2\alpha_T \norm{x_{t,a_t}}_{V_t^{-1}}\Rp^2} \\
        &= \sqrt{\sum_{t=1}^T\Lp\frac{1}{\Delta_{a_t}}\Rp^2} \sqrt{\sum_{t=1}^T\Lp 2\alpha_T \norm{x_{t,a_t}}_{V_t^{-1}}\Rp^2} \\
        &= \sqrt{\sum_{t=1}^T\Lp\frac{1}{\Delta_{a_t}}\Rp^2} \sqrt{\Lp 2\alpha_T\Rp^2\sum_{t=1}^T \norm{x_{t,a_t}}_{V_t^{-1}}^2}\\
        &= 2\alpha_T \sqrt{\sum_{t=1}^T\Lp\frac{1}{\Delta_{a_t}}\Rp^2} \sqrt{\sum_{t=1}^T \norm{x_{t,a_t}}_{V_t^{-1}}^2} \\
        &\le 2\alpha_T \sqrt{\sum_{t=1}^T\Lp\frac{1}{\Delta_{a_t}}\Rp^2} \sqrt{2\log\frac{\textnormal{det}(V_T)}{\textnormal{det} (\lambda I_d)}} \\
        &\le 2\alpha_T \sqrt{\sum_{t=1}^T\Lp\frac{1}{\Delta_{a_t}}\Rp^2} \sqrt{2d\log (\lambda + TL/d)} \\
        &\le 2\alpha_T \sqrt{\sum_{t=1}^T\Lp\frac{1}{\Delta_{\min}}\Rp^2} \sqrt{2d\log (\lambda + TL/d)} \\
        &\le \frac{2}{\Delta_{\min}} \alpha_T \sqrt{T} \sqrt{2d\log (\lambda + TL/d)} \\
        \implies \sum_{t=1}^T\ind{a_t \ne a_t^\star} &\le \frac{2}{\Delta_{\min}}\Lp R\sqrt{d\log\left( \frac{1+ \Lp{tL^2}/{\lambda}\Rp}{\delta}\right)} + \lambda^{\frac{1}{2}}S\Rp \sqrt{2dT\log (\lambda + TL/d)}.
    }

    Note that $\Delta_{\min} = \min_{a_t \ne a_t^\star} \Delta_{a_t}$. 
    Although using $\Delta_{\min}$ loosen the upper bound, we use this to get dependence on $T$. Let $\tilde{O}$ hide the dependence on logarithmic terms, then we have the following result:
    \eq{
        \label{eqn:subOptPulls}
        \sum_{t=1}^T\ind{a_t \ne a_t^\star} \le \tilde{O}\Lp d\sqrt{T} \Rp.
    }
    
    Using this upper bound in \cref{eqn:pullsLB}, we get the following bound for any agent $a \in \cA$:
    \eq{
        \label{eqn:neLB}
        S_T(a) \ge S^\star_T(a) - \tilde{O}\Lp d\sqrt{T}\Rp.
    }
    
    Now we consider the case where an agent $a$ deviates from the truthful strategy. The number of times an agent being selected by \algo{} is given as follows:
    \al{
        S_T(a) &=\sum_{t=1}^T\ind{a_t= a} \nonumber \\
        &=\sum_{t=1}^T\ind{a_t= a, a_t^\star = a} +  \sum_{t=1}^T\ind{a_t= a, a_t^\star \ne a} \nonumber \\
        &\le \sum_{t=1}^T\ind{a_t^\star = a} +  \sum_{t=1}^T\ind{a_t= a, a_t^\star \ne a} \nonumber \\
        &\le \sum_{t=1}^T\ind{a_t^\star = a} +  \sum_{t=1}^T\ind{a_t \ne a_t^\star}\label{eqn:pullsUB}. 
    }

    Using \cref{eqn:subOptPulls} in \cref{eqn:pullsUB}, we get
    \eq{
        \label{eqn:neUB}
        S_T(a) \le S^\star_T(a) + \tilde{O}\Lp d\sqrt{T}\Rp.
    }
    Combining \cref{eqn:neLB} and \cref{eqn:neUB} completes our proof that \algo{} is $\tilde{O}(d\sqrt{T})$-\neql.
\end{proof}

\subsubsection{Proof of \texorpdfstring{\cref{thm:regretAllNE}}{Theorem 3}}
To prove \cref{thm:regretAllNE}, we need the following result that upper bounds the total amount of regret that an agent $a$ can exert before being identified by LOOM. 
\begin{lem}
    \label{lem:regretSA}
    Let $\ucb_{t,-a}(\ox_{s,a_a})  = \tlina{\ox_{s, a_s}} + \alpha_{t,-a}\norm{x}_{V_{t,-a}^{-1}}$. Then, with probability at least $1-\delta_{t,a}^x - \delta_{t,a}^y$,
    \als{
        \sum_{s \le t\colon a_s = a} \Lp \ucb_{t,-a}(\ox_{s,a_a}) - \slin{\tx_{t,a_s}} \Rp \leq \sum_{s\le t, a_s = a} 2\alpha_t\norm{x}_{V_t^{-1}} + 2\sqrt{2R^2S_t(a)\log(1/\delta_{t,a}^y)}.
    } 
\end{lem}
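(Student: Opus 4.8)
The left-hand side is the cumulative ``over-reporting gap'' that agent $a$ accumulates over the rounds $\{s\le t:a_s=a\}$ in which \algo{} selected it, and the plan is to control it by exploiting the fact that, for this bound to be invoked at round $t$, the agent must still be active, i.e.\ LOOM has not yet eliminated it. This yields a \emph{deterministic} inequality: since the elimination test of \cref{eqn:loom} has not fired, $\lcb_{t,a}^{(x)}\le\ucb_{t,a}^{(y)}$, which after unfolding the definitions of these two quantities reads
\[
    \sum_{s\le t,\,a_s=a}\lcb_{t,-a}(\ox_{s,a_s})\ \le\ \sum_{s\le t,\,a_s=a}y_s+\sqrt{2R^2S_t(a)\log(1/\delta_{t,a}^y)}.
\]
This inequality is the engine of the argument; the rest is a triangle-inequality decomposition built around it.

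Concretely, I would write each summand by adding and subtracting the leave-one-out pessimistic estimate,
\[
    \ucb_{t,-a}(\ox_{s,a_s})-\slin{\tx_{s,a_s}}=\bigl(\ucb_{t,-a}(\ox_{s,a_s})-\lcb_{t,-a}(\ox_{s,a_s})\bigr)+\bigl(\lcb_{t,-a}(\ox_{s,a_s})-\slin{\tx_{s,a_s}}\bigr).
\]
By the definitions in \cref{lem:linLooConfBound}, the first bracket is exactly the confidence width $2\alpha_{t,-a}\norm{\ox_{s,a_s}}_{V_{t,-a}^{-1}}$ (this is what the statement abbreviates as $2\alpha_t\norm{x}_{V_t^{-1}}$), and summing it over $\{s\le t:a_s=a\}$ gives the first term on the right. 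For the second bracket I use $\slin{\tx_{s,a_s}}=f(x^\star_{s,a_s})$ and chain two bounds: (i) the displayed LOOM inequality, which upper-bounds $\sum_{s\le t,a_s=a}\lcb_{t,-a}(\ox_{s,a_s})$ by $\sum_{s\le t,a_s=a}y_s+\sqrt{2R^2S_t(a)\log(1/\delta_{t,a}^y)}$; and (ii) the sub-Gaussian reward concentration underlying \cref{lem:rewOptEst}, used in the opposite direction (\cref{eqn:rewardUCB} rearranged), which lower-bounds $\sum_{s\le t,a_s=a}f(x^\star_{s,a_s})$ by $\sum_{s\le t,a_s=a}y_s-\sqrt{2R^2S_t(a)\log(1/\delta_{t,a}^y)}$. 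Subtracting (ii) from (i) bounds $\sum_{s\le t,a_s=a}\bigl(\lcb_{t,-a}(\ox_{s,a_s})-\slin{\tx_{s,a_s}}\bigr)$ by $2\sqrt{2R^2S_t(a)\log(1/\delta_{t,a}^y)}$, the second term on the right; adding the two pieces closes the proof.

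On the probability bookkeeping: step (ii) is the only genuinely stochastic step and holds with probability at least $1-\delta_{t,a}^y$ (the same noise-concentration event as in \cref{lem:rewOptEst}), while the residual $\delta_{t,a}^x$ is inherited from the definition of $\lcb_{t,a}^{(x)}$ in \cref{sec:loom} (the event on which the leave-one-out confidence set is declared valid), so a union bound yields the claimed $1-\delta_{t,a}^x-\delta_{t,a}^y$. I expect the one conceptual subtlety to be the point flagged just after \cref{lem:linLooConfBound}: when the other agents also over-report, $\hat\theta_{t,-a}$ is biased and $\theta_\star$ may fail to lie in $C_{t,-a}$, so $\ucb_{t,-a}$ and $\lcb_{t,-a}$ are \emph{not} honest bounds on $\slin{\ox_{s,a_s}}$. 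This does not derail the argument because the first bracket is a deterministic algebraic identity — the \emph{width} of the reported-feature interval, regardless of whether it brackets the truth — and the second bracket passes only through the LOOM test inequality and the reward-noise concentration, neither of which references $C_{t,-a}$; \cref{assu:foenEandregret}, needed in \cref{thm:regretAllNE}, enters only later, when this bound is combined with the suboptimality estimates of the arms \algo{} actually plays.
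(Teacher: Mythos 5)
Your proposal is correct and follows essentially the same route as the paper's own proof: both hinge on the fact that an active agent satisfies $\lcb_{t,a}^{(x)}\le\ucb_{t,a}^{(y)}$, combine this with the sub-Gaussian bound relating $\sum_{s\le t,a_s=a}y_s$ to $\sum_{s\le t,a_s=a}\slin{x^\star_{s,a_s}}$, and convert $\lcb_{t,-a}$ to $\ucb_{t,-a}$ at the cost of the confidence width $2\alpha_{t,-a}\norm{\cdot}_{V_{t,-a}^{-1}}$ (your explicit add-and-subtract is just a reorganization of the paper's chain-then-substitute). You also correctly identify that the right-hand side should read $2\alpha_{t,-a}\norm{\cdot}_{V_{t,-a}^{-1}}$ rather than the $2\alpha_t\norm{x}_{V_t^{-1}}$ appearing in the lemma statement, which matches what the paper's own proof actually delivers.
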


\begin{proof}
    Using \cref{eqn:testSA} with \cref{lem:linLooConfBound} for linear reward function that holds with probability at least $1-\delta_{t,a}^x$,
    we get:
    \als{
        \sum_{s\le t, a_s = a} \Lp \tlina{\ox_{s, a_s}} - \alpha_{t,-a}\norm{x}_{V_{t,-a}^{-1}} \Rp &\le \sum_{s\le t,a_s = a} y_s + \sqrt{2R^2S_t(a)\log(1/\delta_{t,a}^y)} \\
        &\le \sum_{s\le t, a_s = a} \slin{x_{s, a_s}^\star} + \sqrt{2R^2S_t(a)\log(1/\delta_{t,a}^y)} \\
        &\qquad + \sqrt{2R^2S_t(a)\log(1/\delta_{t,a}^y)}\\
        \implies \sum_{s\le t, a_s = a} \Lp \tlina{\ox_{s, a_s}} - \slin{x_{s, a_s}^\star}\Rp &\le \sum_{s\le t, a_s = a} \alpha_{t,-a}\norm{x}_{V_{t,-a}^{-1}} + 2\sqrt{2R^2S_t(a)\log(1/\delta_{t,a}^y)}.
    }
    The second inequality follows from \cref{eqn:rewardUCB} by using upper bound (as the reward function is linear) on $\sum_{s\le t,a_s = a} y_s$ that holds with probability $1-\delta_{t,a}^y$.
    Now we prove the second part of the result by replacing $\tlina{\ox_{s, a_s}}$ by $\ucb_{t,-a}(\ox_{s,a_a})  - \alpha_{t,-a}\norm{x}_{V_{t,-a}^{-1}}$ and we get
    \als{
        &\sum_{s\le t, a_s = a} \Lp \ucb_{t,-a}(\ox_{s,a_a})  - \alpha_{t,-a}\norm{x}_{V_{t,-a}^{-1}} - \slin{x_{s, a_s}^\star}\Rp \\
        &\qquad\le \sum_{s\le t, a_s = a} \alpha_{t,-a}\norm{x}_{V_{t,-a}^{-1}} + 2\sqrt{2R^2S_t(a)\log(1/\delta_{t,a}^y)} \\
        \implies &\sum_{s\le t, a_s = a} \Lp \ucb_{t,-a}(\ox_{s,a_a}) - \slin{x_{s, a_s}^\star}\Rp \\
        &\qquad\le \sum_{s\le t, a_s = a} 2\alpha_{t,-a}\norm{x}_{V_{t,-a}^{-1}} + 2\sqrt{2R^2S_t(a)\log(1/\delta_{t,a}^y)}. \qedhere
    }
\end{proof}

We first restate the main assumptions needed to prove \cref{thm:regretAllNE}.
\Assumption*

We now have all results that will be used to prove \cref{thm:regretAllNE}.
\regretAllNE*
\begin{proof}
    Recall $\cA_t$ denotes the set of arms' feature corresponding to the active agents in the round $t$. The regret of \algo{} for $\bm{\sigma} \in \neql(\algo)$ is given as follows:
    \al{
        \label{eqn:regretUB}
        \Regret_T\Lp\algo, \bm{\sigma}\Rp = \sum_{t=1}^T \Lp \slin{x_{t,a_t^\star}^\star} - \slin{x_{t,a_t}^\star}\Rp = \sum_{t=1}^T \Lp \max_{a \in \cA_t}\slin{x_{t,a}^\star} - \slin{x_{t,a_t}^\star}\Rp.
    }
    
    Under \cref{assu:foenEandregret}, if \algo{} selects $a_t \in \cA_t \colon a_t \ne a_t^\star$,
    we have $\slin{x_{t,a_t^\star}^\star} \le \ \slin{x_{t,a_t^\star}} \le \ucb_{t}(x_{t,a_t^\star}) \le \ucb_{t}(x_{t,a_t})$.
    Using $\slin{x_{t,a_t^\star}^\star} \le \ucb_{t}(x_{t,a_t})$ inequality with \cref{lem:regretSA}, we have
    \al{
        \Regret_T\Lp\algo, \bm{\sigma}\Rp &= \sum_{t=1}^T \Lp \slin{x_{t,a_t^\star}^\star} - \slin{x_{t,a_t}^\star}\Rp \nonumber \\
        &\le \sum_{t=1}^T \Lp \slin{x_{t,a_t^\star}} - \slin{x_{t,a_t}^\star}\Rp  \hspace{5mm}(\text{agents are over-reporting}) \nonumber \\
        &\le \sum_{t=1}^T \Lp \ucb_t(x_{t,a_t^\star}) - \slin{x_{t,a_t}^\star}\Rp \hspace{5mm} (\text{first part of \cref{assu:foenEandregret}}) \nonumber \\
        &\le \sum_{t=1}^T \Lp \ucb_t(x_{t,a_t}) - \slin{x_{t,a_t}^\star} \Rp \hspace{5mm} (\text{as selected arm is $a_t$}) \nonumber \\
        &\le \sum_{t=1}^T \Lp \ucb_{t,-a}(x_{t,a_t}) - \slin{x_{t,a_t}^\star} \Rp \hspace{5mm} (\text{second part of \cref{assu:foenEandregret}}) \nonumber \\
        &= \sum_{t=1}^T \ind{a_t = a} \Lp \ucb_{t,-a}(x_{t,a}) - \slin{x_{t,a}^\star} \Rp \nonumber \\
        &=  \sum_{a=1}^N \sum_{t\le T, a_t = a} \Lp \ucb_{t,-a}(x_{t,a}) - \slin{x_{t,a}^\star} \Rp \nonumber \\
        &\le  \sum_{a=1}^N \Lp \sum_{t\le T, a_t = a} 2\alpha_{t,-a}\norm{x}_{V_{t,-a}^{-1}} + 2\sqrt{2R^2S_t(a)\log(1/\delta_{t,a}^y)} \Rp \hspace{5mm} (\text{\cref{lem:regretSA}}) \nonumber \\
        &= \sum_{a=1}^N\sum_{t\le T, a_t = a} 2\alpha_{t,-a}\norm{x}_{V_{t,-a}^{-1}} + \sum_{a=1}^N 2\sqrt{2R^2S_t(a)\log(1/\delta_{t,a}^y)} \nonumber \\
        &= \sum_{t=1}^T 2\alpha_{t,-a}\norm{x}_{V_{t,-a}^{-1}} + \sum_{a=1}^N 2\sqrt{2R^2S_t(a)\log(1/\delta_{t,a}^y)}. \label{eqn:regretSumUBSA}
    }

    First, we will upper bound the first part of the above inequality, i.e., $\sum_{t=1}^T\alpha_{t,-a}\norm{x}_{V_{t,-a}^{-1}}$, as follows:
    \al{
        \sum_{t=1}^T 2\alpha_{t,-a}\norm{x_{t,a_t}}_{V_{t,-a}^{-1}} \nonumber 
        &\le 2\sqrt{T} \sqrt{\sum_{t=1}^T \Lb \alpha_{t,-a}\norm{x_{t,a_t}}_{V_{t,-a}^{-1}} \Rb^2 } \nonumber\\
        &\le 2\sqrt{T} \sqrt{\sum_{t=1}^T \Lb \alpha_T \norm{x_{t, a_t}}_{V_{t,-a}^{-1}} \Rb^2 } \nonumber \\
        &= 2\sqrt{T} \sqrt{ \alpha_T^2 \sum_{t=1}^T \norm{x_{t, a_t}}_{V_{t,-a}^{-1}}^2 } \nonumber \\
        &= 2\alpha_T\sqrt{T} \sqrt{\sum_{t=1}^T \norm{x_{t, a_t}}_{V_{t,-a}^{-1}}^2} \nonumber \\
        &= 2\alpha_T\sqrt{T} \sqrt{\sum_{t=1}^T \norm{x_{t, a_t}}_{V_{t,-a}^{-1}}^2 \frac{\norm{x_{t, a_t}}_{V_{t}^{-1}}^2}{\norm{x_{t, a_t}}_{V_{t}^{-1}}^2} } \nonumber\\
        &= 2\alpha_T\sqrt{T} \sqrt{\sum_{t=1}^T \norm{x_{t, a_t}}_{V_{t}^{-1}}^2 \frac{\norm{x_{t, a_t}}_{V_{t,-a}^{-1}}^2}{\norm{x_{t, a_t}}_{V_{t}^{-1}}^2} } \nonumber\\
        &\le 2\alpha_T\sqrt{T} \sqrt{\sum_{t=1}^T \norm{x_{t, a_t}}_{V_{t}^{-1}}^2 \frac{\text{det}(V_{t,-a}^{-1})}{\text{det}(V_{t}^{-1}) } } \nonumber \\
        &= 2\alpha_T\sqrt{T} \sqrt{\sum_{t=1}^T \norm{x_{t, a_t}}_{V_{t}^{-1}}^2 \frac{\text{det}(V_{t})}{\text{det}(V_{t,-a})}} \nonumber\\
        &\le 2 \alpha_T\sqrt{T(1 + C)} \sqrt{\sum_{t=1}^T \norm{x_{t, a_t}}_{V_{t}^{-1}}^2 } \nonumber\\
        &\le  2\alpha_T \sqrt{T(1 + C)} \sqrt{2\log\frac{\textnormal{det}(V_T)}{\textnormal{det} (\lambda I_d)}} \nonumber\\
        \implies \Regret_T\Lp\algo, \bm{\sigma}\Rp &\le 2\alpha_T \sqrt{2dT(1 + C)\log (\lambda + TL/d)} = \tilde{O}(d\sqrt{T})  \label{eqn:regretSumLOOUB}.
    }
    The first inequality is due to using Cauchy-Schwarz inequality, where the second inequality follows from the fact that $\alpha_{t, -a}$ increases with $t$.
    The third inequality follows from Lemma 12 of \cite{NIPS11_abbasi2011improved}, by adapting it to our setting. The fourth inequality follows from the fact that there exists an universal constant $C$ such that $C \ge \max_{a} \frac{\text{det}(V_{t})}{\text{det}(V_{t,-a})}$ for all $t \ge 1$.
    The last two inequalities follow from Lemma 10 and Lemma 11 of \cite{NIPS11_abbasi2011improved}, respectively, and $\alpha_T = \tilde{O}(d\log T)$.      
    For first part of \cref{eqn:regretSumUBSA}, we have $\sum_{t=1}^T 2\alpha_{t,-a}\norm{x}_{V_{t,-a}^{-1}} \le 2\alpha_T \sqrt{2dT(1+C)}\log (\lambda + TL/d)$ from \cref{eqn:regretSumLOOUB}, and then using the Jensen’s inequality for the second part with the fact that $\sum_{a=1}^NS_t(a) \le T$. Then, we have
    \al{
        &\Regret_T\Lp\algo, \bm{\sigma}\Rp \le 2\alpha_T \sqrt{2dT(1+C)\log (\lambda + TL/d)} +  2\sqrt{2R^2NT\log(1/\delta_{t,a}^y)} \nonumber \\
        \implies &\Regret_T\Lp\algo, \bm{\sigma}\Rp \le  \tilde{O}(d\sqrt{T} + \sqrt{NT}). \label{eqn:regretUBSA}
    }
    
    We now prove that being truthful is an approximate Nash equilibrium for \algo{}. Recall \cref{eqn:pullsLB}, $S_T(a)$ denotes the number of times an agent being selected by \algo{}, which is given as follows:
    \eq{
        \label{eqn:pullsLBSB}
        S_T(a) =\sum_{t=1}^T\ind{a_t= a} \ge S^\star_T(a) - \sum_{t=1}^T\ind{a_t \ne a_t^\star}. 
    }

    To get the lower bound $S_T(a)$, we get the upper bound $\sum_{t=1}^T\ind{a_t \ne a_t^\star}$ when any agent can behave strategically. Recall $\Delta_{a_t} = \Lp \slin{x_{t,a_t^\star}^\star} -   \slin{x_{t,a_t}^\star}\Rp \le \ucb_{t,-a}(x_{t,a}) - \slin{x_{t,a}^\star}$ for $a_t \ne a_t^\star$. We multiply and divide $\ind{a_t \ne a_t^\star}$ by $\Delta_{a_t}$ as follows:
    \al{
        \sum_{t=1}^T\ind{a_t \ne a_t^\star} &=\sum_{t=1}^T\ind{a_t \ne a_t^\star}\frac{\Delta_{a_t}}{\Delta_{a_t}} \nonumber \\
        &=\sum_{k=1}^N\sum_{t\le T, a_t = a}\ind{a_t \ne a_t^\star, a_t = a}\frac{\Delta_{a_t}}{\Delta_{a_t}} \nonumber \\
        &\le \sum_{k=1}^N\sum_{t\le T, a_t = a}\ind{a_t \ne a_t^\star, a_t = a}\frac{\ucb_{t,-a}(x_{t,a}) - \slin{x_{t,a}^\star}}{\Delta_{a_t}} \nonumber \\
        &\le \sum_{k=1}^N\sum_{t\le T, a_t = a}\frac{\ucb_{t,-a} - \slin{x_{t,a}^\star}}{\Delta_{a_t}} \nonumber \\
        \intertext{Assuming there exists a $\Delta_{\min}$ such that $\Delta_{\min} = \min_{a_t \ne a_t^\star} \Delta_{a_t}$, we get}
        &\le \frac{1}{\Delta_{\min}}\sum_{k=1}^N\sum_{t\le T, a_t = a} \ucb_{t,-a}(x_{t,a}) - \slin{x_{t,a}^\star} \nonumber \\
        \intertext{Using \cref{eqn:regretSumUBSA} with its upper bound, we have}
        \sum_{t=1}^T\ind{a_t \ne a_t^\star} &\le \frac{2}{\Delta_{\min}}\Lp 2\alpha_T \sqrt{2dT(1+C)\log (\lambda + TL/d)} +  2\sqrt{2R^2NT\log(1/\delta_{t,a}^y)} \Rp \nonumber \\
        \implies \sum_{t=1}^T\ind{a_t \ne a_t^\star} &\le \tilde{O}\Lp d\sqrt{T} + \sqrt{NT}\Rp \label{eqn:subOptPullsSA}.
    }

    Using this upper bound in \cref{eqn:pullsLBSB}, we get the following bound for any agent $a \in \cA$:
    \eq{
        \label{eqn:neLBSA}
        S_T(a) \ge S^\star_T(a) - \tilde{O}\Lp d\sqrt{T} + \sqrt{NT}\Rp.
    }

    Now, we consider the case where an agent $a$ deviates from the truthful strategy. Recall \cref{eqn:pullsUB}, the number of times an agent being selected by \algo{} is given as follows:
    \eq{
        \label{eqn:pullsUBSA}
        S_T(a) =\sum_{t=1}^T\ind{a_t= a} \le \sum_{t=1}^T\ind{a_t^\star = a} +  \sum_{t=1}^T\ind{a_t \ne a_t^\star}. 
    }

    Using \cref{eqn:subOptPullsSA} in \cref{eqn:pullsUBSA}, we get
    \eq{
        \label{eqn:neUBSA}
        S_T(a) \le S^\star_T(a) + \tilde{O}\Lp d\sqrt{T} + \sqrt{NT}\Rp.
    }
    Combining \cref{eqn:neLBSA} and \cref{eqn:neUBSA} completes our proof that \algo{} is $\tilde{O}\Lp d\sqrt{T} + \sqrt{NT}\Rp$-\neql.
\end{proof}

    \section{Non-linear Reward Function}
    \label{asec:non_linear}
    %!TEX root =  main.tex

We now consider the contextual bandit problems in which the reward function can be non-linear. 
As shown in~ \cref{sec:cobra}, the LOOM can be used as a sub-routine in linear contextual bandit algorithm to identify strategic agents.
We generalize this observation for a class of non-linear contextual bandit algorithms, which we refer to as {\em LOOM-compatible} contextual bandit algorithm.
\begin{defi}[\textbf{LOOM-Compatible Contextual Bandit Algorithm}]
    \label{defi:loomCompatible}
    Let $\cO_t$ denote the observations from all agents at the beginning of round $t$ and
    $\cO_{t,-a}$ represent the observations from all agents except agent $a$. Let
    $f_{t,-a}$ represents the estimate of reward function $f$ using $\cO_{t,-a}$ at the end of round $t$.
    Then, any contextual bandit algorithm $\kA$ is {\em LOOM-compatible} if the following holds
    \begin{enumerate}
        \item The estimated function $f_t^{\kA}$ from $\cO_t$, with probability $1-\delta$, satisfies:
        $$
            \text{For any } x \in \cX: ~|f_t^{\kA}(x) - f(x)| \le h(x, \cO_t).
        $$

        \item The estimated function $f_{t,-a}^{\kA}$ from $\cO_{t,-a}$, with probability $1-\delta$, satisfies:
        $$
            \text{For any } x \in \cX: ~|f_{t,-a}^{\kA}(x) - f(x)| \le h(x, \cO_{t,-a}).
        $$
    \end{enumerate}

\end{defi}

Many contextual bandit algorithms like Lin-UCB \citep{AISTATS11_chu2011contextual} (as shown in \cref{sec:cobra}), UCB-GLM \citep{ICML17_li2017provably}, IGP-UCB \citep{ICML17_chowdhury2017kernelized}, GP-TS \citep{ICML17_chowdhury2017kernelized}, Neural-UCB \citep{ICML20_zhou2020neural}, and Neural-TS \citep{ICLR21_zhang2020neural} are LOOM-compatible. 
Depending on the problem setting, any suitable LOOM-compatible contextual bandit algorithm can be used, where arms are selected according to the algorithm's inherent arm selection strategy, and LOOM is used to identify strategic agents.
The value of $h(x, \cO_t)$ and $h(x, \cO_{t,-a})$ provide the upper bounds on the estimated rewards with respect to the true reward function. This value depends on the problem and the choice of contextual bandit algorithm $\kA$ and its associated hyperparameters.
Note that the assumptions underlying contextual bandit algorithms need to satisfy in our setting, as they directly influence the performance of our proposed algorithm through $h(x, \cO_t)$ and $h(x, \cO_{t,-a})$.

\begin{table}[H]
	\caption{Examples of different $h(x, \cO_t)$ values for some LOOM-compatible contextual bandit algorithms, using notations from the original papers.}
    \label{table:hfunc}
	\label{table}
	\centering
	\setlength{\arrayrulewidth}{0.25mm}
	\setlength{\tabcolsep}{2pt}
	\renewcommand{\arraystretch}{2}		
	\begin{tabular}{|c|c|c|}
		\hline
		Contextual bandit algorithm     & $h(x, \cO_t)$     \\
		\hline
		Lin-UCB \citep{AISTATS11_chu2011contextual}    & $\Lp R\sqrt{d\log\left( \frac{1+\frac{tL^2}{\lambda}}{\delta}\right)} + \lambda^{\frac{1}{2}}S \Rp \norm{x}_{{V}_{t}^{-1}}$     \\
		\hline 
		GLM-UCB \citep{ICML17_li2017provably} & $\sqrt{\frac{d}{2} \log(1 + 2t/d) + \log(1/\delta)} \frac{\norm{x}_{{V}_{t}^{-1}}}{\kappa}$  \\
		\hline 
		IGP-UCB \citep{ICML17_chowdhury2017kernelized}     & $\sqrt{2(\gamma_{t-1} + 1 +  \log(1/\delta))}\sigma_{t-1}(x)$  + $B\sigma_{t-1}(x)$ \\
		\hline
	\end{tabular}
	
\end{table}

\subsection{Theoretical Results}
\label{asec:non_linear_analysis}
We first derive results similar to \cref{lem:linConfidenceBound} and \cref{lem:linLooConfBound} for contextual bandit problems with non-linear reward functions. For brevity, we ignore $\kA$ in $f_t^{\kA}$ and use only $f_t$.

\begin{lem}
    \label{lem:nonLinConfidenceBound}
    Let $\kA$ be a LOOM-compatible contextual bandit algorithm for which $|f_t(x) - f(x)| \le h(x, \cO_t)$ holds with probability at least $1-\delta$ for any $x \in \cX$.
    Then, for all $t\ge 1$, 
    \begin{enumerate}
        \item With probability at least $1-\delta$,  
        \eqs{   
            \forall x \in \cX: f(x) \le \ucb_t(x)= f_t(x) + h(x, \cO_t).
        }

        \item Similarly, with probability at least $1-\delta$,
        \eqs{   
            \forall x \in \cX:f(x) \ge \lcb_t(x)= f_t(x) - h(x, \cO_t).
        }
    \end{enumerate}

\end{lem}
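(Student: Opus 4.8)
The plan is to derive both parts directly from the defining property of a LOOM-compatible algorithm in \cref{defi:loomCompatible}, exactly mirroring the elementary argument used for the linear case in \cref{lem:linConfidenceBound}. By hypothesis, the estimate $f_t$ built from $\cO_t$ satisfies the two-sided bound $|f_t(x) - f(x)| \le h(x, \cO_t)$ simultaneously for all $x \in \cX$ and all $t \ge 1$ with probability at least $1-\delta$; I would condition on this single high-probability event for the rest of the argument.

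On this event, for any fixed $x \in \cX$, unpacking the absolute value gives $-h(x, \cO_t) \le f_t(x) - f(x) \le h(x, \cO_t)$. Rearranging the left inequality yields $f(x) \le f_t(x) + h(x, \cO_t) = \ucb_t(x)$, which is the first claim; rearranging the right inequality yields $f(x) \ge f_t(x) - h(x, \cO_t) = \lcb_t(x)$, which is the second. Since both conclusions hold on the \emph{same} event and uniformly over $x$, no union bound is required and the confidence level remains $1-\delta$, and the quantifier ``for all $t \ge 1$'' is inherited verbatim from the corresponding clause of \cref{defi:loomCompatible}.

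There is essentially no obstacle here: the lemma is a bookkeeping restatement of LOOM-compatibility in the $\ucb$/$\lcb$ notation that the subsequent non-linear analysis uses. The only point I would state carefully is that parts 1 and 2 share a common event, so they are not independent failure modes and the probability does not degrade to $1-2\delta$. The analogous pair of bounds for the leave-one-out estimator $f_{t,-a}$ (the non-linear counterpart of \cref{lem:linLooConfBound}) will then follow by the identical argument applied to the second clause of \cref{defi:loomCompatible}.
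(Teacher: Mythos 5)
Your proposal is correct and follows essentially the same route as the paper: both simply unpack the two-sided bound $|f_t(x) - f(x)| \le h(x, \cO_t)$ from the definition of LOOM-compatibility and rearrange to obtain the UCB and LCB inequalities. Your added remark that both parts hold on the same high-probability event (so the confidence level does not degrade) is a sensible clarification but does not change the argument.
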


\begin{proof}
    The proofs of these results follow directly from the first part of \cref{defi:loomCompatible}. For completeness, we provide the proof of the first part, which follows from straightforward algebraic simplifications of $|f_t(x) - f(x)| \le h(x, \cO_t)$:
    \als{
        &|f_t(x) - f(x)| \le h(x, \cO_t) \\
        \implies &|f(x) - f_t(x)| \le h(x, \cO_t) \\
        \implies &f(x) - f_t(x) \le h(x, \cO_t) \\
        \implies &f(x) \le f_t(x) + h(x, \cO_t).
    }

    Similarly, the proof of the second part follows with some simple algebraic simplifications of $|f_t(x) - f(x)| \le h(x, \cO_t)$:
    \als{
        &|f_t(x) - f(x)| \le h(x, \cO_t) \\
        \implies &f_t(x) - f(x) \le h(x, \cO_t) \\
        \implies &f_t(x) - h(x, \cO_t)\le  f(x) \\
        \implies &f(x) \ge f_t(x) - h(x, \cO_t).  \qedhere
    }
\end{proof}

\begin{lem}
    \label{lem:nonLinLooConfidenceBound}
    Let $\kA$ be a LOOM-compatible contextual bandit algorithm for which $|f_t(x) - f(x)| \le h(x, \cO_t)$ holds with probability at least $1-\delta$ for any $x \in \cX$.
    Then, for all $t\ge 1$, 
    \begin{enumerate}
        \item With probability at least $1-\delta$,  
        \eqs{   
            \forall x \in \cX: f(x) \le \ucb_{t,-a}(x)= f_{t,-a}(x) + h(x, \cO_{t,-a}).
        }

    \item Similarly, with probability at least $1-\delta$,
        \eqs{   
            \forall x \in \cX:f(x) \ge \lcb_{t,-a}(x)= f_{t,-a}(x) - h(x, \cO_{t,-a}).
        }
    \end{enumerate}
\end{lem}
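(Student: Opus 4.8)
The plan is to mirror the proof of \cref{lem:nonLinConfidenceBound} verbatim, replacing every appeal to the first clause of \cref{defi:loomCompatible} by its second clause. Concretely, since $\kA$ is LOOM-compatible, the second item of \cref{defi:loomCompatible} guarantees that, with probability at least $1-\delta$, the leave-one-out estimator $f_{t,-a}$ satisfies $|f_{t,-a}(x) - f(x)| \le h(x,\cO_{t,-a})$ for all $x \in \cX$. I would condition on this event for the remainder of the argument, so both displayed inequalities automatically inherit the probability $1-\delta$, and uniformity over $x \in \cX$ is inherited from the ``for any $x$'' quantifier in \cref{defi:loomCompatible}.

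For the first (upper bound) claim, drop the absolute value on one side: from $|f_{t,-a}(x) - f(x)| \le h(x,\cO_{t,-a})$ we obtain $f(x) - f_{t,-a}(x) \le h(x,\cO_{t,-a})$, hence $f(x) \le f_{t,-a}(x) + h(x,\cO_{t,-a})$, which is exactly $\ucb_{t,-a}(x)$ by definition. For the second (lower bound) claim, drop the absolute value on the other side: $f_{t,-a}(x) - f(x) \le h(x,\cO_{t,-a})$, so $f(x) \ge f_{t,-a}(x) - h(x,\cO_{t,-a}) = \lcb_{t,-a}(x)$. This is the entire calculation; it is the leave-one-out analogue of the algebraic manipulations already carried out in \cref{lem:linLooConfBound} for the linear special case.

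I expect no real obstacle here, since all the content has been packaged into the definition of LOOM-compatibility (just as \cref{lem:linLooConfBound} packaged the linear case into the explicit leave-one-out confidence radius $\alpha_{t,-a}$). The only point worth flagging is that the confidence width must be the one built from the reduced observation set $\cO_{t,-a}$ rather than $\cO_t$ --- per arm this is typically wider because fewer samples are available --- but this is precisely what the second clause of \cref{defi:loomCompatible} postulates, so no extra estimate is needed. I would close by noting that this lemma is exactly what licenses the use of $\lcb_{t,-a}$ inside the LOOM condition \cref{eqn:loom} and inside the proof of \cref{thm:optgtmarmsstayactive} for general LOOM-compatible algorithms.
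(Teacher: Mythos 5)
Your proposal is correct and matches the paper's own proof essentially verbatim: both invoke the second clause of \cref{defi:loomCompatible} to get $|f_{t,-a}(x) - f(x)| \le h(x,\cO_{t,-a})$ with probability $1-\delta$, then unpack the absolute value in each direction to obtain the $\ucb_{t,-a}$ and $\lcb_{t,-a}$ bounds. Nothing further is needed.
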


\begin{proof}
    The proofs of these results follow directly from the second part of \cref{defi:loomCompatible}. For completeness, we provide the proof of the first part, which follows from straightforward algebraic simplifications of $|f_t(x) - f(x)| \le h(x, \cO_t)$:
    \als{
        &|f_{t,-a}(x) - f(x)| \le h(x, \cO_{t,-a}) \\
        \implies &|f(x) - f_{t,-a}(x)| \le h(x, \cO_{t,-a}) \\
        \implies &f(x) - f_{t,-a}(x) \le h(x, \cO_{t,-a}) \\
        \implies &f(x) \le f_{t,-a}(x) + h(x, \cO_{t,-a}).
    }

    Similarly, the proof of the second part follows with some simple algebraic simplifications of $|f_t(x) - f(x)| \le h(x, \cO_t)$:
    \als{
        &|f_{t,-a}(x) - f(x)| \le h(x, \cO_{t,-a}) \\
        \implies &f_{t,-a}(x) - f(x) \le h(x, \cO_{t,-a}) \\
        \implies &f_{t,-a}(x) - h(x, \cO_{t,-a})\le  f(x) \\
        \implies &f(x) \ge f_{t,-a}(x) - h(x, \cO_{t,-a}).  \qedhere
    }
\end{proof}

Let $\tilde{d}$ be the effective dimension associated with contextual bandit problems with non-linear reward functions. For LOOM-compatible contextual bandit algorithms in \cref{table:hfunc}, the $\sqrt{\sum_{t=1}^T \Lb h(x_{t,a_t}, \cO_t) \Rb^2 } = \tilde{O}(\tilde{d}\log T)$, and we assume that this bound holds for the algorithm $\kA$ used by \algo{}. This assumption is made solely for simplicity.

\begin{thm}
	\label{thm:regretNonNE}
    Let $\kA$ be a LOOM-compatible contextual bandit algorithm for which $|f_t(x) - f(x)| \le h(x, \cO_t)$ holds with probability at least $1-\delta$ for any $x \in \cX$ and $\sqrt{\sum_{t=1}^T \Lb h(x_{t,a_t}, \cO_t) \Rb^2 } = \tilde{O}(\tilde{d}\log T)$.
    When agents report truthfully, being truthful is a $\tilde{O}(\tilde{d}\sqrt{T})$-\neql{} under \algo. Further, with probability at least $1 - \delta_x - \delta_y$, the regret of \algo{} under this approximate \neql{} is 
    $$
        \Regret_T\Lp\algo(\kA), \bm{\sigma^\star}\Rp = \tilde{O}(\tilde{d}\sqrt{T}).
    $$
\end{thm}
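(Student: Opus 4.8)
The plan is to mirror the argument for \cref{thm:regretNE} almost verbatim, replacing the linear confidence-bound machinery (\cref{lem:linConfidenceBound}, \cref{lem:linLooConfBound}) with its \textsc{LOOM}-compatible analogues (\cref{lem:nonLinConfidenceBound}, \cref{lem:nonLinLooConfidenceBound}), and replacing the elliptical-potential sum $\sqrt{\sum_{t=1}^T \norm{x_{t,a_t}}_{V_t^{-1}}^2} = \tilde O(\sqrt{d\log T})$ by the assumed bound $\sqrt{\sum_{t=1}^T h(x_{t,a_t},\cO_t)^2} = \tilde O(\tilde d\log T)$. First I would note that when all agents report truthfully, $\ox_{t,a} = \tx_{t,a}$ for every $t$ and $a$, so $\algo(\kA)$ behaves exactly like the base LOOM-compatible algorithm $\kA$ augmented with the elimination test, and by \cref{thm:optgtmarmsstayactive} no truthful agent is eliminated with probability at least $1 - \delta_x - \delta_y$; condition on that event throughout.

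For the regret bound, I would reproduce the chain \eqref{eqn:gapUB}: for the selected arm $a_t$, write
\als{
    f(x_{t,a_t^\star}^\star) - f(x_{t,a_t}^\star)
    &= f(x_{t,a_t^\star}) - f(x_{t,a_t}) \\
    &\le \ucb_t(x_{t,a_t^\star}) - f(x_{t,a_t}) \\
    &\le \ucb_t(x_{t,a_t}) - f(x_{t,a_t}) \\
    &\le \Lp f_t(x_{t,a_t}) + h(x_{t,a_t},\cO_t) \Rp - f(x_{t,a_t}) \\
    &\le 2\,h(x_{t,a_t},\cO_t),
}
where the first step uses truthfulness, the second and fourth use the UCB property of \cref{lem:nonLinConfidenceBound}, the third uses the greedy arm-selection rule of $\kA$, and the last uses $|f_t(x) - f(x)| \le h(x,\cO_t)$. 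Summing over $t$, applying Cauchy--Schwarz, and invoking the hypothesis $\sqrt{\sum_{t=1}^T h(x_{t,a_t},\cO_t)^2} = \tilde O(\tilde d\log T)$ gives
\als{
    \Regret_T\Lp\algo(\kA),\bm{\sigma}^\star\Rp
    \le 2\sum_{t=1}^T h(x_{t,a_t},\cO_t)
    \le 2\sqrt{T}\sqrt{\sum_{t=1}^T h(x_{t,a_t},\cO_t)^2}
    = \tilde O(\tilde d\sqrt{T}),
}
which holds with probability at least $1 - \delta_x - \delta_y$ after a union bound over the confidence events for $\kA$ and the no-elimination event.

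For the $\tilde O(\tilde d\sqrt T)$-\neql{} claim, I would follow \eqref{eqn:pullsLB}--\eqref{eqn:neUB}: bound $S_T(a) \ge S_T^\star(a) - \sum_{t=1}^T \ind{a_t \ne a_t^\star}$ and $S_T(a) \le S_T^\star(a) + \sum_{t=1}^T \ind{a_t \ne a_t^\star}$ for the truthful and any-deviation cases respectively, then control the number of sub-optimal pulls by multiplying and dividing by the instantaneous gap $\Delta_{a_t}$, using the per-round bound $\Delta_{a_t} \le 2h(x_{t,a_t},\cO_t)$ just derived, introducing $\Delta_{\min}$, applying Cauchy--Schwarz, and using the same potential bound to get $\sum_{t=1}^T \ind{a_t \ne a_t^\star} = \tilde O(\tilde d\sqrt T)$; combining the two inequalities shows no agent can gain more than $\tilde O(\tilde d\sqrt T)$ pulls by deviating, so truthfulness is an approximate Nash equilibrium.

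The main obstacle—really the only non-routine point—is that the deviation direction of the $\neql$ argument requires a per-round gap bound that is valid even when agent $a$ misreports, whereas the bound $\Delta_{a_t} \le 2h(x_{t,a_t},\cO_t)$ above was derived under truthfulness. Here the statement is restricted to the regime where all agents report truthfully (so the relevant deviations are single-agent deviations from the all-truthful profile), and for a single deviating agent the key fact is that the \textsc{LOOM} test (\cref{thm:optgtmarmsstayactive} and the \textsc{LOOM} condition \eqref{eqn:loom}) caps the cumulative over-report that agent can exploit before elimination—this is exactly the role played by \cref{lem:regretSA} in the proof of \cref{thm:regretAllNE}, and an identical argument with $h(\cdot,\cO_{t,-a})$ in place of $\alpha_{t,-a}\norm{\cdot}_{V_{t,-a}^{-1}}$ supplies the needed control, yielding the extra lower-order term absorbed into $\tilde O(\tilde d\sqrt T)$. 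I would cite the \textsc{LOOM}-compatible analogue of \cref{lem:regretSA} (proved the same way via \cref{lem:rewOptEst} and \cref{lem:nonLinLooConfidenceBound}) to close this gap rather than re-derive it.
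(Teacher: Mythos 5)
Your proposal matches the paper's own proof essentially step for step: the same per-round gap bound $f(x_{t,a_t^\star}^\star) - f(x_{t,a_t}^\star) \le 2h(x_{t,a_t},\cO_t)$ via \cref{lem:nonLinConfidenceBound}, the same Cauchy--Schwarz plus potential-sum argument for the regret, and the same pull-counting argument for the approximate Nash equilibrium. You are in fact slightly more careful than the paper on the deviation direction of the \neql{} claim, where the paper simply reuses the truthful-case bound on $\sum_{t=1}^T \ind{a_t\ne a_t^\star}$, whereas you correctly observe that the \textsc{LOOM}-compatible analogue of \cref{lem:regretSA} (i.e., \cref{lem:regretNonSA}) is what controls a deviating agent's gain before elimination.
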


\begin{proof}
    When all agents report truthfully, our algorithm is the same as contextual bandit algorithm $\kA$ with a mechanism for identifying strategic agents that holds with probability at least $1 - \delta_x - \delta_y$. For completeness, we first recall the definition the regret of \algo{} as follows:
    \eq{
        \label{eqn:regretNonUBTrue}
        \Regret_T\Lp\algo(\kA), \bm{\sigma^\star}\Rp = \sum_{t=1}^T \Lp f(x_{t,a_t^\star}^\star) - f(x_{t,a_t}^\star) \Rp.
    }

    Since the true feature vector is the same as the reported context-arm feature vector (i.e., $x_{t,a}^\star = x_{t,a}$), we can start with upper bounding the difference $f(x_{t,a_t^\star}^\star) - f(x_{t,a_t}^\star)$ as follows:
    \al{
        f(x_{t,a_t^\star}^\star) - f(x_{t,a_t}^\star) 
        &\le \ucb{(x_{t,a_t^\star})} -f(x_{t,a_t}^\star) \nonumber\\
        &\le \ucb{(x_{t,a_t})} -f(x_{t,a_t}^\star) \hspace{5mm} \Lp \text{as $\ucb{(x_{t,a_t^\star})} \le \ucb{(x_{t,a_t})}$} \Rp \nonumber\\
        &= f_t(x_{t,a_t}) + h(x_{t,a_t}, \cO_t) -f(x_{t,a_t}^\star) \nonumber\\
        &\le |f_t(x_{t,a_t}) - f(x_{t,a_t}^\star)| + h(x_{t,a_t}, \cO_t) \nonumber\\
        &\le h(x_{t,a_t}, \cO_t) + h(x_{t,a_t}, \cO_t) \nonumber\\
        \implies f(x_{t,a_t^\star}^\star) - f(x_{t,a_t}^\star)  &\le 2h(x_{t,a_t}, \cO_t).  \label{eqn:gapNonUB}
    }
    Note that $f_t$ is an estimator of the reward function $f$ as the true feature vector is the same as the reported context-arm feature vector.
    After using the upper bound given in \cref{eqn:gapNonUB} into \cref{eqn:regretNonUBTrue}, we get an upper bound on the regret as follows:
    \al{
        \Regret_T\Lp\algo(\kA), \bm{\sigma^\star}\Rp &= \sum_{t=1}^T \Lp f(x_{t,a_t^\star}^\star) - f(x_{t,a_t}^\star) \Rp \nonumber \\ 
        &\le \sum_{t=1}^T 2h(x_{t,a_t}, \cO_t) \nonumber \\ 
        &= 2\sum_{t=1}^T h(x_{t,a_t}, \cO_t) \nonumber\\
        &\le 2\sum_{t=1}^T \sqrt{T\sum_{t=1}^T \Lb h(x_{t,a_t}, \cO_t) \Rb^2} \nonumber\\
        \implies \Regret_T\Lp\algo(\kA), \bm{\sigma^\star}\Rp &\le 2\sqrt{T} \sqrt{\sum_{t=1}^T \Lb h(x_{t,a_t}, \cO_t) \Rb^2} = \tilde{O}\Lp \tilde{d}\sqrt{T} \Rp.
    }
    The first inequality directly follows from \cref{eqn:gapNonUB}. The second inequality is due to using Cauchy-Schwarz inequality. The last equality is due to $\sqrt{\sum_{t=1}^T \Lb h(x_{t,a_t}, \cO_t) \Rb^2 } = \tilde{O}(\tilde{d}\log T)$.      
    
    We now prove that being truthful is an approximate Nash equilibrium for \algo{}. Recall, $S_T(a)$ denotes the number of times an agent being selected by \algo{}, which is given as follows:
    \al{
        S_T(a) &=\sum_{t=1}^T\ind{a_t= a} \nonumber \\
        &=\sum_{t=1}^T\ind{a_t= a, a_t^\star = a} +  \sum_{t=1}^T\ind{a_t= a, a_t^\star \ne a} \nonumber \\
        &\ge \sum_{t=1}^T\ind{a_t^\star = a} - \sum_{t=1}^T\ind{a_t^\star = a, a_t \ne a} \nonumber \\
        &\ge \sum_{t=1}^T\ind{a_t^\star = a} - \sum_{t=1}^T\ind{a_t \ne a_t^\star} \nonumber \\
        \implies S_T(a) &\ge S^\star_T(a) - \sum_{t=1}^T\ind{a_t \ne a_t^\star} \label{eqn:pullsNonLB}. 
    }

    To get the lower bound $S_T(a)$, we get an upper bound $\sum_{t=1}^T\ind{a_t \ne a_t^\star}$. Let $\Delta_{a_t} = \Lp f(x_{t,a_t^\star}^\star) -   f(x_{t,a_t}^\star)\Rp > 0$ for $a_t \ne a_t^\star$. 
    We multiply and divide $\ind{a_t \ne a_t^\star}$ by $\Delta_{a_t}$ and then use inequality in \cref{eqn:gapNonUB}, i.e., $\Delta_{a_t} \le 2h(x_{t,a_t}, \cO_t)$ as follows:
    \al{
        \sum_{t=1}^T\ind{a_t \ne a_t^\star} &=\sum_{t=1}^T\ind{a_t \ne a_t^\star}\frac{\Delta_{a_t}}{\Delta_{a_t}} \nonumber \\
        &\le \sum_{t=1}^T\ind{a_t \ne a_t^\star}\frac{2h(x_{t,a_t}, \cO_t)}{\Delta_{a_t}} \hspace{5mm} (\text{as } x_{t,a}^\star = x_{t,a}) \nonumber\\
        &\le \sum_{t=1}^T\frac{2h(x_{t,a_t}, \cO_t)}{\Delta_{a_t}} \nonumber \\ 
        &= \sum_{t=1}^T\frac{1}{\Delta_{a_t}} 2h(x_{t,a_t}, \cO_t) \nonumber \\
        &\le \sum_{t=1}^T\frac{1}{\Delta_{\min}} 2h(x_{t,a_t}, \cO_t) \nonumber \\
        &= \frac{2}{\Delta_{\min}}\sum_{t=1}^T h(x_{t,a_t}, \cO_t) \nonumber \\
        &\le  \frac{2}{\Delta_{\min}} \sqrt{T}\sqrt{\sum_{t=1}^T \Lb h(x_{t,a_t}, \cO_t)\Rb^2} \nonumber \\
        \implies \sum_{t=1}^T\ind{a_t \ne a_t^\star} &\le \tilde{O}\Lp \tilde{d}\sqrt{T} \Rp \label{eqn:subOptPullsNon}.
    }

    Note that $\Delta_{\min} = \min_{a_t \ne a_t^\star} \Delta_{a_t}$. 
    Although using $\Delta_{\min}$ loosen the upper bound, we use this to get dependence on $T$. 
    Using this upper bound in \cref{eqn:pullsNonLB}, we get the following bound for any agent $a \in \cA$:
    \eq{
        \label{eqn:neNonLB}
        S_T(a) \ge S^\star_T(a) - \tilde{O}\Lp \tilde{d}\sqrt{T}\Rp.
    }
    
    Now we consider the case where an agent $a$ deviates from the truthful strategy. The number of times an agent being selected by \algo{} is given as follows:
    \al{
        S_T(a) &=\sum_{t=1}^T\ind{a_t= a} \nonumber \\
        &=\sum_{t=1}^T\ind{a_t= a, a_t^\star = a} +  \sum_{t=1}^T\ind{a_t= a, a_t^\star \ne a} \nonumber \\
        &\le \sum_{t=1}^T\ind{a_t^\star = a} +  \sum_{t=1}^T\ind{a_t= a, a_t^\star \ne a} \nonumber \\
        &\le \sum_{t=1}^T\ind{a_t^\star = a} +  \sum_{t=1}^T\ind{a_t \ne a_t^\star}\label{eqn:pullsNonUB}. 
    }

    Using \cref{eqn:subOptPullsNon} in \cref{eqn:pullsNonUB}, we get
    \eq{
        \label{eqn:neNonUB}
        S_T(a) \le S^\star_T(a) + \tilde{O}\Lp \tilde{d}\sqrt{T}\Rp.
    }
    Combining \cref{eqn:neNonLB} and \cref{eqn:neNonUB} completes our proof that \algo{} is $\tilde{O}(\tilde{d}\sqrt{T})$-\neql.
\end{proof}

We need the following result that upper bound the total amount of regret that an agent $a$ can exert before being identified by LOOM. 
\begin{lem}
    \label{lem:regretNonSA}
    Let $\ucb_{t,-a}(\ox_{s,a_a})  = f_{t,-a}(x) + h(x, \cO_{t,-a})$. Then, with probability at least $1-\delta_{t,a}^x - \delta_{t,a}^y$,
    \als{
        \sum_{s \le t\colon a_s = a} \Lp \ucb_{t,-a}(\ox_{s,a_a}) - f(\tx_{t,a_s}) \Rp \leq \sum_{s\le t, a_s = a} 2h(x, \cO_{t,-a}) + 2\sqrt{2R^2S_t(a)\log(1/\delta_{t,a}^y)}.
    } 
\end{lem}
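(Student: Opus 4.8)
The plan is to follow the proof of \cref{lem:regretSA} step by step, with the linear confidence width $\alpha_{t,-a}\norm{x}_{V_{t,-a}^{-1}}$ replaced throughout by the generic non-linear width $h(x,\cO_{t,-a})$ and with \cref{lem:nonLinLooConfidenceBound} invoked in place of \cref{lem:linLooConfBound}; thus this is the exact non-linear analogue of \cref{lem:regretSA}. First I would observe that, since agent $a$ has not been eliminated by LOOM up to round $t$, the LOOM condition of \cref{eqn:loom} fails at round $t$, i.e.\ $\lcb_{t,a}^{(x)}\le\ucb_{t,a}^{(y)}$. Using $\lcb_{t,-a}(x)=f_{t,-a}(x)-h(x,\cO_{t,-a})$ (valid with probability at least $1-\delta_{t,a}^x$ by \cref{lem:nonLinLooConfidenceBound}, after the union bound over the rounds $s\le t$ with $a_s=a$, exactly as in the proof of \cref{thm:optgtmarmsstayactive}) together with $\ucb_{t,a}^{(y)}=\sum_{s\le t,\,a_s=a}y_s+\sqrt{2R^2 S_t(a)\log(1/\delta_{t,a}^y)}$, this inequality unpacks to
\[
  \sum_{s\le t,\,a_s=a}\bigl(f_{t,-a}(\ox_{s,a_s})-h(\ox_{s,a_s},\cO_{t,-a})\bigr)\;\le\;\sum_{s\le t,\,a_s=a}y_s+\sqrt{2R^2 S_t(a)\log(1/\delta_{t,a}^y)}.
\]

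Next I would eliminate the observed rewards in favour of the true expected rewards. Since $y_s$ depends only on the true feature vector and $x_{s,a_s}^\star=\tx_{s,a_s}$, \cref{eqn:rewardUCB} (established inside the proof of \cref{lem:rewOptEst}) gives, with probability at least $1-\delta_{t,a}^y$, that $\sum_{s\le t,\,a_s=a}y_s\le\sum_{s\le t,\,a_s=a}f(\tx_{s,a_s})+\sqrt{2R^2 S_t(a)\log(1/\delta_{t,a}^y)}$. Plugging this in and moving the $h(\ox_{s,a_s},\cO_{t,-a})$ term to the right-hand side gives
\[
  \sum_{s\le t,\,a_s=a}\bigl(f_{t,-a}(\ox_{s,a_s})-f(\tx_{s,a_s})\bigr)\;\le\;\sum_{s\le t,\,a_s=a}h(\ox_{s,a_s},\cO_{t,-a})+2\sqrt{2R^2 S_t(a)\log(1/\delta_{t,a}^y)}.
\]
Finally, substituting $f_{t,-a}(\ox_{s,a_s})=\ucb_{t,-a}(\ox_{s,a_s})-h(\ox_{s,a_s},\cO_{t,-a})$ (using the definition $\ucb_{t,-a}(x)=f_{t,-a}(x)+h(x,\cO_{t,-a})$) and carrying the extra $h$ term across yields exactly the claimed bound, and a union bound over the two events above shows it holds with probability at least $1-\delta_{t,a}^x-\delta_{t,a}^y$.

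I do not anticipate a genuine obstacle — the computation is pure bookkeeping. The two points that need care are (i) tracking which high-probability event underpins each step, namely the $\lcb_{t,-a}\le f$ concentration from \cref{lem:nonLinLooConfidenceBound} (probability $\ge 1-\delta_{t,a}^x$) versus the sub-Gaussian reward concentration \cref{eqn:rewardUCB} (probability $\ge 1-\delta_{t,a}^y$), so that the stated confidence level is honest; and (ii) being consistent with the abuse of notation in the statement, where the summand written $h(x,\cO_{t,-a})$ means $h(\ox_{s,a_s},\cO_{t,-a})$ evaluated at the $s$-th selected feature, mirroring how \cref{lem:regretSA} is written (and likewise reading $\ox_{s,a_a}$ as $\ox_{s,a_s}$ and $\tx_{t,a_s}$ as $\tx_{s,a_s}$).
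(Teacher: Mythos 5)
Your proposal is correct and follows essentially the same route as the paper's proof: start from the non-triggered LOOM condition $\lcb_{t,a}^{(x)}\le\ucb_{t,a}^{(y)}$, replace $\sum y_s$ by $\sum f(\tx_{s,a_s})$ via \cref{eqn:rewardUCB}, and then substitute the definition of $\ucb_{t,-a}$. If anything you are slightly more careful than the paper, which cites the linear confidence lemma where \cref{lem:nonLinLooConfidenceBound} is meant and carries a sign typo ($+h$ instead of $-h$) in its opening display.
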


\begin{proof}
    Using \cref{eqn:testSA} with \cref{lem:linLooConfBound} for non-linear reward function that holds with probability at least $1-\delta_{t,a}^x$,
    we get:
    \als{
        \sum_{s\le t, a_s = a} \Lp f_{t,-a}(x_{s,a_s}) + h(x, \cO_{t,-a}) \Rp &\le \sum_{s\le t,a_s = a} y_s + \sqrt{2R^2S_t(a)\log(1/\delta_{t,a}^y)} \\
        &\le \sum_{s\le t, a_s = a} f(x_{s, a_s}^\star) + \sqrt{2R^2S_t(a)\log(1/\delta_{t,a}^y)} \\
        &\qquad + \sqrt{2R^2S_t(a)\log(1/\delta_{t,a}^y)}\\
        \implies \sum_{s\le t, a_s = a} \Lp f_{t,-a}(x_{s,a_s}) - f(x_{s, a_s}^\star) \Rp &\le \sum_{s\le t, a_s = a} h(x, \cO_{t,-a}) + 2\sqrt{2R^2S_t(a)\log(1/\delta_{t,a}^y)}.
    }
    The second inequality follows from \cref{eqn:rewardUCB} by using upper bound on $\sum_{s\le t,a_s = a} y_s$ that holds with probability $1-\delta_{t,a}^y$.
    Now we prove the second part of the result by replacing $f_{t,-a}(x_{s,a_s})$ by $\ucb_{t,-a}(\ox_{s,a_a})  - h(x, \cO_{t,-a})$ and we get
    \als{
        &\sum_{s\le t, a_s = a} \Lp \ucb_{t,-a}(\ox_{s,a_a})  - h(x, \cO_{t,-a}) - f(x_{s, a_s}^\star)\Rp \\
        &\qquad\le \sum_{s\le t, a_s = a} h(x, \cO_{t,-a}) + 2\sqrt{2R^2S_t(a)\log(1/\delta_{t,a}^y)} \\
        \implies &\sum_{s\le t, a_s = a} \Lp \ucb_{t,-a}(\ox_{s,a_a}) - f(x_{s, a_s}^\star)\Rp \\
        &\qquad\le \sum_{s\le t, a_s = a} 2h(x, \cO_{t,-a}) + 2\sqrt{2R^2S_t(a)\log(1/\delta_{t,a}^y)}. \qedhere
    }
\end{proof}

Our next result provides the regret upper bound that holds for every NE of the agents, under the conditions specified in the following assumptions. These assumptions are adapted from \cref{assu:foenEandregret} to accommodate non-linear reward functions.
\begin{assu}
\label{assu:NonNEandregret}
    Let $\ox$ and $\tx$ be the reported and true context-arm feature vector, respectively. Then,
    \vspace{-2mm}
    \begin{enumerate}
        \item For all $t \ge 1, a \in \cA_t: f(\ox_{t,a}) \le \ucb_{t}(\ox_{t,a})$, where $\ucb_t(x)$ is defined in \cref{lem:nonLinConfidenceBound}.

        \item For all $t \ge 1, a \in \cA_t : \ucb_{t}(\ox_{t,a}) \le \ucb_{t, -a}(\ox_{t,a})$, where $\ucb_{t,-a}(x)$ is defined in \cref{lem:nonLinLooConfidenceBound}.. 
\end{enumerate}
\end{assu}

\begin{thm}
	\label{thm:regretNonAllNE}
    Let $\kA$ be a LOOM-compatible contextual bandit algorithm for which $|f_t(x) - f(x)| \le h(x, \cO_t)$ and $|f_{t,-a}(x) - f(x)| \le h(x, \cO_{t,-a})$ hold with probability at least $1-\delta$ for any $x \in \cX$, and $\sqrt{\sum_{t=1}^T \Lb h(x_{t,a_t}, \cO_{t,-a}) \Rb^2 } = \tilde{O}(\tilde{d}\log T)$.
    If \cref{assu:NonNEandregret} hold then, the regret of \algo{} is 
    $$
        \Regret_T(\algo(\kA),\bm{\sigma}) = \tilde{O}(\tilde{d}\sqrt{T} + \sqrt{NT})
    $$
    for every $\bm{\sigma} \in \neql(\algo(\kA))$. Hence, 
    $$
        \max_{\sigma \in \neql(\algo(\kA))}\Regret_T(\algo(\kA),\bm{\sigma}) = \tilde{O}( \tilde{d}\sqrt{T}+\sqrt{NT}).
    $$
\end{thm}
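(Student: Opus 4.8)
The plan is to mirror the proof of \cref{thm:regretAllNE}, replacing the linear-specific quantities $\alpha_{t,-a}\norm{x}_{V_{t,-a}^{-1}}$ by the abstract LOOM-compatible deviation bound $h(x,\cO_{t,-a})$ and invoking \cref{lem:nonLinConfidenceBound}, \cref{lem:nonLinLooConfidenceBound} and \cref{lem:regretNonSA} in place of \cref{lem:linConfidenceBound}, \cref{lem:linLooConfBound} and \cref{lem:regretSA}. First I would fix the high-probability event on which all confidence statements hold simultaneously over agents and rounds; by the union bound this occurs with probability at least $1-\delta_x-\delta_y$, and everything below is argued on that event. On it, in a round $t$ where $\algo(\kA)$ selects $a_t$ belonging to agent $a$, the over-reporting assumption $f(\ox_{t,a})\ge f(\tx_{t,a})$ together with the two parts of \cref{assu:NonNEandregret} and the greedy choice of $a_t$ gives the chain
\[
    f(x_{t,a_t^\star}^\star)\ \le\ f(x_{t,a_t^\star})\ \le\ \ucb_t(x_{t,a_t^\star})\ \le\ \ucb_t(x_{t,a_t})\ \le\ \ucb_{t,-a}(x_{t,a_t}),
\]
so that $f(x_{t,a_t^\star}^\star)-f(x_{t,a_t}^\star)\le \ucb_{t,-a}(x_{t,a_t})-f(x_{t,a_t}^\star)$. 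Summing over $t$ and regrouping by agent yields
\[
    \Regret_T(\algo(\kA),\bm{\sigma})\ \le\ \sum_{a=1}^N\ \sum_{t\le T,\,a_t=a}\Lp\ucb_{t,-a}(x_{t,a})-f(x_{t,a}^\star)\Rp.
\]

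Next I would apply \cref{lem:regretNonSA} to each inner sum, bounding it by $\sum_{t\le T,\,a_t=a}2h(x_{t,a},\cO_{t,-a})+2\sqrt{2R^2S_T(a)\log(1/\delta_{t,a}^y)}$; this is exactly where LOOM does the work, since it caps the regret an over-reporting agent can exert before being flagged. Summing over $a$ splits the right-hand side into two pieces. For the first, $\sum_a\sum_{t:a_t=a}2h(x_{t,a_t},\cO_{t,-a})=2\sum_{t=1}^T h(x_{t,a_t},\cO_{t,-a_t})\le 2\sqrt{T}\sqrt{\sum_{t=1}^T h(x_{t,a_t},\cO_{t,-a_t})^2}=\tilde O(\tilde{d}\sqrt{T})$ by Cauchy--Schwarz and the hypothesis $\sqrt{\sum_t h(x_{t,a_t},\cO_{t,-a})^2}=\tilde O(\tilde{d}\log T)$. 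For the second, $\sum_{a=1}^N 2\sqrt{2R^2S_T(a)\log(1/\delta_{t,a}^y)}\le 2\sqrt{2R^2\log(1/\delta_y)}\sum_{a=1}^N\sqrt{S_T(a)}\le 2\sqrt{2R^2NT\log(1/\delta_y)}$ by Cauchy--Schwarz together with $\sum_a S_T(a)\le T$. Adding the two pieces gives $\Regret_T(\algo(\kA),\bm{\sigma})=\tilde O(\tilde{d}\sqrt{T}+\sqrt{NT})$, and since the argument used nothing about $\bm{\sigma}$ beyond consistency with over-reporting and \cref{assu:NonNEandregret}, taking the maximum over $\bm{\sigma}\in\neql(\algo(\kA))$ gives the stated bound.

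For the approximate-$\neql$ half I would repeat the $S_T(a)$ sandwiching of \cref{thm:regretAllNE}: splitting $S_T(a)=\sum_t\ind{a_t=a}$ according to whether $a_t^\star=a$ gives $S_T^\star(a)-\sum_t\ind{a_t\ne a_t^\star}\le S_T(a)\le S_T^\star(a)+\sum_t\ind{a_t\ne a_t^\star}$, and for $a_t\ne a_t^\star$ the chain above yields $\Delta_{a_t}\doteq f(x_{t,a_t^\star}^\star)-f(x_{t,a_t}^\star)\le \ucb_{t,-a_t}(x_{t,a_t})-f(x_{t,a_t}^\star)$. Multiplying and dividing the indicator by $\Delta_{a_t}\ge\Delta_{\min}$ and reusing the two-piece bound from the previous paragraph gives $\sum_t\ind{a_t\ne a_t^\star}\le \Delta_{\min}^{-1}\cdot\tilde O(\tilde{d}\sqrt{T}+\sqrt{NT})=\tilde O(\tilde{d}\sqrt{T}+\sqrt{NT})$, hence $|S_T(a)-S_T^\star(a)|=\tilde O(\tilde{d}\sqrt{T}+\sqrt{NT})$ for every agent, which is the claimed approximate-$\neql$ guarantee.

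I expect the main difficulty to be bookkeeping rather than conceptual: carrying the failure probabilities $\delta_{t,a}^x,\delta_{t,a}^y$ consistently through the union bound over all agents and all rounds, and tracking where the selected arm's index $a=a_t$ enters the $-a$ estimator. The one place the linear proof did genuine work --- relating $\sum_t\alpha_{t,-a}\norm{x}_{V_{t,-a}^{-1}}$ to $\sum_t\alpha_t\norm{x}_{V_t^{-1}}$ via the determinant ratio $\det(V_t)/\det(V_{t,-a})\le 1+C$ and the elliptical potential lemma --- is here folded into the hypothesis $\sqrt{\sum_t h(x_{t,a_t},\cO_{t,-a})^2}=\tilde O(\tilde{d}\log T)$, so no analogue of that estimate is needed and the genuinely new content is essentially the substitution of abstract confidence widths for the linear ones.
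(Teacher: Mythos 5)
Your proposal is correct and follows essentially the same route as the paper's proof: the same chain $f(x_{t,a_t^\star}^\star)\le \ucb_t(x_{t,a_t^\star})\le\ucb_t(x_{t,a_t})\le\ucb_{t,-a}(x_{t,a_t})$ from \cref{assu:NonNEandregret}, the same regrouping by agent followed by \cref{lem:regretNonSA}, the same Cauchy--Schwarz/Jensen steps giving the $\tilde O(\tilde d\sqrt{T})$ and $\sqrt{NT}$ pieces, and the same $S_T(a)$ sandwiching via $\Delta_{\min}$ for the approximate-\neql{} claim. Your closing observation --- that the determinant-ratio argument of the linear case is absorbed into the hypothesis $\sqrt{\sum_t h(x_{t,a_t},\cO_{t,-a})^2}=\tilde O(\tilde d\log T)$ --- accurately reflects how the paper's proof is structured.
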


\begin{proof}
    Recall $\cA_t$ denotes the set of arms' feature corresponding to the active agents in the round $t$. The regret of \algo{} for $\bm{\sigma} \in \neql(\algo(\kA))$ is given as follows:
    \al{
        \label{eqn:regretNonUB}
        \Regret_T\Lp\algo(\kA), \bm{\sigma}\Rp = \sum_{t=1}^T \Lp f(x_{t,a_t^\star}^\star) - f(x_{t,a_t}^\star)\Rp = \sum_{t=1}^T \Lp \max_{a \in \cA_t} f(x_{t,a}^\star) - f(x_{t,a_t}^\star)\Rp.
    }
    
    Under \cref{assu:NonNEandregret}, if \algo{} selects $a_t \in \cA_t \colon a_t \ne a_t^\star$,
    we have $f(x_{t,a_t^\star}^\star) \le f(x_{t,a_t^\star}) \le \ucb_{t}(x_{t,a_t^\star}) \le \ucb_{t}(x_{t,a_t})$.
    Using $f(x_{t,a_t^\star}^\star) \le \ucb_{t}(x_{t,a_t})$ inequality with \cref{lem:regretNonSA}, we have
    \al{
        \Regret_T\Lp\algo(\kA), \bm{\sigma}\Rp &= \sum_{t=1}^T \Lp f(x_{t,a_t^\star}^\star) - f(x_{t,a_t}^\star)\Rp \nonumber \\
        &\le \sum_{t=1}^T \Lp f(x_{t,a_t^\star}) - f(x_{t,a_t}^\star)\Rp  \hspace{5mm}(\text{agents are over-reporting}) \nonumber \\
        &\le \sum_{t=1}^T \Lp \ucb_t(x_{t,a_t^\star}) - f(x_{t,a_t}^\star)\Rp \hspace{5mm} (\text{first part of \cref{assu:NonNEandregret}}) \nonumber \\
        &\le \sum_{t=1}^T \Lp \ucb_t(x_{t,a_t}) - f(x_{t,a_t}^\star) \Rp \hspace{5mm} (\text{as selected arm is $a_t$}) \nonumber \\
        &\le \sum_{t=1}^T \Lp \ucb_{t,-a}(x_{t,a_t}) - f(x_{t,a_t}^\star) \Rp \hspace{5mm} (\text{second part of \cref{assu:NonNEandregret}}) \nonumber \\
        &= \sum_{t=1}^T \ind{a_t = a} \Lp \ucb_{t,-a}(x_{t,a}) - f(x_{t,a}^\star) \Rp \nonumber \\
        &=  \sum_{a=1}^N \sum_{t\le T, a_t = a} \Lp \ucb_{t,-a}(x_{t,a}) - f(x_{t,a}^\star) \Rp \nonumber \\
        &\le  \sum_{a=1}^N \Lp \sum_{t\le T, a_t = a} 2h(x_{t,a}, \cO_{t,-a}) + 2\sqrt{2R^2S_t(a)\log(1/\delta_{t,a}^y)} \Rp \nonumber \\
        &= \sum_{a=1}^N\sum_{t\le T, a_t = a} 2h(x, \cO_{t,-a}) + \sum_{a=1}^N 2\sqrt{2R^2S_t(a)\log(1/\delta_{t,a}^y)} \nonumber \\
        &= \sum_{t=1}^T 2h(x, \cO_{t,-a}) + \sum_{a=1}^N 2\sqrt{2R^2S_t(a)\log(1/\delta_{t,a}^y)} \nonumber \\
        &\le 2\sqrt{T}\sqrt{\sum_{t=1}^T \Lb h(x, \cO_{t,-a}) \Rb^2} + 2\sum_{a=1}^N \sqrt{2R^2S_t(a)\log(1/\delta_{t,a}^y)} \nonumber \\
        &\le 2\sqrt{T}\sqrt{\sum_{t=1}^T \Lb h(x, \cO_{t,-a}) \Rb^2} +  2\sqrt{2R^2NT\log(1/\delta_{t,a}^y)} \nonumber \\
        \implies \Regret_T\Lp\algo(\kA), \bm{\sigma}\Rp &= \tilde{O} \Lp \tilde{d}\sqrt{T} + \sqrt{NT} \Rp. \label{eqn:regretNonSumUBSA}
    }
    The third-last inequality follows from \cref{lem:regretNonSA}.
    The second-last inequality is due to using Cauchy-Schwarz inequality where last inequality follows from Jensen’s inequality with the fact that $\sum_{a=1}^NS_t(a) \le T$.
    
    We now prove that being truthful is an approximate Nash equilibrium for \algo{}. Recall \cref{eqn:pullsNonLB}, $S_T(a)$ denotes the number of times an agent being selected by \algo{}, which is given as follows:
    \eq{
        \label{eqn:pullsNonLBSB}
        S_T(a) =\sum_{t=1}^T\ind{a_t= a} \ge S^\star_T(a) - \sum_{t=1}^T\ind{a_t \ne a_t^\star}. 
    }

    To get the lower bound $S_T(a)$, we get the upper bound $\sum_{t=1}^T\ind{a_t \ne a_t^\star}$ when any agent can behave strategically. Recall $\Delta_{a_t} = \Lp f(x_{t,a_t^\star}^\star) -   f(x_{t,a_t}^\star)\Rp \le \ucb_{t,-a}(x_{t,a}) - f(x_{t,a}^\star)$ for $a_t \ne a_t^\star$. We multiply and divide $\ind{a_t \ne a_t^\star}$ by $\Delta_{a_t}$ as follows:
    \al{
        \sum_{t=1}^T\ind{a_t \ne a_t^\star} &=\sum_{t=1}^T\ind{a_t \ne a_t^\star}\frac{\Delta_{a_t}}{\Delta_{a_t}} \nonumber \\
        &=\sum_{k=1}^N\sum_{t\le T, a_t = a}\ind{a_t \ne a_t^\star, a_t = a}\frac{\Delta_{a_t}}{\Delta_{a_t}} \nonumber \\
        &\le \sum_{k=1}^N\sum_{t\le T, a_t = a}\ind{a_t \ne a_t^\star, a_t = a}\frac{\ucb_{t,-a}(x_{t,a}) - f(x_{t,a}^\star)}{\Delta_{a_t}} \nonumber \\
        &\le \sum_{k=1}^N\sum_{t\le T, a_t = a}\frac{\ucb_{t,-a} - f(x_{t,a}^\star)}{\Delta_{a_t}} \nonumber \\
        \intertext{Assuming there exists a $\Delta_{\min}$ such that $\Delta_{\min} = \min_{a_t \ne a_t^\star} \Delta_{a_t}$, we get}
        &\le \frac{1}{\Delta_{\min}}\sum_{k=1}^N\sum_{t\le T, a_t = a} \ucb_{t,-a}(x_{t,a}) - f(x_{t,a}^\star) \nonumber \\
        \intertext{Using \cref{eqn:regretNonSumUBSA} with its upper bound, we have}
        \sum_{t=1}^T\ind{a_t \ne a_t^\star} &\le \frac{2}{\Delta_{\min}}\Lp 2\sqrt{T}\sqrt{\sum_{t=1}^T \Lb h(x, \cO_{t,-a}) \Rb^2} +  2\sqrt{2R^2NT\log(1/\delta_{t,a}^y)} \Rp \label{eqn:subOptPullsNonSA} \\
        \implies \sum_{t=1}^T\ind{a_t \ne a_t^\star} &= \tilde{O}\Lp d\sqrt{T} + \sqrt{NT}\Rp \nonumber.
    }

    Using this upper bound in \cref{eqn:pullsNonLBSB}, we get the following bound for any agent $a \in \cA$:
    \eq{
        \label{eqn:neNonLBSA}
        S_T(a) \ge S^\star_T(a) - \tilde{O}\Lp \tilde{d}\sqrt{T} + \sqrt{NT}\Rp.
    }

    Now, we consider the case where an agent $a$ deviates from the truthful strategy. Recall \cref{eqn:pullsNonUB}, the number of times an agent being selected by \algo{} is given as follows:
    \eq{
        \label{eqn:pullsNonUBSA}
        S_T(a) =\sum_{t=1}^T\ind{a_t= a} \le \sum_{t=1}^T\ind{a_t^\star = a} +  \sum_{t=1}^T\ind{a_t \ne a_t^\star}. 
    }

    Using \cref{eqn:subOptPullsNonSA} in \cref{eqn:pullsNonUBSA}, we get
    \eq{
        \label{eqn:neNonUBSA}
        S_T(a) \le S^\star_T(a) + \tilde{O}\Lp \tilde{d}\sqrt{T} + \sqrt{NT}\Rp.
    }
    Combining \cref{eqn:neNonLBSA} and \cref{eqn:neNonUBSA} completes our proof that \algo{} is $\tilde{O}\Lp \tilde{d}\sqrt{T} + \sqrt{NT}\Rp$-\neql.
\end{proof}

    \section{Discussion about Assumptions}
    \label{asec:assumptions}
    %!TEX root =  main.tex

To address the practical validity of our assumptions, we consider the following three cases:

\textbf{Case 1.} All agents report truthfully: When reported features are the same as true features, i.e., $x = x^*$ for all $x \in \mathcal{X}$, $\ucb_t(x)$ is an upper bound of $\theta_*^\top x$ with probability at least $1-\delta$ (or with high probability, Lemma~\ref{lem:linConfidenceBound}). As a result, first part of Assumption~\ref{assu:foenEandregret} holds, which is only needed to prove our Theorem~\ref{thm:regretNE}, and hence the NE and regret bounds of our proposed algorithm, COBRA, are improved by a factor of $\sqrt{N}$ compared to Theorem 5.1 in \cite{buening2024strategic}.

\textbf{Case 2.} One agent can over-report while other agents report truthfully and linear reward function: In Lemma~\ref{lem:linConfidenceBound}, $\alpha_t$ is a non-decreasing function of $t$ that grows logarithmically, while $||x||_{V_t^{-1}}$ converges at a rate of $1/\sqrt{t}$, leading to tighter confidence ellipsoid as $t$ increases. Thus, $\ucb_t(x)$ is smaller than $\ucb_{t,-a}(x)$ for any $x$ due to the use of additional observations from agent $a$. However, when an agent $a$ over-report its features, it leads to biased estimates of $\theta_\star$. Since the agent over-reports, $\hat\theta_t$ becomes a downward-biased estimator of $\theta_\star$ (Example 4.7 in Wooldridge, in which over-reporting features can be treated as under-reporting rewards\footnote{Wooldridge, J. M. (2010). Econometric analysis of cross section and panel data. MIT press.}). As a downward biased estimator leads to under-estimation with the fact that $\alpha_t||x||_{V_t^{-1}}$ is smaller than $\alpha_{t, -a}||x||_{V_{t,-a}^{-1}}$, $\ucb_t(x)$ is smaller than $\ucb_{t,-a}(x)$ with high probability. If an agent keeps over-reporting, our proposed method, LOOM, will detect this behavior and remove the agent from the active selection pool. Consequently, Assumption~\ref{assu:foenEandregret} first part will hold as the remaining agents are truthful.

\textbf{Case 3.} Multiple agents can over-report: In this case, all estimators used by COBRA become biased, making it impossible to derive theoretical guarantees without additional constraints. Our Theorem~\ref{thm:regretAllNE} holds as long as Assumption~\ref{assu:foenEandregret} is satisfied. Notably, we impose no restrictions on how agents report their features, aside from no collusion assumption, which is a common assumption in VCG-type mechanisms \citep{vickrey1961counterspeculation, clarke1971multipart, groves1973incentives}.

\para{Comparison from existing literature.} \cite{buening2024strategic} use agent-specific estimators that detect over-reporting in linear contextual bandits. In contrast, our method takes inspiration from the VCG mechanism \citep{vickrey1961counterspeculation, clarke1971multipart, groves1973incentives} and uses the observations associated with other agents to identify the over-reporting of an agent. This key difference leads us to use $\lcb_{t,-a}(x_{t, a})$ (pessimistic reward estimate using observations of all agents except agent $a$) while \cite{buening2024strategic} use $\lcb_{t,a}(x_{t, a})$ (pessimistic reward estimate only using observation associated with agent $a$) for detection.

Theorem 5.2 of \cite{buening2024strategic} holds under their Assumption 2, which has the following consequence:$$\theta_* ^\top x_{t,a^*}^* \le \ucb_{t, a_t^*}(x_{t,a^*})\le \ucb_{t,a_t}(x_{t, a_t})$$ (see proof of Lemma E.5 on Page 27 in \cite{buening2024strategic}). In contrast, our assumptions imply the following: $$\theta_* ^\top x_{t,a^*}^* \le \theta_* ^\top x_{t,a^*} \le \ucb_t(x_{t,a_t^*})\le \ucb_t(x_{t,a_t})\le \ucb_{t,-a}(x_{t,a_t})$$
which gives:$$\theta_* ^\top x_{t,a^*}^* \le \ucb_t(x_{t,a_t^*})\le \ucb_{t,-a_t}(x_{t, a_t}).$$

Assumption 2 of \citep{buening2024strategic} and our Assumption~\ref{assu:foenEandregret} share a key similarity: they define the conditions under which some theoretical results hold (their Theorem 5.2 and ours Theorem~\ref{thm:regretAllNE}). These assumptions also lead to similar consequences, i.e., the maximum expected reward in any round is upper-bounded by the optimistic reward estimate of the selected arm (or agent) computed using the same agent(s) (i.e., $\ucb_{t, a_t}(x_{t,a})$ and $\ucb_{t,-a_t}(x_{t,a})$) as used in the mechanism for identifying over-reporting agents. We emphasize that our Assumption~\ref{assu:foenEandregret} is not directly comparable to that of \cite{buening2024strategic}, as they provide conditions for the theoretical guarantees of algorithms based on different underlying mechanisms.

We would like to highlight that detecting over-reporting using only an agent's own observations may be ineffective in practice, particularly when the true parameter $\theta_{*}$ is unknown due to the absence of any external baseline for comparison. In contrast, our VCG-inspired approach leverages observations from other agents to identify over-reporting, making it more practical, as the targeted agent cannot directly influence the detection mechanism.

Furthermore, \emph{we have extended our LOOM mechanism to non-linear utility functions and provided a complete analysis}, which is a new contribution within this setting.

    \section{Additional Experiments}
    \label{asec:more_experiments}
    %!TEX root =  main.tex

We also compare the performance of our proposed algorithm, \algo{}, for contextual bandit problems with non-linear reward functions.
For this experiment, we adapt problem instances with non-linear reward functions from those used for linear functions in \cref{sec:experiments}. We apply a polynomial kernel of degree $2$ to transform the item-agent feature vectors to introduce non-linearity.
The constant terms (i.e., the $1$'s) resulting from this transformation are removed. As an example, a sample $4$-d feature vector $x = (x_1, x_2, x_3,x_4)$ is transformed into a $14$-d feature vector: $x^\prime = (x_1, x_2, x_3,x_4, x_1x_2, x_1x_3, x_1x_4, x_2x_3, x_2x_4, x_3x_4, x_1x_2x_3, x_1x_2x_4, x_1x_3x_4, x_2x_3x_4)$.
We also remove $1$'s, which appear in the transformed samples.
As expected, our algorithms \algo{} based on UCB and TS-based contextual nonlinear bandit algorithms (prefixed with `n') outperform all the baselines (adapted to non-linear setting, also prefixed with `n') as shown in \cref{fig:compare_nonlinear_cumm_regret}.
These results are observed across various problem instances, where only the reward function is varied while all other parameters remain unchanged, except for the number of rounds, which is set to $T=2000$.
We further observe that \algo{} with TS outperforms its UCB-based counterpart.

\begin{figure*}[!ht]
	\vspace{-5mm}
    \centering
	\subfloat[Truthful setting]{\label{fig:nonlinear_truthful}
		\includegraphics[width=0.24\linewidth]{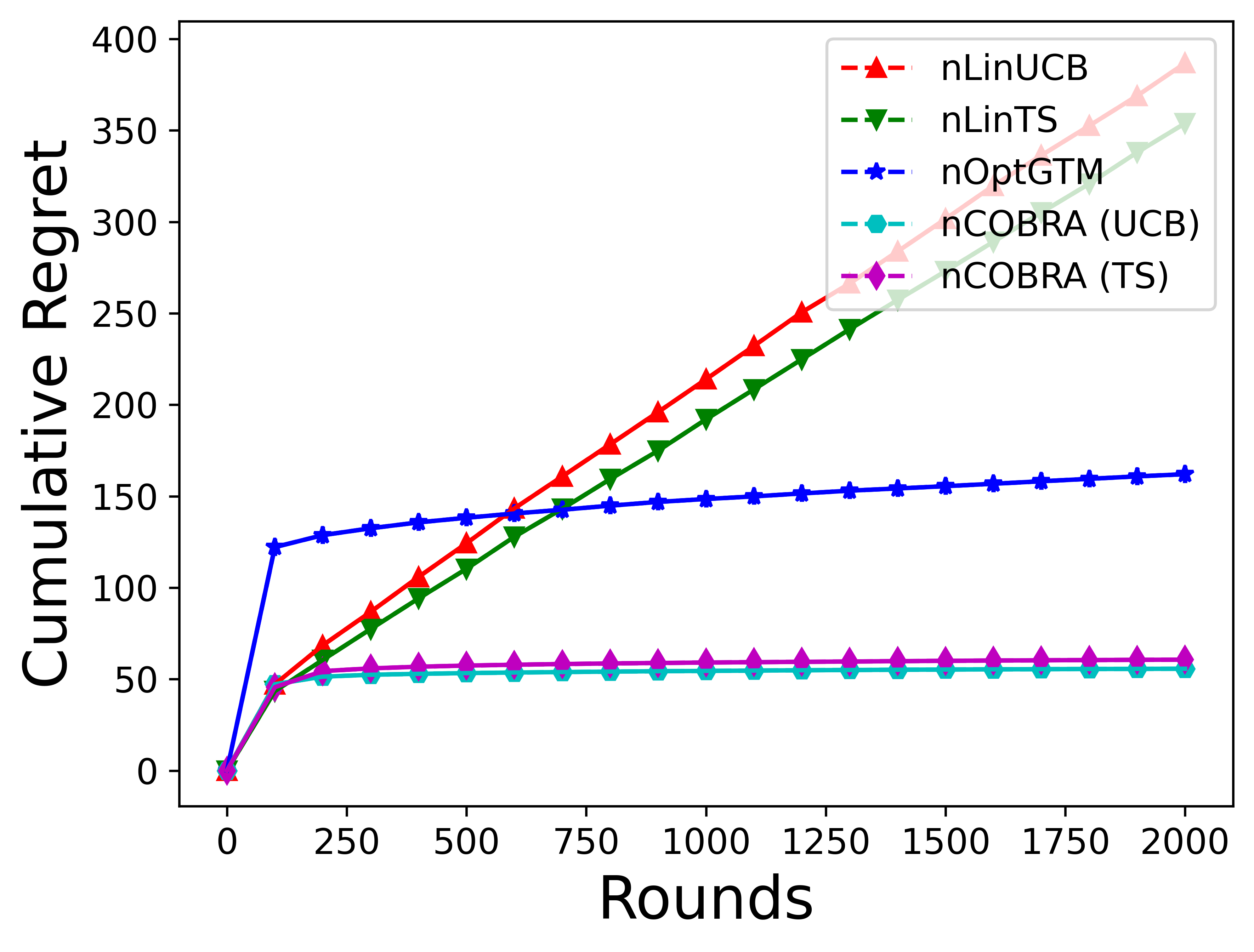}}
	\subfloat[Problem Instance I]{\label{fig:nonlinear_prob1}
		\includegraphics[width=0.24\linewidth]{results/linear_2000_5_1_5_0.1_0.1_1_nonlinear_compare5_0.01_0.05_0.1_50_algo_regret.png}}
	\subfloat[Problem Instance II]{\label{fig:nonlinear_prob2}
		\includegraphics[width=0.24\linewidth]{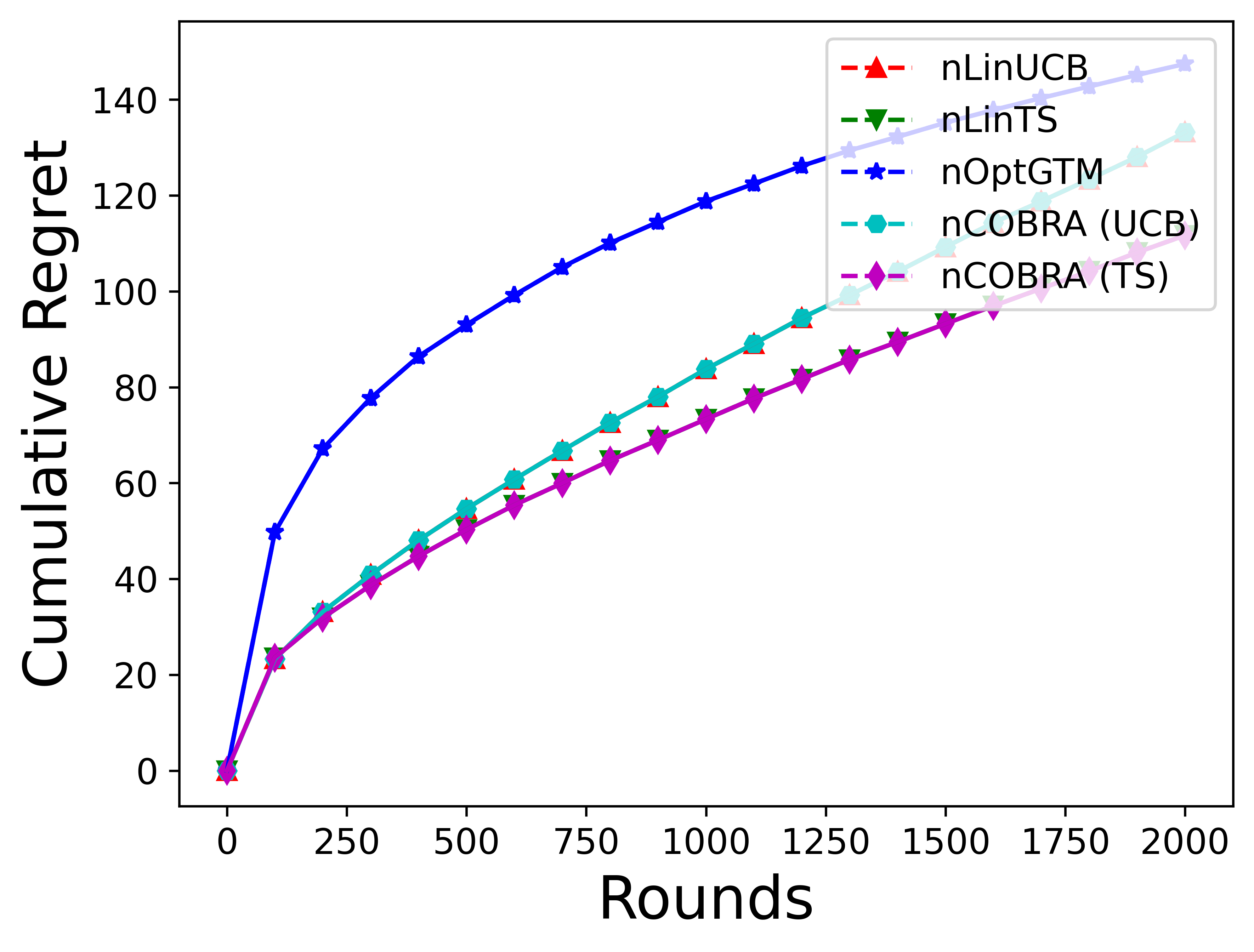}}
	\subfloat[Problem Instance III]{\label{fig:nonlinear_prob3}
		\includegraphics[width=0.24\linewidth]{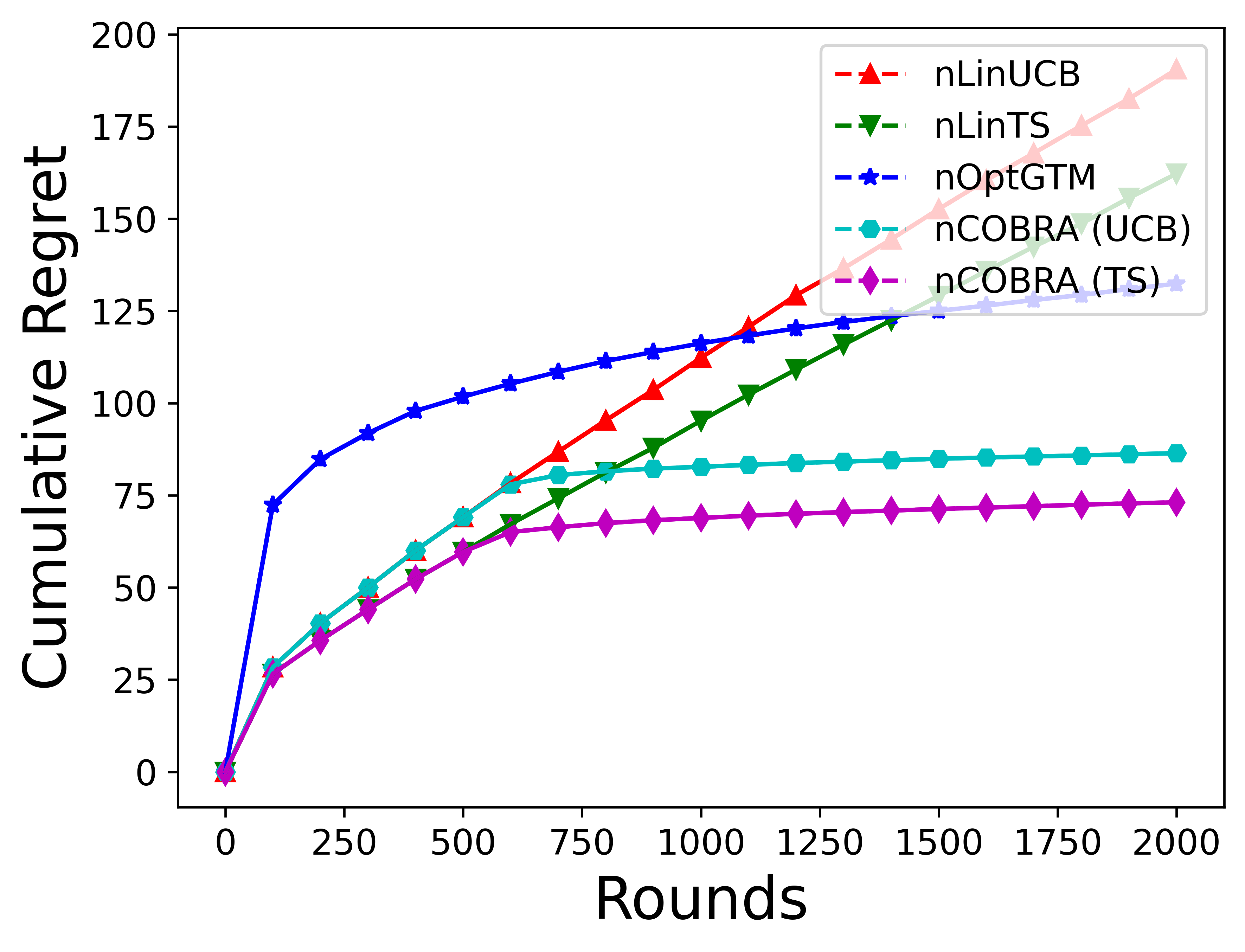}}
    \caption{
        Comparing the cumulative regret of \algo{} with different baselines for problem instances with non-linear reward functions.
	}
	\label{fig:compare_nonlinear_cumm_regret}
    \vspace{-2mm}
\end{figure*}

\para{Computational resources.} All the experiments are run on a server with AMD EPYC 7543 32-Core Processor, 256GB RAM, and 8 GeForce RTX 3080. \\

    \hrule height 0.5mm

\end{document}